\newtheorem{theorem}{Theorem}[section]
\newtheorem{lemma}[theorem]{Lemma}
\theoremstyle{definition}
\theoremstyle{remark}
\newcommand{\E}{\mathbb{E}}
\newcommand{\statespace}{\mathcal{S}}
\newcommand{\actionspace}{\mathcal{A}}
\newcommand{\optionspace}{\mathbf{O}}
\newcommand{\policyspace}{\mathbf{F}}
\newcommand{\numoptions}{k}
\newcommand{\optionset}{\calO}
\newcommand{\policyset}{\calF}
\newcommand{\optionsetspace}{\optionspace^\numoptions}
\newcommand{\policysetspace}{\policyspace^{\cardN}}
\DeclareMathOperator*{\argmax}{\arg\!\max}
\def\vtheta{{\bm{\theta}}}
\def\vv{{\mathbf{v}}}
\def\vd{{\mathbf{d}}}
\def\vr{{\mathbf{r}}}
\def\mP{{\mathbf{P}}}
\def\mI{{\mathbf{I}}}
\def\valpha{{\bm{\alpha}}}
\newcommand{\bbE}{\mathbb{E}}
\newcommand{\bbR}{\mathbb{R}}
\newcommand{\bbI}{\mathbb{I}}
\newcommand{\calS}{\mathcal{S}}
\newcommand{\calA}{\mathcal{A}}
\newcommand{\calR}{\mathcal{R}}
\newcommand{\calO}{\mathcal{O}}
\newcommand{\calI}{\mathcal{I}}
\newcommand{\calF}{\mathcal{F}}
\newcommand{\calH}{\mathcal{H}}
\newcommand{\calN}{\mathcal{N}}
\newcommand{\calP}{\mathcal{P}}
\newcommand{\calX}{\mathcal{X}}
\newcommand{\bfI}{\mathbf{I}}
\newcommand{\abs}[1]{\left\lvert#1\right\rvert}
\newcommand{\cardS}{\abs{\calS}}
\newcommand{\cardA}{\abs{\calA}}
\newcommand{\cardH}{\abs{\calH}}
\newcommand{\cardN}{\abs{\calN}}
\title{Toward Discovering Options that Achieve Faster Planning}
\author{%
  Yi Wan\\
  University of Alberta\\
  Edmonton, Canada\\
  \texttt{\{wan6\}@ualberta.ca}
  \And
  Richard S. Sutton\\
  University of Alberta, DeepMind\\
  Edmonton, Canada\\
  \texttt{\{rsutton\}@ualberta.ca}
}
\begin{document}

\maketitle

\begin{abstract}
We propose a new objective for option discovery that emphasizes the computational advantage of using options in planning. In a sequential machine, the speed of planning is proportional to the number of elementary operations used to achieve a good policy. For episodic tasks, the number of elementary operations depends on the number of options composed by the policy in an episode and the number of options being considered at each decision point. To reduce the amount of computation in planning, for a given set of episodic tasks and a given number of options, our objective prefers options with which it is possible to achieve a high return by composing few options,
and also prefers a smaller set of options to choose from at each decision point.
We develop an algorithm that optimizes the proposed objective. In a variant of the classic four-room domain, we show that 1) a higher objective value is typically associated with fewer number of elementary planning operations used by the option-value iteration algorithm to obtain a near-optimal value function, 2) our algorithm achieves an objective value that matches it achieved by two human-designed options 3) the amount of computation used by option-value iteration with options discovered by our algorithm matches it with the human-designed options, 4) the options produced by our algorithm also make intuitive sense--they seem to move to and terminate at the entrances of rooms.
\end{abstract}

\section{Introduction}
\label{sec: Introduction}
The options framework (Sutton, Precup, Singh 1999) is a way to achieve temporal abstraction, which is perceived as a cornerstone of artificial intelligence. The options are extended courses of actions, defining different ways of behaving. If the agent has a set of options, it could learn a model that predicts the outcomes of executing the options. One of the most important ways options can be useful is that planning with option models could be much faster than planning with action models because options specify jumpy moves. 

A natural question to ask is where options come from, which is a challenging research frontier (Section 17.2, Sutton \& Barto 2018). The first and maybe the most important step toward the option discovery problem is to specify what options should be discovered, which involves defining an objective that can be used to rank options. Existing works provide various objectives. For example, some works argue that good options should lead to subgoal states in the environment and the subgoal states could be, for example, bottlenecks (McGovern \& Barto 2001, Menache et al. 2002, van Dijk \& Polani 2011, Bacon 2013) or salient events (Singh, Barto, Chentanez 2010) in the environment. Some works propose that good options are those such that, when choosing from them, the agent can achieve high performance in a given task(s) (Bacon, Harb, Precup 2017, Khetarpal et al. 2020, Veeriah et al. 2021). Others argue that a good set of options should, for example, be diversified and parts of solutions that achieve a high expected return (eg., Eysenbach et al.\ 2018), accelerate learning in future tasks (Brunskill \& Li 2014), be represented efficiently (Solway et al.\ 2014, Harutyunyan et al.\ 2019), allow representing the value function easier (Konidaris \& Barto 2009), 
improve exploration (Machado et al.\ 2017a;b, Jinnai et al.\ 2019), etc. Among all of the existing objectives, one (Jinnai et al.\ 2019) emphasizes the importance of the role of options in planning. They proposed an algorithm that searches for the smallest set of "point options" such that planning converges in less than a given number of iterations. A point option is a special kind of option -- it can only be initialized at a single state and be terminated at a (possibly different) state. However, it is not appealing to restrict our focus on point options because more general options could achieve much faster planning.



We propose a simple objective that also emphasizes the importance of options in planning and the discovered options are general. We start by considering that all options can be initiated at any state. In this case, our objective prefers options with which fewer options are required to compose good solutions to multiple given tasks. The multi-task setting is critical to our objective. In fact, our objective reduces to the objective proposed by Harb et al.(2018) if there is only one task. We argue in \cref{sec: An Option Discovery Objective for Fast Planning} that, with only one task, the best options degenerate to the entire solution to the given task and are typically not useful for other tasks that we may want to solve with the options. 

For each state, some options may be interesting while others are not. It is appealing for planning to only consider interesting options to save computation. For example, if the agent is full then it is not interesting for the agent to consider how to find food. In order to learn to reduce the number of options considered at each state, we generalize the definition of options by replacing the initiation set of an option with an \emph{interest function}, which maintains the probability of initiating the option at each state. The option's initiation set can then be obtained by sampling according to the interest function and becomes stochastic. Our complete objective encourages options' interests to be smaller. Therefore uninteresting options are less likely to be sampled and considered in planning, resulting in less computation.

To optimize the proposed objective, we derives the gradient of the objective and propose an algorithm that follows an estimate of the gradient. If there is only one task and all options can be initiated everywhere, our algorithm reduces to Harb et al'.s algorithm. We tested our algorithm in the four-room domain, which is typically used to evaluate option discovery algorithms (e.g. Bacon et al.\ 2017, Harb et al.\ 2018). Empirical results show that the algorithm's discovered options are comparable with two human-designed options, which move to entrances of the four rooms, in terms of the number of elementary operations used to achieve a near-optimal value function used by option-value iteration.

\section{The Options Framework with Multiple Tasks} \label{sec: problem setting}

Our new option discovery objective involves solving a finite set of episodic tasks $\calN$, all of which share the same state space $\calS$ and action space $\calA$. The state and action spaces are finite. For each task $n \in \calN$, there is a corresponding set of terminal states $\perp^n$, which is a set containing one or multiple states in $\calS$. The transition dynamics are shared across different tasks, except that starting from a terminal state of a task the agent stays at the state regardless of the action it takes. The reward settings for different tasks are typically different. Let $p^n: \calS \times \calR \times \calS \times \calA \to [0, 1]$ be the transition function of task $n$ with $p^n(s', r \mid s, a) \doteq$ the probability of resulting in state $s'$ and reward $r$ given state-action pair $(s, a)$ and task $n$. 

The agent's interaction with the environment produces a sequence of episodes. The first episode starts from state $S_0$ sampled from an initial state distribution $d_0$. At the same time, a task $N_0$ is chosen randomly from $\calN$ and remains unchanged within the episode. The agent observes $S_0$ and $N_0$ and takes an action $A_0$. The environment then emits task $N_0$'s reward $R_1$ and the next state $S_1$ according to the transition function $p^{N_0}$. 
Such an agent-environment interaction keeps going until the end of the episode when the agent reaches a terminal state in $\perp^{N_0}$. 
Let the time step at which the first episode terminates $T$. A new episode starts at $T+1$, with a new initial state $S_{T+1}$ sampled from $d_0$ and a new task $N_{T+1}$ sampled from $\calN$. The agent may choose its actions following a stationary policy $\pi: \calA \times \calS \to [0, 1]$ with $\pi:(a \mid s) \doteq$ the probability of choosing action $a$ at state $s$. Denote the set of stationary policies $\Pi$.






Given a task $n$, assume that every stationary policy reaches a terminal state with positive probability in at most $\abs{\calS \backslash \perp^n}$ steps, regardless of the initial state. Under this assumption, the value function of a policy $\pi \in \Pi$ in task $n \in \calN$, $v_\pi^n(s)$, can defined: $v_\pi^n(s) \doteq \E[R_{1} + \dots + \gamma^{T - 1} R_T \mid S_0 = s, N_0 = n, A_{0:T-1} \sim \pi]$, where $T$ is a random variable denoting the time step at which the episode terminates and $\gamma \in [0, 1]$ is the discount factor. The optimal value function of task $n$ is defined to be: $v_*^n(s) \doteq \max_{\pi \in \Pi} v_\pi^n(s)$. 
Here the $\max$ always exists. 
Standard reinforcement learning algorithms like Q-learning can be applied to obtain optimal values for these tasks (Section 5.6 Bertsekas \& Tsitsiklis 1996). 

The agent may maintain a set of options, which are behaviors that typically take more than one-step to finish (Sutton et al., 1999). An option $o$ is a tuple $(\calI_o, \pi_o, \beta_o)$, where $\calI_o \subseteq \calS$ is the option's initiation set, $\pi_o \in \Pi$ is the option' policy, and $\beta_o \in \Gamma$, where $\Gamma: \{f \mid f: \calS \to [0, 1]\}$, is the option's termination function. Denote $\optionspace$ as the space of all possible options. That is, $\optionspace \doteq \Gamma \times \Pi \times \Gamma$.

Suppose the agent has a set of $k$ adjustable options, which can be learned from data, in addition to $\cardA$ primitive actions (non-adjustable options), making it a total of $\cardA + k$ options. Denote $\optionset$ as the entire set of options the agent has.
Let $\calH \doteq \{1, 2, \dots, k + \cardA \}$ denote the set of indices of options \footnote{Throughout the paper, all sets are indexed and given a set $\calX$, we use the notation $x_i$ to denote the $i$-th element of $\calX$.} and $\calH^{adj} \doteq \{1, 2, \dots, k\}$ denote the set of indices of adjustable options. Therefore options with indices $k+1, \dots, k + \cardA$ are the primitive actions.
For each task $n$, we define a \emph{meta-preference function} $f^n \in \calS \times \calH \to [0, \infty)$, representing the agent's willing to choose each option at every state for task $n$. Let $\calF \doteq \{f^n : n \in \calN \}$ be the set of mete-preference functions. 
Note that a meta-preference function chooses from indices of options rather than options themselves because the agent's options can change over time and while the indices don't. 

Respecting the fact that the agent chooses from option indices, we make the following definitions for precise presentation.
For each $s$, define the set of indices whose associated options can be initiated at $s$: $\Omega(s) \doteq \{o \in \optionset : s \in \calI_o \}$. Define a function $\pi_\optionset: \calA \times \calS \times \calH \to [0, 1]$ with $\pi_\optionset(a \mid s, h) \doteq \pi_{o_h} (a \mid s)$ $\forall a \in \calA, s \in \calS, h \in \calH$,
and a function $\beta_\optionset: \calS \times \calH \to [0, 1]$ with $\beta_\optionset(s, h) \doteq \beta_{o_h}(s)$, $\forall s \in \calS, h \in \calH$.
Note that the termination probabilities for non-adjustable options (primitive actions) are $1$. That is, $\beta_\optionset (s, h) = 1, \forall s \in \calS, \forall h \in \{k +1, \cdots, k + \cardA \}$. A primitive action's policy chooses the action deterministically. That is, $\pi_\optionset (a \mid s, h) = \bfI(a = a_h), \forall s \in \calS, h \in \{k +1, \cdots, k + \cardA \}, a \in \calA$. 


In order to chose actions, the agent could choose to execute its options. Such a way of behaving is called \emph{call-and-return}. Specifically, at the first time step of each episode, the agent observes a task $N_0$ randomly chosen from $\calN$ and a state $S_0$ randomly chosen from $d_0$, it then chooses an option index $H_0$ with probability 
\begin{align}
    \mu^{N_0}_{\Omega(S_0)}(H_0 \mid S_0) \doteq \frac{f^{N_0}(S_0, H_0)}{\sum_{h \in \Omega(S_0)} {f^{N_0}(S_0, h)}}, \label{eq: mu omega}
\end{align}
and takes action $A_0$ with probability $\pi(A_0 \mid S_0, H_0)$. We call $\mu^{N_0}_{\Omega(S_0)}$ a meta-policy because it chooses from option indices rather than primitive actions. Having observed the next state $S_1$ and reward $S_1$, the agent either keeps following $H_0$ with probability $1 - \beta(S_1, H_0)$ or terminates $H_0$ with probability $\beta(S_1, H_0)$.  The task remains unchanged through the episode $N_0 = N_1 = \ldots$ and the above process keeps going until the episode terminates. For simplicity, let $Z_{t+1}$ denote the termination signal of the episode. That is, $Z_{t+1} = 0$ if the current episode continues ($S_{t+1} \not \in \perp^{N_t}$) and $Z_{t+1} = 1$ if the the current episode is terminates ($S_{t+1} \in \perp^{N_t}$). 

Define $\bar q^n_{\optionset, \policyset}(s, h, a)$ to be the expected cumulative reward with a set of options $\optionset$ and a set of preferences $\policyset$ if the agent starts from state $s$, chooses option $o_h$, and action $a$, and behaves in the call-and-return fashion. Define option-value function $\bar q^n_{\optionset, \policyset}(s, h) \doteq \sum_a \pi_\optionset(a \mid s, h) \bar q^n_{\optionset, \policyset}(s, h, a)$, and value function $\bar v^n_{\optionset, \policyset}(s) \doteq \sum_h \mu^n_{\Omega(s)}(h \mid s) \bar q^n_{\optionset, \policyset}(s, h)$.
It can be seen that the best achievable value function when choosing and following options is the optimal value function. That is, $\max_{\optionset \in \optionsetspace, \policyset \in \policysetspace} \bar v^n_{\optionset, \policyset}(s) = \bar v_*^n(s), \forall n \in \calN, s \in \calS.$
This equation holds because any pair $(\optionset, \policyset)$ defines a non-stationary flat policy (policy over primitive actions). We know for any non-stationary flat policy $\pi$, $\bar v_\pi^n(s) \leq \bar v_*^n(s), \forall n, s$ (Puterman 1994). Also, by setting $\optionset$ and $\policyset$ appropriately, all policies that only choose from primitive actions can be obtained and therefore an optimal policy, thus $\max_{\optionset, \policyset} \bar v_{\optionset, \policyset}^n(s) = \bar v_*^n(s), \forall n \in \calN, s \in \statespace$.

\section{An Option Discovery Objective for Faster Planning}\label{sec: An Option Discovery Objective for Fast Planning}

In this section, we formally define our new objective and explain why it enables faster planning.

The speed of planning is proportional to the number of elementary operations in a sequential machine. To understand how the number of elementary operations used in planning is affected by a set of options, consider applying value iteration, which is a classic iterative planning algorithm, to a set of options. The complexity of the value iteration algorithm for computing a near-optimal policy is the number of operations per-iteration times the number of iterations required to achieve a near-optimal value function, from which a near-optimal policy can be derived. The algorithm requires a model as input. For each iteration, the algorithm queries the model, for each state-option pair, the probability distribution over next states. Therefore the per-iteration complexity is $\sum_{s \in \statespace} \Omega(s) \times \cardS$. This shows that the per-iteration complexity is proportional to the average number of options that can be initiated at each state. 

A direct way to estimate the number of required iterations to achieve a near-optimal value function is to apply value iteration directly. This would be just fine if the set of options are fixed and our only goal is to evaluate these options. However, if the set of options are continually changing, which is unavoidable for any algorithm discovering options, estimating the iterations in this way would have several drawbacks. First, this approach is computationally expensive because whenever an option changes, its model needs to be re-estimated and value iteration needs to be re-applied. Second, it is unclear how to improve the options to reduce the number of iterations. 


We consider estimating the other quantity, the expected number of options being executed by a good policy to reach the terminal state. This quantity can be estimated by a model-free learning algorithm so that neither model learning nor planning is involved. Furthermore, as we will show shortly, it is possible to derive an algorithm that adjusts options to reduce this quantity. 

This quantity is also closely related to the number planning iterations. To understand their relation, consider the MRP shown in \cref{fig: objective vs planning iterations}. Value iteration would need $3$ iterations to find the policy shown in the figure (with all estimated values pessimistically initialized). The expected number of options executed by the policy to reach the terminal state is $2.5$. The difference comes from the stochasticity in option transitions -- without stochasticity, it is clear that the number of iterations used by value iteration to find a policy is exactly the number of options executed by the policy to reach the terminal state.


\begin{wrapfigure}{r}{0.28\textwidth}
\vspace{-1cm}
\centering
\includegraphics[width=0.28\textwidth]{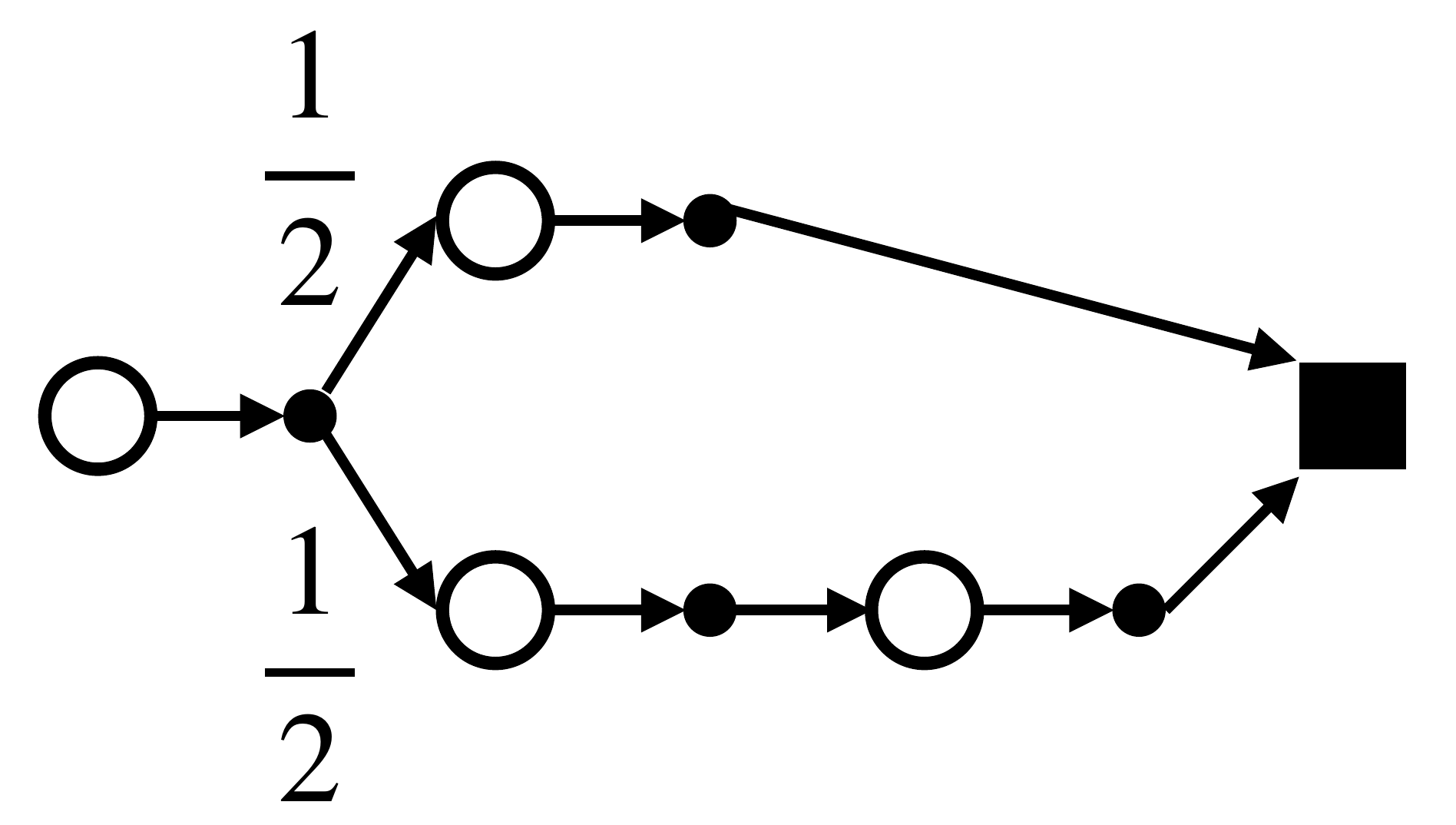}
\caption{Example illustrating of the relation between the number of planning iterations and the number of options being executed by a good policy. The empty circles are normal states. The solid square is a terminal state. The solid circles are options. For each state there are multiple options can be chosen (not shown in the figure) and the one shown in the figure is the best one. From the left state, taking the option shown in the figure moves to the two states in the middle with equal probabilities.
}
\label{fig: objective vs planning iterations}
\vspace{-0.5cm}
\end{wrapfigure}

We are now ready to present our new objective, which we call the \emph{fast planning option discovery objective}. Intuitively, the new objective prefers a set of options such that 1) there exists a policy choosing from the options and achieving a high expected return and 2) few options are considered at each decision point when following the policy. Note that the second point combines the need of having a small average number of options considered at each state and the need of having a small number of options chosen by a policy to reach the terminal state.


Define $\tilde q^n_{\optionset, \policyset}(s, h, a) \doteq - \bbE[|\Omega(S_0)| + |\Omega(S_{t_1})| + \ldots + |\Omega(S_{t_{N-1}})| | S_0 = s, H_0 = h, A_0 = a]$ as the negative expected cumulative number of options that the agent needs to consider before reaching the terminal state, if the agent starts from state $s$, chooses option $o_h$, and action $a$, and follows policy $\mu^n_{\Omega(s)}$. Here $t_i$ denotes the time step at which the $i-1$th option terminates and $t_N$ is the time step at which the terminal state is reached. Let $\tilde q^n_{\optionset, \policyset}(s, h) \doteq \sum_a \pi_\optionset(a \mid s, h) \tilde q^n_{\optionset, \policyset}(s, h, a)$, and $\tilde v^n_{\optionset, \policyset}(s) \doteq \sum_h \mu^n_{\Omega(s)}(h \mid s) \tilde q^i_{\optionset, \policyset}(s, h) - |\Omega(s)|$. 



We propose to maximize the following objective w.r.t. $\optionset, \policyset$:
\begin{align}\label{eq: unconstrained problem}
    J(\optionset, \policyset, c) \doteq \sum_n \sum_s d_0(s) \bar v_{\optionset, \policyset}^{n}(s) + c\sum_n \sum_s d_0(s) \tilde v_{\optionset, \policyset}^{n}(s),
\end{align}
where $c > 0$ is a problem parameter. The objective is a weighted sum of two terms. The first term is the expected values weighted over initial state distribution summed over all tasks. The second term is the total number of options that the agent chooses from within an episode when following meta-policies $\mu^n_{\Omega(s)},\ \forall n, s$, again starting from states sampled from $d_0$, summed over all tasks. The problem parameter $c$ specifies the agent's preferences of the two terms. 

\textbf{Remark:} 
If there is only one task and all options can be initiated at all states, our objective $J$ reduces to the objective proposed by Harb et al. (2018). However, with only one task, the best strategy is to learn an option that solves the entire task and to always choose that option so that there would be only one option executed by the policy and the second term of $J$ is minimized. On the contrary, if there are multiple tasks and the number of tasks is more than the number of options, the agent can not assign for each task an option, and learned options would better be an \emph{overlapped part} of the solutions to different tasks. The next example illustrates this point.

\begin{figure}
\begin{subfigure}{0.25\textwidth}
    \centering
    \includegraphics[width=\textwidth]{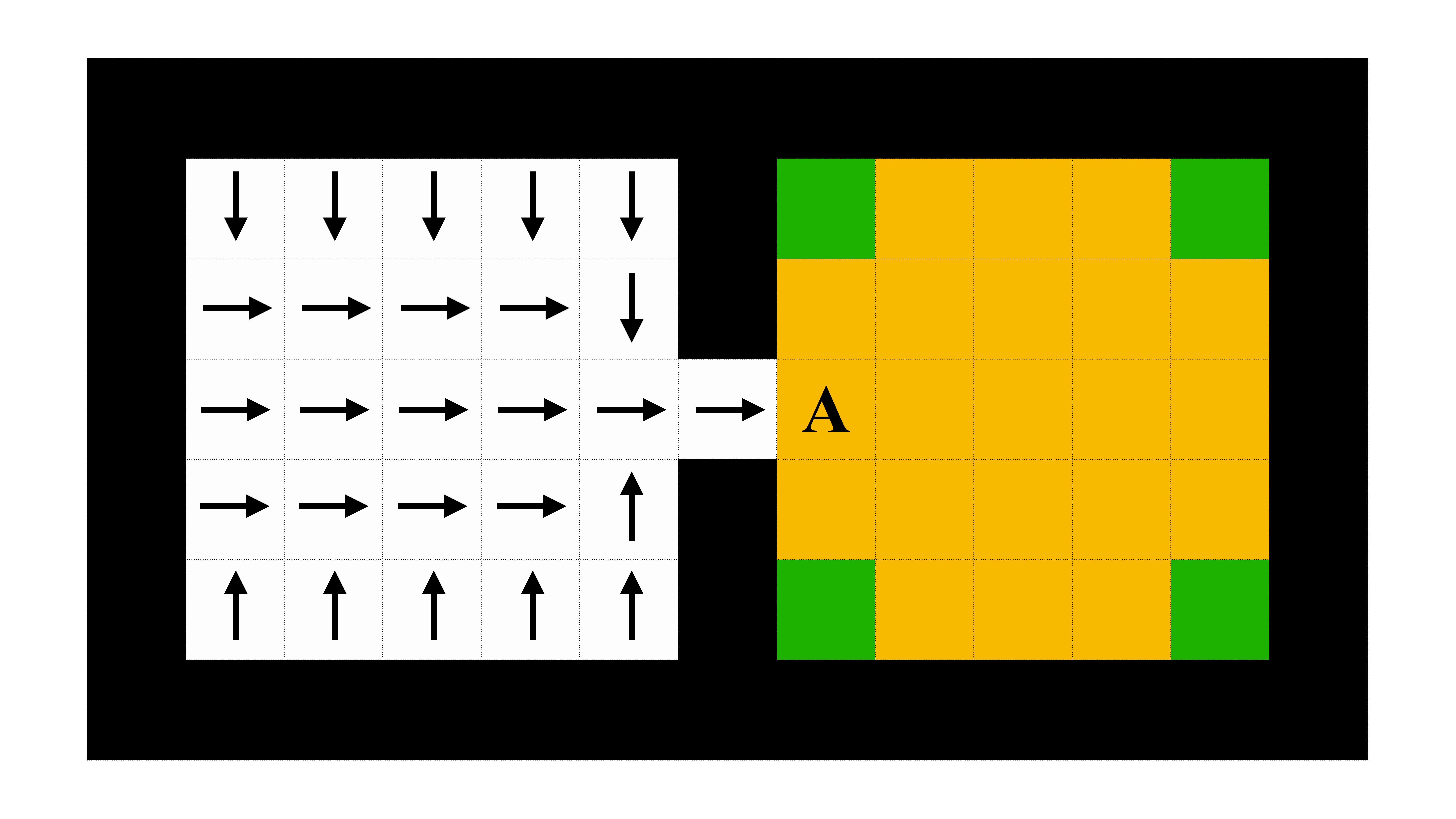}
    \caption{Two-room domain}
    \label{fig: metric illustration}
\end{subfigure}%
\begin{subfigure}{0.25\textwidth}
    \centering
    \includegraphics[width=\textwidth]{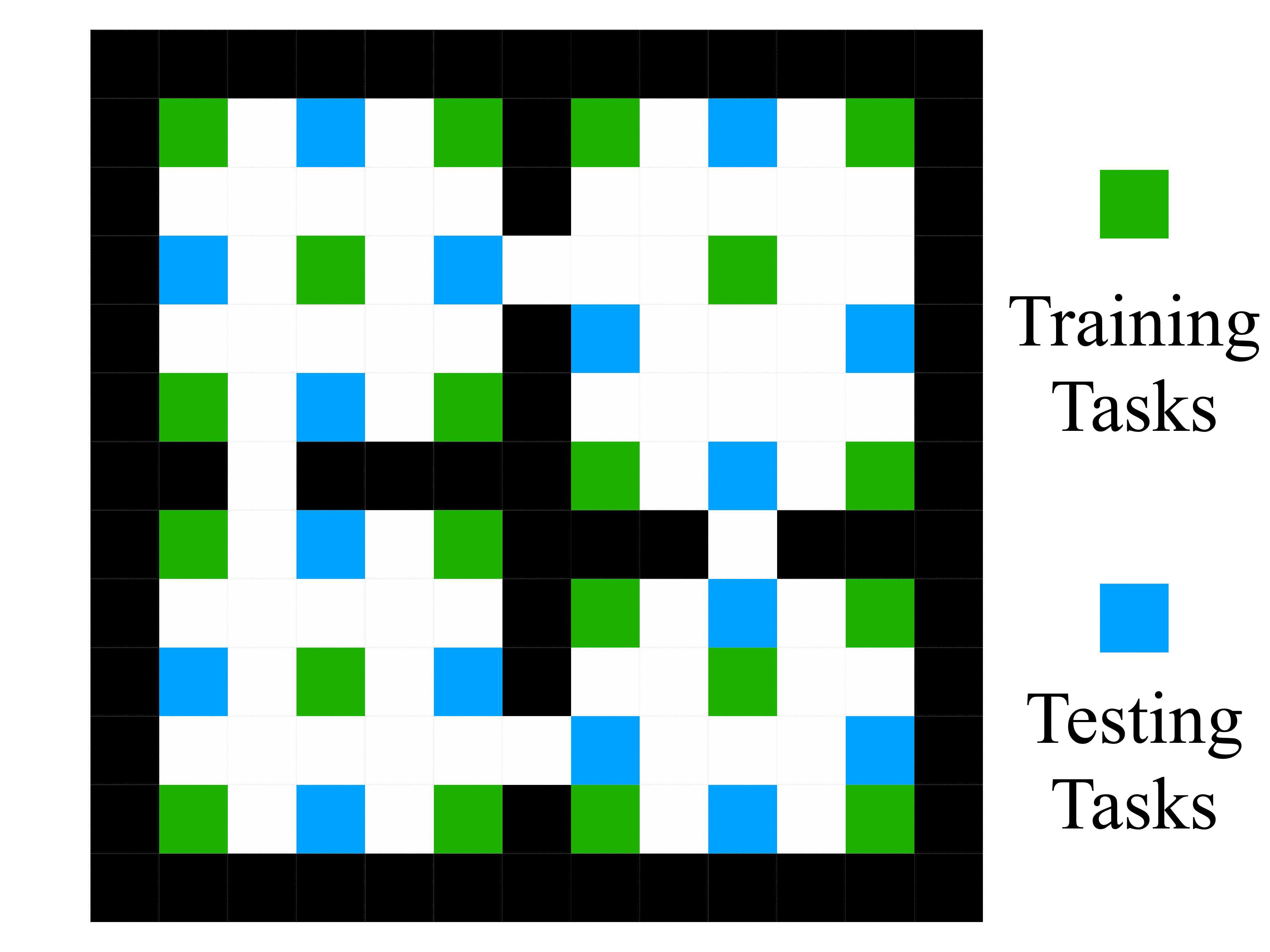}
    \caption{Four-room domain}
    \label{fig: four room domain}
\end{subfigure}%
\begin{subfigure}{0.25\textwidth}
    \centering
    \includegraphics[width=\textwidth]{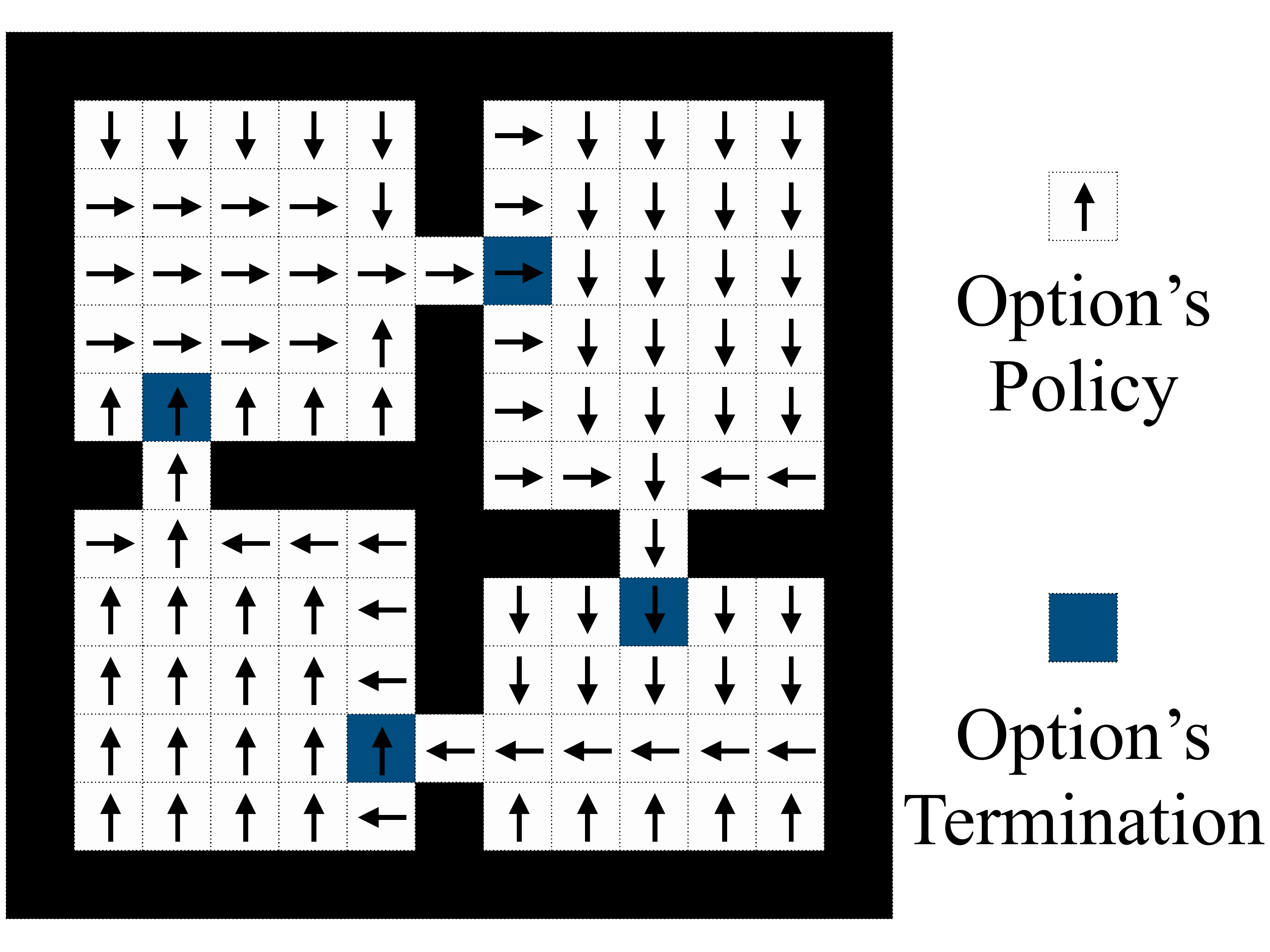}
    \caption{Hallway Option 1}
    \label{fig: Hallway Option 1}
\end{subfigure}%
\begin{subfigure}{0.25\textwidth}
    \centering
    \includegraphics[width=\textwidth]{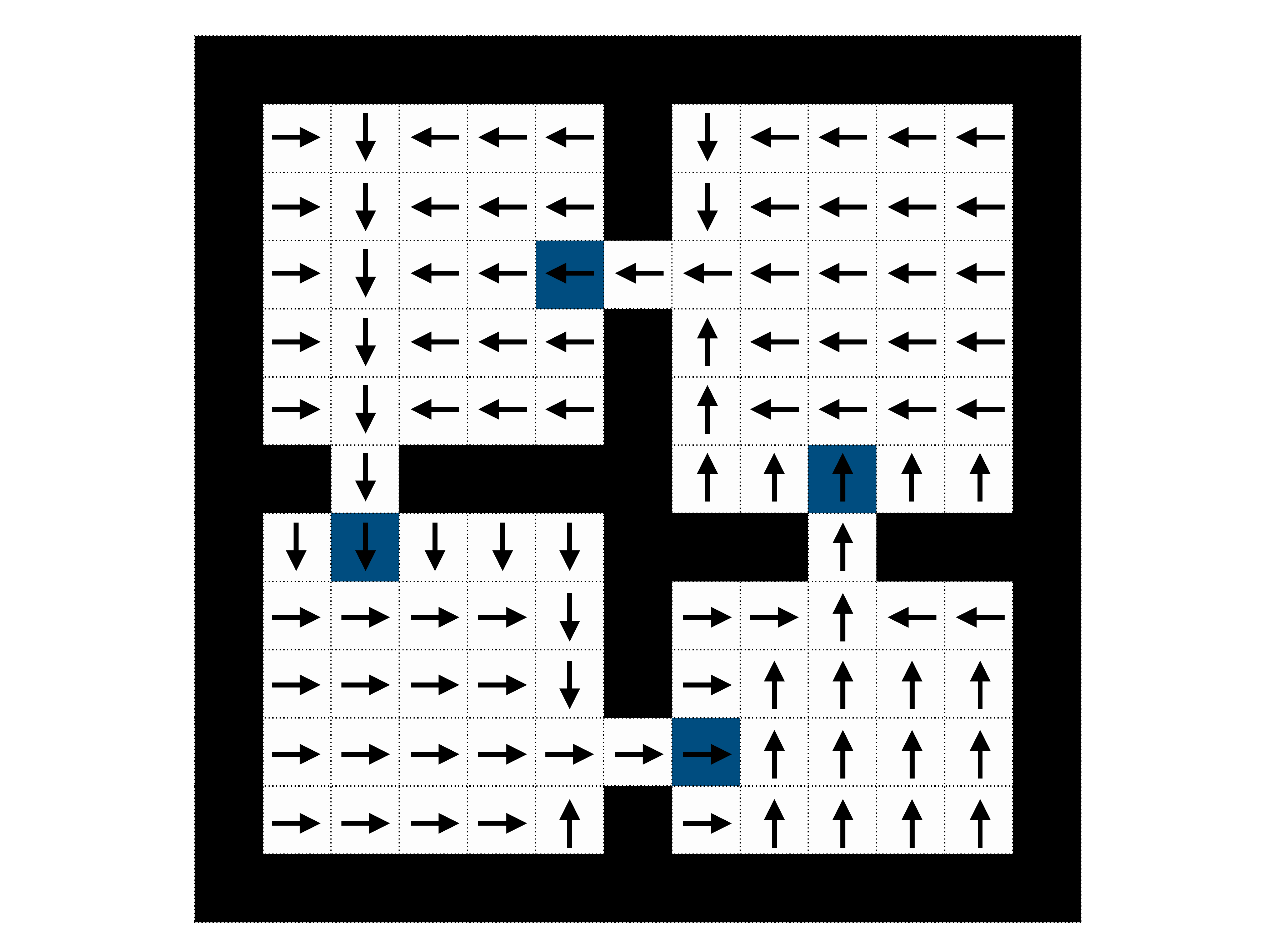}
    \caption{Hallway Option 2}
    \label{fig: Hallway Option 2}
\end{subfigure}%
\caption{(a) Example showing a good option under the our objective would be shared by solutions to multiple tasks. Each green cell marks a terminal state of a task. All rewards are $-1$. The agent is randomly initialized from the left room.
(b) Illustration of the four-room domain. The agent is randomly initialized over all empty (white $+$ green $+$ blue) cells. All rewards are $-1$ and the discount factor is $1$. (c) The policy and the termination probability of the first hallway option. This option's policy travels through the four rooms in the clock-wise direction. This option is terminated whenever the agent moves to a new room (marked by deep blue colored cells).
(d) The policy and the termination probability of the second hallway option, which travels in the counter-clock-wise direction.  
}
\end{figure}
\textbf{Example.}
Consider a two-room domain with four episodic tasks shown in \cref{fig: metric illustration}. Each episode starts from a state randomly picked from the left room. The agent has four primitive actions (\texttt{left}, \texttt{right}, \texttt{up}, \texttt{down}) and one adjustable option. Consider an option that can be initiated everywhere. For cells in the left room and the hallway cell, actions chosen by the option's policy are marked by arrows. For cells in the right room, the option's policy is uniformly random. The option terminates deterministically at yellow cells. Note that the option is shared by solutions of all four tasks. Also, note that if the option terminates at the hallway cell instead of cell $A$, solutions to all four tasks need to take a primitive action \texttt{right}, introducing an additional decision to make. Therefore this option is worse than the one plotted according to our objective. On the other hand, if the option does not terminate at cell $A$, no matter what action is assigned to this cell, the action is not optimal for some tasks. As long as $c$ is relatively smaller than the magnitude of the per-step reward. The hypothesized option induces a worse objective value compared with the one plotted in the figure.

\section{Discovering Options That Can Be Initiated Everywhere}\label{sec: The New Objective}

In this section, we consider the case where all options can be initiated at all states. Therefore set of options considered at each decision point is just the entire set of options (primitive actions plus adjustable options). In this case, the second term in the objective is proportional to the expected number of options being executed in each episode.
Recall that our objective reduces to the objective by Harb et al. (2018) if there is a single task and all options are considered at every state, therefore a multi-task extension of the single-agent tabular version of the Asynchronous Advantage Option-Critic (A2OC) Algorithm (Harb et al.\ 2018) can be applied to optimize the objective \footnote{The A2OC algorithm is a multi-agent algorithm because there are multiple agents sharing parameters simultaneously interacting with their own copies of the environment.}. We call this multi-task extension the \emph{Multi-task Advantage Option-Critic} (MAOC) algorithm. The description of the algorithm is provided in \cref{sec: FPOC}.

We now present an experiment to evaluate 1) whether the MAOC algorithm achieves a high objective value across a set of training tasks and 2) whether, with the set discovered options, planning to obtain near-optimal value functions for a set of testing tasks requires fewer elementary operations in planning. The experiment consists of a training phase and a testing phase. In the training phase, the agent observes a set of training tasks and learns options to solve these tasks. In the testing phase, the agent fixes the learned options, learns a model of these options, and plans with the learned model to solve a set of testing tasks, which are similar to the training tasks. The planning algorithm is the classic value iteration algorithm applied to both primitive actions and learned options. If the algorithm uses fewer operations to achieve near-optimal performance than the same algorithm using only primitive actions, we say that the learned options enable faster planning. In this experiment, the environment is a four-room grid world with $20$ training tasks corresponding to $20$ green cells, and $16$ testing tasks corresponding to $16$ blue cells (\cref{fig: four room domain}). At the beginning of each episode, the agent observes a task that is randomly sampled from the $20$ training tasks in the training phase (or the $16$ testing tasks in the testing phase). The initial state of the episode is uniformly randomly sampled from all empty (white $+$ green $+$ blue) cells. The green/blue cell corresponding to the current task is the only cell that terminates the episode when the agent reaches it. All other cells, including other green/blue cells, are just normal states. The agent has four actions, \texttt{up, down, left, right}, each of which deterministically takes the agent to the intended neighbor cell if the cell is empty. Otherwise, the agent stays at its current cell. The reward is $-1$ for each step, including the step transitioning to the terminal state, regardless of the action the agent takes. 
The experiment consists of $10$ runs, each of which consists of $10^8$ training steps. Every $10^6$ training steps, the agent is evaluated for $500$ episodes. No parameter updates are performed in evaluation episodes. At the beginning of each evaluation episode, the agent chooses an option that it finds to be the best, using a greedy policy w.r.t. its option-value estimate $Q$. The agent then executes the chosen option until it terminates. The agent then repeats the above process until the episode terminates. For each evaluation episode, we compute the compound return, which is the return minus the $c \times |\calH| \times$ number of decision points, where $c$ is chosen to be $0.2$. We use the \emph{average compound return} to denote the compound return averaged over $500$ evaluation episodes. The agent uses two adjustable options. 

We report a learning curve of the algorithm in \cref{fig: MAOC learning curve}. The parameter setting used is the one that results in the highest compound return averaged over the last $5$ times evaluation and $10$ runs. In the same figure, we also show in the dotted line the best compound return given only primitive actions, which terminate every time step. In addition, we show the best compound return given two hallway options (\cref{fig: Hallway Option 1}, \cref{fig: Hallway Option 2}) that move to and terminate at entrances of rooms and the primitive actions. 
Details of the experiment are presented in \cref{app: Details of Experiments}.
\begin{figure*}[h]
\begin{subfigure}{0.48\textwidth}
    \centering
    \includegraphics[width=\textwidth]{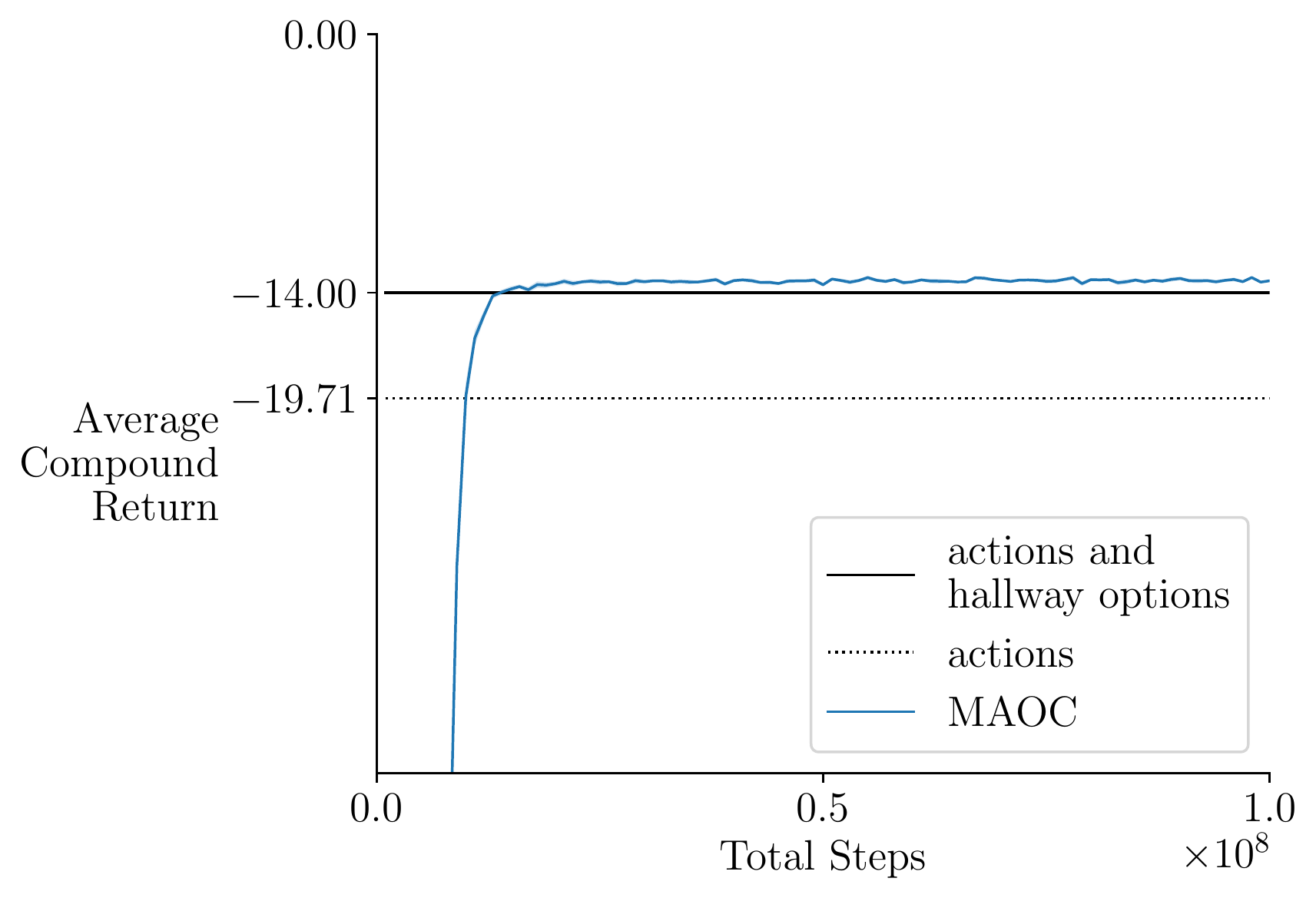}
    \caption{MAOC Learning Curve}
    \label{fig: MAOC learning curve}
\end{subfigure}%
\begin{subfigure}{0.52\textwidth}
\includegraphics[width=\textwidth]{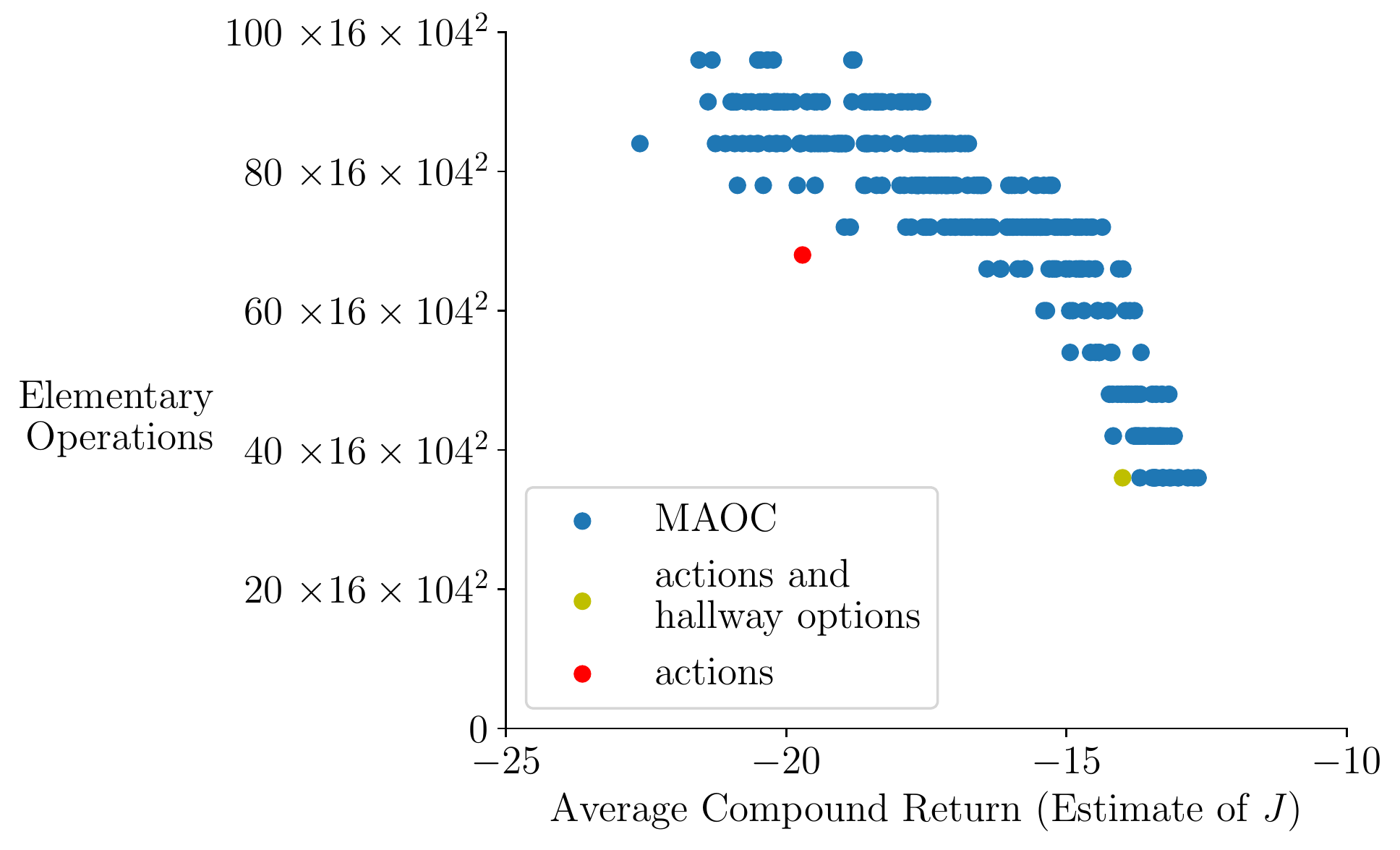}
\caption{Elementary Operations in Planning vs Estimate of $J$}
\label{fig: relation between compound return}
\end{subfigure}
\caption{
(a) Learning curve of the MAOC algorithm (with initiation sets containing all states) shows that the algorithm reliably achieves a compound return that is even slightly better than the return achieved with the two built-in hallway options. The $x$-axis is the number of training steps, the $y$-axis is the average compound return. The dotted line represents the best compound return given only four primitive actions, in which case options (primitive actions) terminate every time step. The solid line represents the best possible compound return given two built-in hallway options and four primitive actions. The shading area represents the standard error over $10$ runs. (b) Relation between the average compound return (estimate of the objective with $c = 0.2$) in a set of training tasks and the number of elementary operations used by option-value iteration to achieve near-optimal values in a set of testing tasks. The best discovered options match the human-designed options in terms of number of elementary operations used in planning.
}
\label{fig: experiment}
\end{figure*}

\Cref{fig: relation between compound return} helps understand the relation between the objective $J$, defined over the set of \emph{training} tasks (green colored cells in \cref{fig: four room domain}), and the number of planning operations used to achieve near-optimal value functions for the set of \emph{testing} tasks (blue colored cells in \cref{fig: four room domain}). Each blue dot is obtained by performing an experiment with a different parameter setting or a different seed. The $x$-axis is the estimate average compound return. The $y$-axis is the total number of operations used to obtain near-optimal value functions for all testing tasks. As a baseline, we plot a yellow dot, corresponding to the number of operations and the best average compound return when the set of options consists of two built-in hallway options (\cref{fig: experiment}) and primitive actions. We also plot in red the number of planning operations given only primitive actions. It can be seen that the average compound return and the number of operations are negatively correlated -- higher estimate of the objective value for training tasks is typically associated with fewer planning operations for testing tasks. In addition, it can be seen that the discovered options in the best runs achieve the same number of planning operations compared with the two built-in options. 

We also plot the learned options in \cref{fig: FPOC learned options}. The two discovered options are pretty similar to the human-designed options. First, in both cases, there is one option that moves in the clockwise direction and the other one moves in the opposite direction. Second, in both cases, the two options terminate at the entrances of rooms. There is one difference between the two sets of options. At the entrance cell of a room, our discovered options sometimes choose to not go directly to the next room, but visit the edges and corners of the room (e.g., see the first option's policy in the upper right room). This is actually because the most of the goals of training tasks are distributed in the corners of the room and this option can be used by these training tasks to reduce the cost term in $J$. On the other hand, for the two built-in options, the agent has to take primitive actions to reach goals in corners of a room once it enters that room. This is also the reason why the discovered options achieve a slightly higher objective value compared with the built-in options.

\begin{figure*}[h]
\begin{subfigure}{0.5\textwidth}
    \centering
    \includegraphics[width=\textwidth]{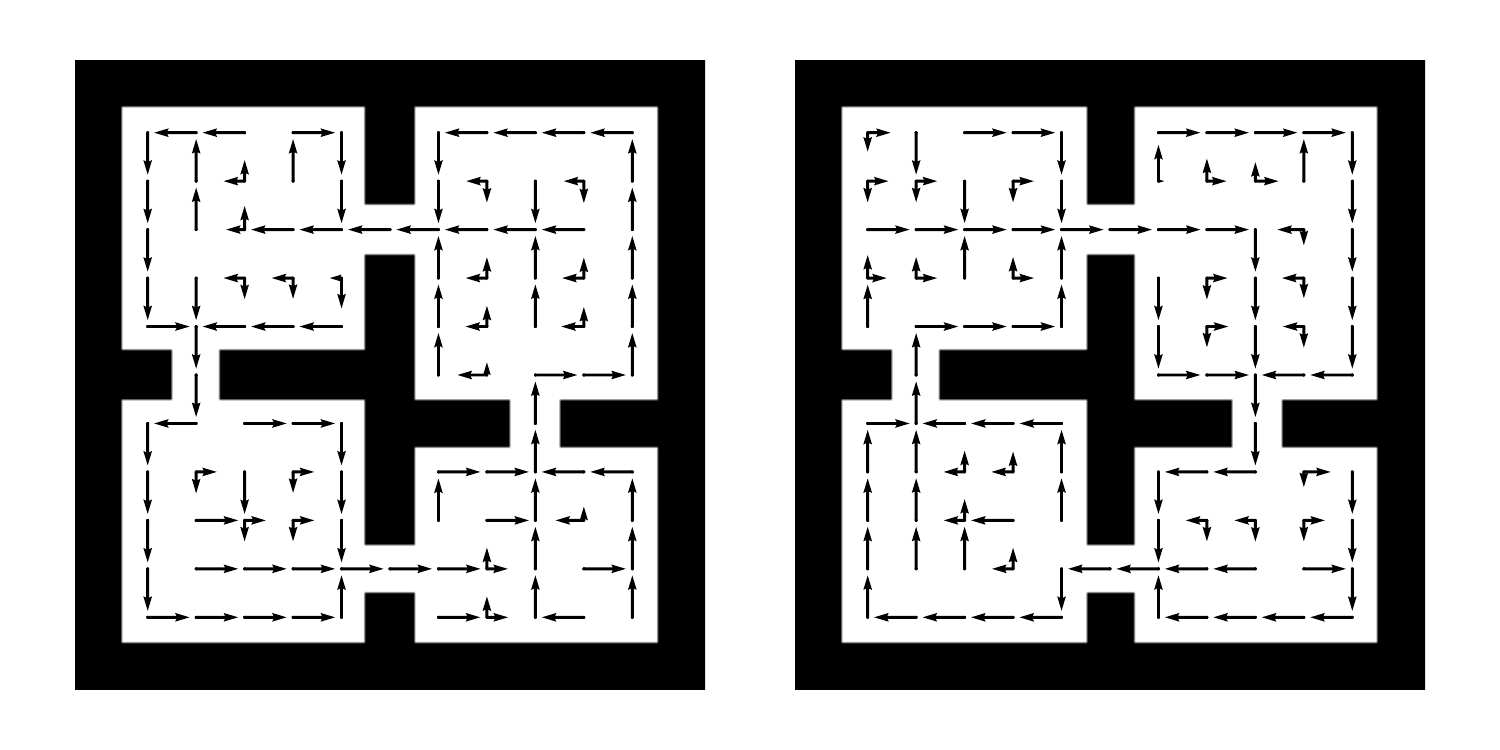}
    \caption{Learned Options' Policy}
\end{subfigure}%
\begin{subfigure}{0.5\textwidth}
    \centering
    \includegraphics[width=\textwidth]{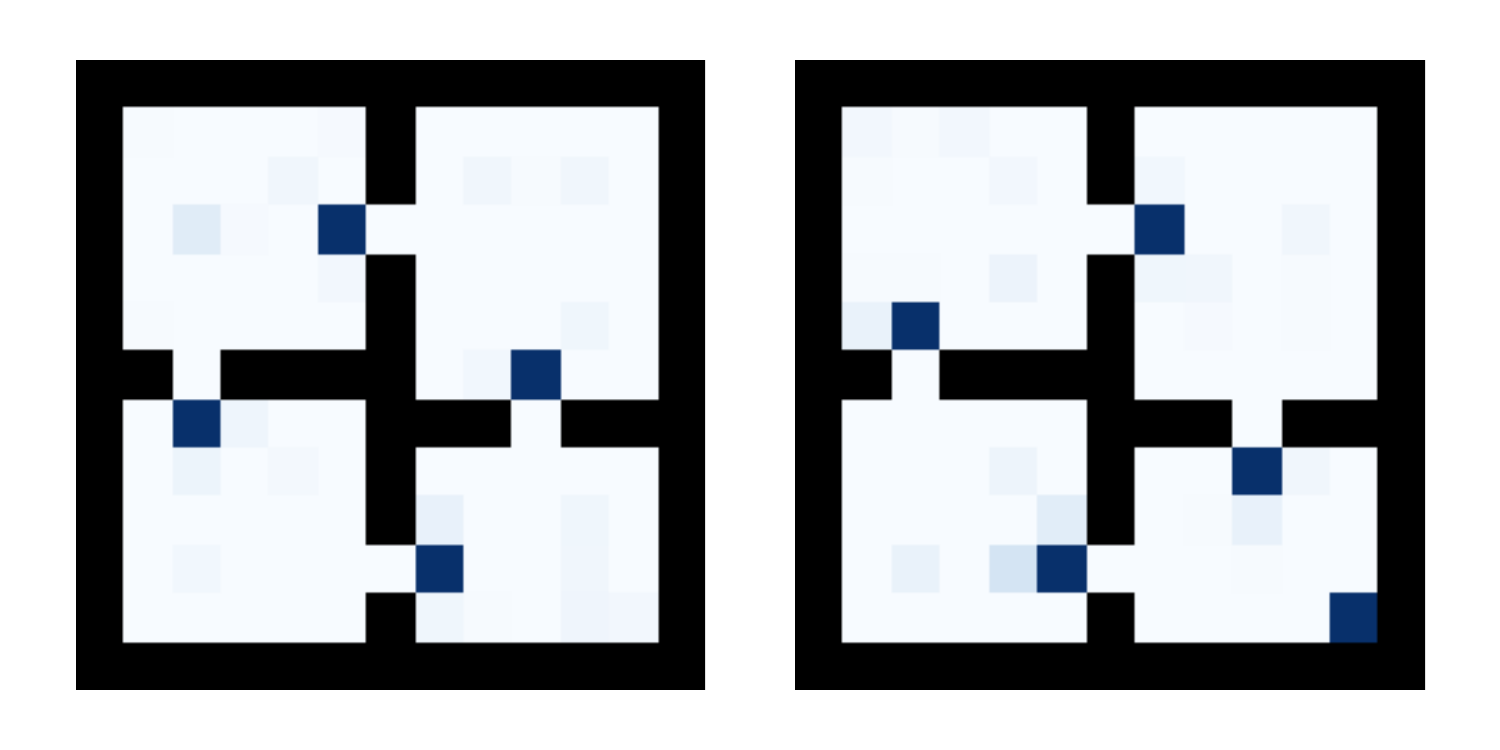}
    \caption{Learned Options' Termination Probabilities}
\end{subfigure}
\caption{
The learned option by the MAOC algorithm using the same parameter setting used to produce the learning curve in \cref{fig: MAOC learning curve}. For termination probabilities, darker blue means terminating with higher probability.
}
\label{fig: FPOC learned options}
\end{figure*}


\section{From Initiation Sets to Interest Functions}

In many of the follow-up papers (Machado et al. 2017a;b, Bacon et al. 2017, Harb et al., 2018, Harutyunyan., 2019) of the original options framework paper by Sutton et al.\ (1999), the initiation sets degenerate as they are assumed to be fixed and contain all states. A natural question to ask is why would we ever want to discover initiation sets. 

One possible justification is that an option's initiation set represents `affordance' of the option. That is, whether or not an option \emph{can be} initiated at a state. However, this justification appears questionable to us because it does not justify why options that can not be executed at certain states are better than options that can be initiated everywhere. The computational justification, introduced in \cref{sec: An Option Discovery Objective for Fast Planning}, appears to be more sound and natural to us. That is, initiation sets would better to be more selective so that fewer options are considered at each state to save computation in planning. We believe that this is the first time that the advantage of using initiation sets is justified from the computational perspective.

The question then becomes how to learn these sets from data. A possible solution is the generate-and-test approach -- one just randomly generates these sets and evaluates the corresponding objective value of $J$, and chooses the one that results in the highest value. However, it is clear that this approach is computationally expensive. We look for a more efficient algorithm searching for initiation sets. To this end, we generalize the definition of options by allowing initiation sets to be stochastic. With this modification, it is possible to apply stochastic gradient descent to reduce the number of options that can be initiated at each state.


We now provide the more general definition of options. An option $o$ is a tuple $(i_o, \pi_o, \beta_o)$, where $i_o \in \Gamma$ is the option's interest function, $\Gamma: \{f \mid f: \calS \to [0, 1]\}$, $\pi_o \in \Pi$ is the option' policy, and $\beta_o \in \Gamma$ is the option's termination function. 
At each state $s$, the options that can be initiated are sampled according to their interest functions: $\Pr(o \in \Omega(s)) = i_o(s)$. In addition, assume that primitive actions can be initiated at all state $i_a(s) = 1 \ \forall s \in \statespace, \forall a \in \actionspace$. Finally, just as what we defined in \cref{sec: problem setting}, given a set of options $\optionset$, define a function $i_\optionset: \calS \times \calH \to [0, 1]$ with $i_\optionset(s, h) \doteq i_{o_h}(s)$, $\forall s \in \calS, h \in \calH$. The definition of $\bar v$ also needs to be generalized accordingly: $\bar v^n_{\optionset, \policyset}(s) \doteq \sum_{\Omega \in \calP(\calH)} \Pr(\Omega \mid s) \sum_h \mu_{\Omega}^n(h \mid s) \bar q^n_{\optionset, \policyset} (s, h)$. Here $\Pr(\Omega \mid s) \doteq \prod_{h \in \Omega} i(s, h) \prod_{\bar h \not \in \Omega} i(s, \bar h)$. Similarly we generalize the definition of $\tilde v$: $\tilde v^n_{\optionset, \policyset}(s) \doteq \sum_{\Omega \in \calP(\calH)} \Pr(\Omega \mid s) \sum_h \mu_{\Omega}^n(h \mid s) \tilde q^n_{\optionset, \policyset} (s, h) - |\Omega|$.

The call-and-return agent-environment interaction is also modified given the new definition of options. Suppose that the agent is solving task $N$ by executing a set of options, with the set $\calH$ denoting the indices of options, and suppose that at state $S$, the previous option terminates and a new option needs to be chosen, the agent then samples a set $\Omega \in \calP(\calH)$, and chooses an option indexed by $H$ with probability $\mu_{\Omega}^N(H \mid S)$. Note that $\Omega$ can not be empty because primitive actions' interests are $1$ and are thus in $\Omega$. 

The idea of using interest functions in options has also been explored by Khetarpal et al.\ (2020). Note that their interest functions are not necessarily probabilities and are different as ours. The most important difference is that the interest functions in our work are used to generate initiation sets while the initiation sets are no longer considered in their work. While they have observed that smaller initiation sets reduce the computational cost, they replaced initiation sets with interest functions, and still had all options to be initiated at each state.


\section{Discovering Interest Functions}\label{sec: Discovering Interest Functions}

In order to maximize the objective $J$ by updating the options and the meta-preference functions, it is common to apply stochastic gradient ascent. Suppose that adjustable options' interests, policies, termination probabilities, as well as the meta-preference functions are jointly parameterized by $\vtheta$. In this case, both $\optionset$ and $\policyset$ are parameterized by $\vtheta$. With this in mind, we replace $\optionset$ and $\policyset$ by $\vtheta$ and present the gradient w.r.t. $J$ in the following theorem. The proof of the theorem is presented in \cref{sec: Gradient of the Objective}.

\begin{theorem}\label{thm: gradient} 
Let $v_\vtheta^{n}(s) \doteq \bar v_\vtheta^{n}(s) + c \tilde v_\vtheta^{n}(s), q_\vtheta^{n}(s, h) \doteq \bar q_\vtheta^{n}(s, h) + c \tilde q_\vtheta^{n}(s, h)$ and $q_\vtheta^{n}(s, h, a) \doteq \bar q_\vtheta^{n}(s, h, a) + c \tilde q_\vtheta^{n}(s, h, a)$. Suppose that $J(\vtheta)$ is differentiable w.r.t. $\vtheta$,
\begin{align*}
    & \nabla J(\vtheta) = \sum_n \sum_{s} d_0 (s) m_\vtheta^n (s) + \sum_{n, s, h} d_{\gamma, \vtheta}^n (s, h) \Bigg ( \sum_{a} \nabla \pi_\vtheta (a \mid s, h) q_\vtheta^n (s, h, a) \\
    & + \gamma \sum_{a, s', r} \pi_\vtheta (a \mid s, h) p(s', r \mid s, a) \Big( - \nabla \beta_\vtheta (s', h)  (q_\vtheta^n (s', h) - v_\vtheta^n (s')) + \beta_\vtheta (s', h) m_\vtheta^n(s') \Big ) \Bigg),
\end{align*}
where $d^n_{\gamma, \vtheta} (s, h) \doteq \bbE \left [\sum_{t=0}^T \gamma^t  \bfI(S_t = s, H_t = h) \mid S_0 \sim d_0, \vtheta, n \right]$
is the discounted number of time steps on average option $o_h$ is used at state $s$ in a single episode, and
\begin{align*}
    & m_\vtheta^n (s) \doteq \sum_{\Omega \in \calP(\calH)} \Pr(\Omega \mid s) \sum_{h \in \Omega} \mu_{\Omega, \vtheta}^n(h \mid s) \Bigg ( \sum_{\bar h \in \Omega} \nabla \log i_\vtheta (s, \bar h) + \sum_{\bar h \not \in \Omega} \nabla \log (1 - i_\vtheta(s, \bar h)) \\
    & + \nabla \log \mu^n_{\Omega, \vtheta} (h \mid s)  \Bigg) q^n_\vtheta(s, h) - c \sum_{h \in \calH} \nabla i_\vtheta (s, h).
\end{align*}
\end{theorem}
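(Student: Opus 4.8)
The plan is to recognize this as an option-critic policy-gradient theorem and to prove it by differentiating the value functions and then unrolling the resulting Bellman-type recursion. First I would observe that, by linearity of the two objective terms, $J(\vtheta) = \sum_n\sum_s d_0(s)v^n_\vtheta(s)$ with $v^n_\vtheta = \bar v^n_\vtheta + c\tilde v^n_\vtheta$, so that $\nabla J(\vtheta) = \sum_n\sum_s d_0(s)\nabla v^n_\vtheta(s)$ and it suffices to characterize $\nabla v^n_\vtheta(s)$. I would record the Bellman equations satisfied by $\bar v,\bar q$ and $\tilde v,\tilde q$ and note that, since both the return and the option-count cost accumulate additively along the call-and-return trajectory, the weighted sums $v^n_\vtheta,q^n_\vtheta$ obey a single combined Bellman-type recursion $q^n_\vtheta(s,h,a) = \sum_{s',r}p^n(s',r\mid s,a)\big(r + \gamma\bbI(s'\notin\perp^n)[(1-\beta_\vtheta(s',h))q^n_\vtheta(s',h) + \beta_\vtheta(s',h)v^n_\vtheta(s')]\big)$, in which the per-decision cost $-c|\Omega|$ is carried by $v^n_\vtheta$ rather than by the transition reward.

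Next I would differentiate $v^n_\vtheta(s) = \sum_{\Omega}\Pr(\Omega\mid s)\big(\sum_h\mu^n_{\Omega,\vtheta}(h\mid s)q^n_\vtheta(s,h) - c|\Omega|\big)$ and split the result into a ``local'' part and a recursive part. Applying the score-function identity $\nabla\Pr(\Omega\mid s) = \Pr(\Omega\mid s)\nabla\log\Pr(\Omega\mid s)$ together with $\nabla\log\Pr(\Omega\mid s) = \sum_{\bar h\in\Omega}\nabla\log i_\vtheta(s,\bar h) + \sum_{\bar h\notin\Omega}\nabla\log(1-i_\vtheta(s,\bar h))$, and likewise $\nabla\mu^n_{\Omega,\vtheta} = \mu^n_{\Omega,\vtheta}\nabla\log\mu^n_{\Omega,\vtheta}$, yields exactly the first group of terms in $m^n_\vtheta(s)$. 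For the cost I would exploit independence of the interest samples, $\sum_\Omega\Pr(\Omega\mid s)|\Omega| = \sum_h i_\vtheta(s,h)$, so that $-c\nabla\sum_\Omega\Pr(\Omega\mid s)|\Omega| = -c\sum_{h\in\calH}\nabla i_\vtheta(s,h)$, which is the final term of $m^n_\vtheta(s)$. What remains is the recursive piece, giving $\nabla v^n_\vtheta(s) = m^n_\vtheta(s) + \sum_h\mu^n_\vtheta(h\mid s)\nabla q^n_\vtheta(s,h)$, where $\mu^n_\vtheta(h\mid s)\doteq\sum_\Omega\Pr(\Omega\mid s)\mu^n_{\Omega,\vtheta}(h\mid s)$ is the marginal meta-policy.

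I would then expand the recursive factor. Differentiating $q^n_\vtheta(s,h) = \sum_a\pi_\vtheta(a\mid s,h)q^n_\vtheta(s,h,a)$ produces the source term $\sum_a\nabla\pi_\vtheta(a\mid s,h)q^n_\vtheta(s,h,a)$ plus $\sum_a\pi_\vtheta(a\mid s,h)\nabla q^n_\vtheta(s,h,a)$, and differentiating the combined Bellman equation for $q^n_\vtheta(s,h,a)$ yields, at each successor $s'$, the termination source $\gamma\,\nabla\beta_\vtheta(s',h)(v^n_\vtheta(s')-q^n_\vtheta(s',h))$ plus a propagation term $\gamma[(1-\beta_\vtheta(s',h))\nabla q^n_\vtheta(s',h) + \beta_\vtheta(s',h)\nabla v^n_\vtheta(s')]$. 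Substituting $\nabla v^n_\vtheta(s') = m^n_\vtheta(s') + \sum_{h'}\mu^n_\vtheta(h'\mid s')\nabla q^n_\vtheta(s',h')$ turns the termination branch into the $\beta_\vtheta(s',h)m^n_\vtheta(s')$ source (matching the statement) plus further $\nabla q$ propagation. This exhibits $\nabla q^n_\vtheta(\cdot,\cdot)$ as the solution of a linear fixed-point equation whose transition operator is precisely the call-and-return chain on state--option pairs: from $(s,h)$ the option either continues to $(s',h)$ with weight $\gamma(1-\beta_\vtheta(s',h))$ or terminates and reselects $(s',h')$ with weight $\gamma\beta_\vtheta(s',h)\mu^n_\vtheta(h'\mid s')$.

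Finally I would unroll this fixed point: iterating and summing the geometric series collapses the accumulated transition weights into the discounted state--option occupancy $d^n_{\gamma,\vtheta}(s,h) = \bbE[\sum_{t=0}^T\gamma^t\bbI(S_t=s,H_t=h)]$, so that $\sum_s d_0(s)\sum_h\mu^n_\vtheta(h\mid s)\nabla q^n_\vtheta(s,h) = \sum_{s,h}d^n_{\gamma,\vtheta}(s,h)\,A^n_\vtheta(s,h)$, where $A^n_\vtheta(s,h)$ is exactly the bracketed source of the theorem. Adding the initial-state contribution $\sum_s d_0(s)m^n_\vtheta(s)$ and summing over tasks gives the claim. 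The main obstacle I anticipate is the bookkeeping in this last step: correctly identifying the two-mode transition operator (continuation versus termination-and-reselection through the marginalized interest sampling) and verifying that its discounted occupancy is precisely $d^n_{\gamma,\vtheta}$, while invoking the earlier reachability assumption to guarantee $T$ is almost surely finite so the series converges and the interchanges of $\nabla$ with the sums are justified. A secondary subtlety is confirming the additive decomposition of the combined Bellman recursion and the placement of the per-decision cost, so that $m^n_\vtheta$ carries the $-c\sum_h\nabla i_\vtheta(s,h)$ term and nothing is double-counted.
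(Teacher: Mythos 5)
Your proposal is correct and follows essentially the same route as the paper's proof: the score-function expansion of $\Pr(\Omega\mid s)$ and $\mu^n_{\Omega,\vtheta}$ yielding $m^n_\vtheta$, the differentiated call-and-return Bellman recursion for $q^n_\vtheta$ producing the policy and termination source terms, and the unrolling of the resulting linear fixed point into the discounted state--option occupancy $d^n_{\gamma,\vtheta}$ (the paper formalizes this last step by constructing an auxiliary MDP on state--option pairs and writing $(\mI-\gamma\bar{\mP}_{\bar\pi})^{-1}\bar{\vr}_{\bar\pi}$, which is the same geometric-series argument you describe).
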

The policy update term $\sum_{a} \nabla \pi_\vtheta (a \mid s, h) q_\vtheta^n (s, h, a)$, termination probability update term $- \nabla \beta_\vtheta (s', h)  (q_\vtheta^n (s', h) - v_\vtheta^n (s'))$, and meta-policy update term $\sum_{h \in \Omega} \mu_{\Omega, \vtheta}^n(h \mid s) \nabla \log \mu^n_{\Omega, \vtheta} (h \mid s) q^n_\vtheta(s, h)$ also appear in the gradient of the option-critic objective (Bacon et al.\ 2017) and the objective by Harb et al.\ (2017).

The update to the interest function
\begin{align*}
    & \sum_{\Omega \in \calP(\calH)} \Pr(\Omega \mid s) \sum_{h \in \Omega} \mu_{\Omega, \vtheta}^n(h \mid s) \Bigg ( \sum_{\bar h \in \Omega} \nabla \log i_\vtheta (s, \bar h) + \sum_{\bar h \not \in \Omega} \nabla \log (1 - i_\vtheta(s, \bar h)) \Bigg) q^n_\vtheta(s, h) \\
    & - c \sum_{h \in \calH} \nabla i_\vtheta (s, h),
\end{align*}
is new. It can be understood in the following way. The second term $- c \sum_{h \in \calH} \nabla i_\vtheta (s, h)$ term encourages all interests to be smaller. Now for the first term, if $\sum_{h \in \Omega} \mu_{\Omega, \vtheta}^n(h \mid s) q^n_\vtheta(s, h)$ is high, it means that the set $\Omega$ contains good options, then when one updates the interest function using the gradient of $J$, the interests of all the options in $\Omega$ will be increased by a large amount and the interests of all the options not in $\Omega$ will be decreased by a large amount.

Our full algorithm, which we call Fast Planning Option-Critic (FPOC), is a tabular algorithm that searches for a pair $(\optionset, \policyset)$ that maximizes $J$ using a sample of the gradient given in \cref{thm: gradient}. Due to the space limit, we present the full algorithm in \cref{sec: FPOC}.

Applying FPOC to the four-room domain the same way as we did in \cref{sec: The New Objective}, we obtain \cref{fig: Adaptive interests}. This time in order to show the effect of learning the interest function, we increase the adjustable options to $8$, because with only $2$ adjustable options, it seems that both of them are quite useful for many of the states in order to achieve a high return and their interests would better to be high. Readers can refer to \cref{fig: FPOC study planning iterations vs estimate of J} for figures with $2$ and $4$ adjustable options. As a comparison, we show the same figure but for MAOC in \cref{fig: All interests equal to one.}. Note that for MAOC, many of the runs result in options that require more than $100 \times 16 \times 104^2$ operations and are not shown in the figure. It is clear that learning the interest function significantly reduces the elementary operations. Visualization of the discovered options as well as other empirical results can be found in \cref{app: Details of Experiments}.

\begin{figure*}[h]
\begin{subfigure}{0.5\textwidth}
\centering
\includegraphics[width=\textwidth]{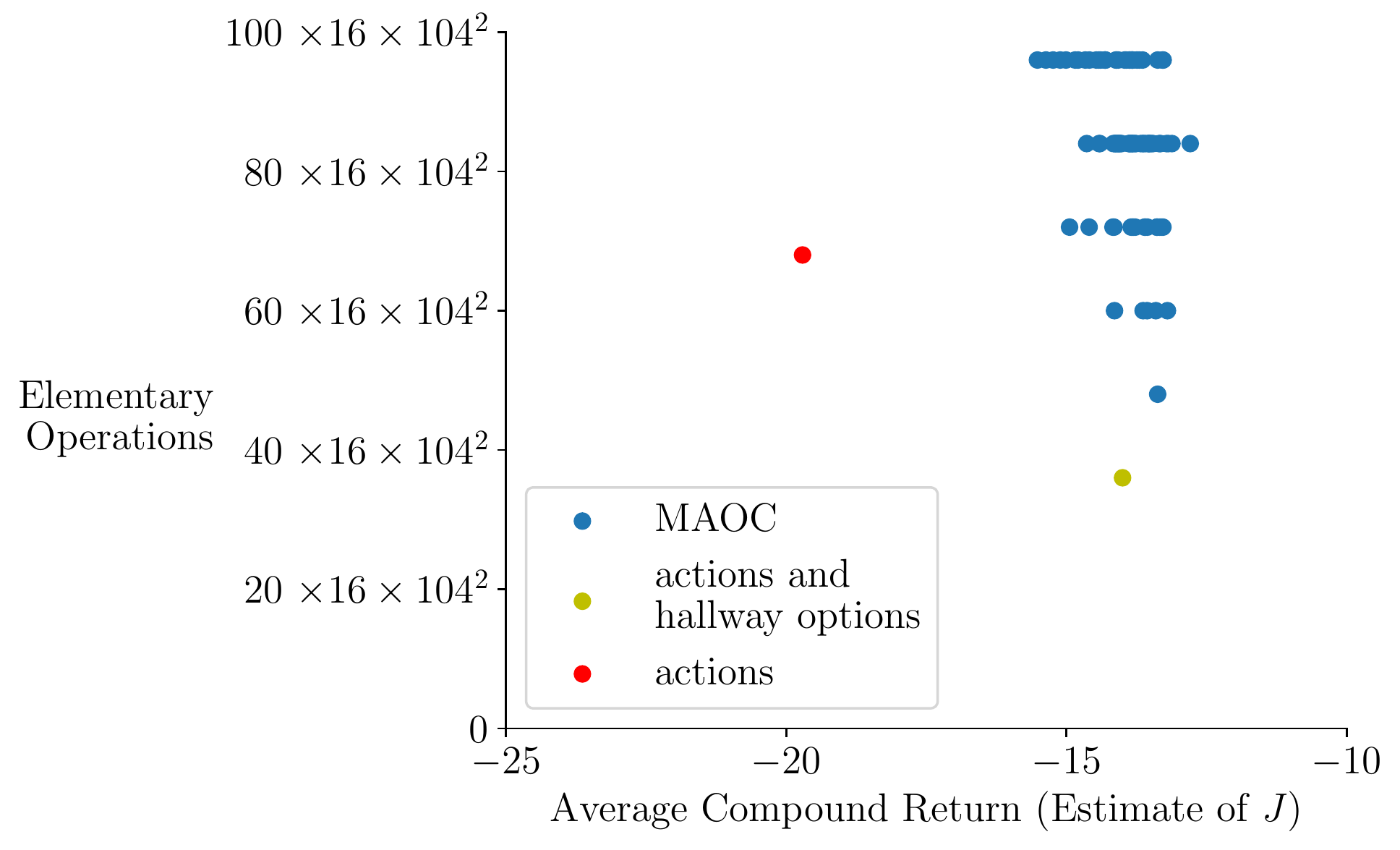}
\caption{MAOC}
\label{fig: All interests equal to one.}
\end{subfigure}%
\begin{subfigure}{0.5\textwidth}
\centering
\includegraphics[width=\textwidth]{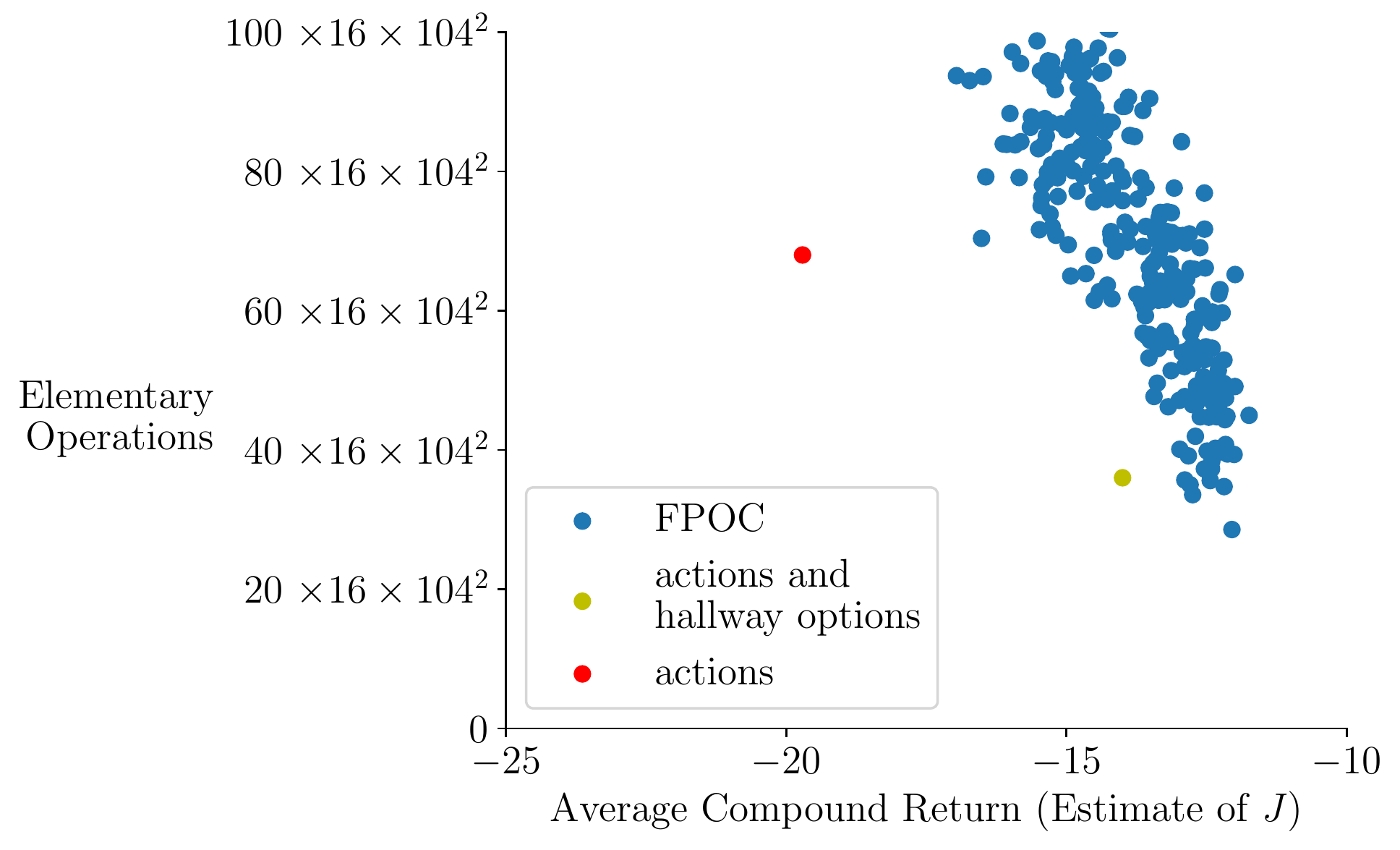}
\caption{FPOC}
\label{fig: Adaptive interests}
\end{subfigure}
\caption{
A comparison between the MAOC algorithm and the FPOC algorithm with $8$ adjustable options. It can be seen that learning to reduce number of options being considered at each state significantly reduces the number of operations used by option-value iteration.
}
\label{fig: experiment}
\end{figure*}

\section{Discussion and Future Work}
We have introduced a new objective for option discovery that highlights the computational advantage of planning with options when there are a set of given tasks and a given number of options. To the best of our knowledge, this is the first objective that explicitly advocates discovering general options that achieve fast planning.
Optimizing our objective maximizes the performance of solutions to the given tasks while minimizing the number of options used to compose the solutions as well as the number of options considered at each decision point. 
Empirically, we showed that higher objective values for a set of training tasks are associated with fewer elementary operations to achieve near-optimal value functions for a set of testing tasks, which are similar to the training ones. 
To optimize the proposed objective, a key step is to generalize the classic definition of options so that initiation sets are no longer fixed, but are sampled following the options' interest functions. We proposed a new algorithm that optimizes the objective by following the sample gradient of the objective.  
In the four-room domain, we show that the proposed algorithm, with or without learning to adapt the interest function, achieves a high objective value and discovers options that are comparable with two human-designed options in terms of the number of planning operations used, when applying option-value iteration with the discovered options. 
We believe that our objective and algorithm provide a new perspective for option discovery and the more general temporal abstraction problem. 

There are several ways in which our work is limited. The most important way is that our FPOC algorithm only treats the tabular setting. However, we do not see technical difficulty of extending this algorithm to the function approximation (FA) setting and would like to leave the analysis of the FA algorithm as a future work.
The second way in which our work is limited in that we assume a given set of tasks. We believe that these tasks are sub-tasks that the agent finds interesting/useful to solve in order to achieve better performance in the agent's main task and should be discovered by the agent itself. Discovering sub-tasks or sub-problems is a fundamental open research problem. Several typical recent approaches to the problem include Sutton et al. (2022), Veeriah et al. (2021), Nachum et al. (2018), and Vezhnevets et al. (2017). The third way in which our work is limited is that the number of options is assumed to be fixed and specified by a human. 

One interesting future work is to discover options that terminate at fewer states, would further reduces the computation complexity of planning. This idea has been explored by Harutyunyan et al.\ (2019), but were inspired from a different perspective. They argued that an option that terminates at fewer possible states is preferred because it has a shorter encoding and thus takes less amount of memory to maintain. It would be interesting incorporate their idea in our objective properly so that the resulting objective prefers fewer number of possible terminal states. 


\section*{Acknowledgements}
The authors were supported by DeepMind, Amii, NSERC, and CIFAR. The authors wish to thank Martha Steenstrup, Arsalan Sharifnassab, Abhishek Naik, Zaheer Abbas, and Chenjun Xiao for their valuable feedback during various stages of the work. Computing resources were provided by Compute Canada.

\newpage
\section*{References}

Bacon, P. L. (2013). \emph{On the bottleneck concept for options discovery.} McGill University.

Bacon, P. L. (2018). \emph{Temporal Representation Learning.} McGill University.

Bacon, P. L., Harb, J., Precup, D. (2017). The Option-Critic Architecture. \emph{AAAI Conference on Artificial Intelligence}.

Bertsekas, D. P., Tsitsiklis, J. N. (1996). \emph{Neuro-dynamic programming}. Athena Scientific.




Brunskill, E., Li, L. (2014). PAC-inspired Option Discovery in Lifelong Reinforcement Learning. \emph{International Conference on Machine Learning}.




Eysenbach, B., Gupta, A., Ibarz, J., Levine, S. (2018). Diversity is All You Need: Learning Skills without a Reward Function. \emph{International Conference on Learning Representations}.

Harb, J., Bacon, P. L., Klissarov, M.,  Precup, D. (2018). When waiting is not an option: Learning options with a deliberation cost. \emph{AAAI Conference on Artificial Intelligence}.

Harutyunyan, A., Dabney, W., Borsa, D., Heess, N., Munos, R.,  Precup, D. (2019). The termination critic.\emph{ International Conference on Artificial Intelligence and Statistics}.

Jinnai, Y., Abel, D., Hershkowitz, D., Littman, M.,  Konidaris, G. (2019). Finding options that minimize planning time. \emph{International Conference on Machine Learning}.

Jinnai, Y., Park, J. W., Abel, D.,  Konidaris, G. (2019). Discovering options for exploration by minimizing cover time. \emph{International Conference on Machine Learning}.

Khetarpal, K., Klissarov, M., Chevalier-Boisvert, M., Bacon, P. L.,  Precup, D. (2020). Options of interest: Temporal abstraction with interest functions. \emph{AAAI Conference on Artificial Intelligence}.


Konidaris, G., Barto, A. (2009). Skill discovery in continuous reinforcement learning domains using skill chaining. \emph{Neural Information Processing Systems}.



Machado, M. C., Bellemare, M. G., Bowling, M. (2017). A Laplacian Framework for Option Discovery in Reinforcement Learning. \emph{International Conference on Machine Learning}.

Machado, M. C., Rosenbaum, C., Guo, X., Liu, M., Tesauro, G.,  Campbell, M. (2017). Eigenoption discovery through the deep successor representation. \emph{International Conference on Learning Representations}.

McGovern, A., Barto, A. G. (2001). Automatic Discovery of Subgoals in Reinforcement Learning using Diverse Density. \emph{International Conference on Machine Learning}.




Menache, I., Mannor, S., \& Shimkin, N. (2002). Q-Cut—Dynamic Discovery of Sub-goals in Reinforcement Learning. \emph{European Conference on Machine Learning}.

Nachum, O., Gu, S. S., Lee, H.,  Levine, S. (2018). Data-efficient hierarchical reinforcement learning. \emph{Neural Information Processing Systems}.


Puterman, M. L. (1994). \emph{Markov Decision Processes: Discrete Stochastic Dynamic Programming.} John Wiley \& Sons.

Şimşek, Ö., \& Barto, A. G. (2004). Using Relative Novelty to Identify Useful Temporal Abstractions in Reinforcement Learning. \emph{International Conference on Machine Learning}.

Singh, S., Barto, A. G., Chentanez, N. (2004). Intrinsically Motivated Reinforcement Learning. \emph{Neural Information Processing Systems.}

Solway, A., Diuk, C., Córdova, N., Yee, D., Barto, A. G., Niv, Y.,  Botvinick, M. M. (2014). Optimal behavioral hierarchy. \emph{PLoS Computational Biology}.

Sutton, R. S., Precup, D., Singh, S. (1999). Between MDPs and Semi-MDPs: A Framework for Temporal Abstraction in Reinforcement Learning. \emph{Artificial Intelligence}.


Sutton, R. S.,  Barto, A. G. (2018). \emph{Reinforcement Learning: An Introduction.} MIT Press.


van Dijk, S. G.,  Polani, D. (2011). Grounding Subgoals in Information Transitions. \emph{IEEE Symposium on Adaptive Dynamic Programming and Reinforcement Learning}.


Veeriah, V., Zahavy, T., Hessel, M., Xu, Z., Oh, J., Kemaev, I., van Hasselt, H., Silver, D., Singh S. (2021). Discovery of Options via Meta-Learned Subgoals. \emph{Neural Information Processing Systems}.

Vezhnevets, A. S., Osindero, S., Schaul, T., Heess, N., Jaderberg, M., Silver, D.,  Kavukcuoglu, K. (2017). Feudal networks for hierarchical reinforcement learning. \emph{International Conference on Machine Learning}.


\newpage
\appendix
\section{The Fast Planning Option-Critic Algorithm}\label{sec: FPOC}

First, note that MAOC is just FPOC but with all interest functions being fixed to $1$ (and no updates are performed to update these functions) and we will only describe FPOC in this section.

\textbf{Remark:} Differences between MAOC and A2OC are 1) MAOC is an algorithm designed for the multi-task setting introduced in \cref{sec: problem setting} while A2OC is designed for the single-task setting, 2) MAOC is designed for the single-agent setting while A2OC has multiple agents simultaneously interacting with its own copy of the environment and sharing parameters, 3) FPOC updates for all options while A2OC only updates the value of the option being executed, and 4) MAOC is a one-step algorithm while A2OC is an $n$-step algorithm.

We are now ready to present the FPOC algorithm. FPOC maintains an $\abs{\calN} \times \abs{\calS} \times \abs{\calH}$ table of option-value estimate $Q$ and a $\abs{\calS} \times \abs{\calH^{adj}}$ table of termination preferences $W^\beta$ used compute termination probabilities of adjustable options. Given $W^\beta$ and an adjustable option $h \in \calH^{adj}$, the probability of terminating option $o_h$ at $s$ is obtained by applying the sigmoid function to the preference: $\beta_\optionset(s, h) \doteq \text{sigmoid}(W^\beta(s, h))$. The algorithm also maintains a $\abs{\calS} \times \abs{\calH^{adj}} \times \abs{\calA}$ table of policy preferences $W^\pi$ used to compute policies of adjustable options. Given $W^\pi$, a state $s$, and an adjustable option $h \in \calH^{adj}$, the probabilities of taking different actions of option $o_h$ at state $s$ are obtained by applying the softmax function to the preferences of the corresponding actions. $\pi_\optionset(a \mid s, h) \doteq e^{W^\pi(s, h, a)} / \sum_{\bar a} e^{W^\pi(s, h, \bar a)}$. Finally, the algorithm maintains a $\abs{\calS} \times \abs{\calH^{adj}}$ table of interest preferences used compute interests of adjustable options. Given $W^i$, an adjustable option $h \in \calH^{adj}$, the probability of including option $o_h$ in the initiation set of $s$ is obtained by applying the sigmoid function to the preference: $i_\optionset(s, h) \doteq \text{sigmoid}(W^i(s, h))$. The set of options $\optionset$ is parameterized by $W^\pi$ and $W^\beta$. With this in mind, we omit the superscript $\optionset$ in $\beta_\optionset$, $\pi_\optionset$, and $i_\optionset$ for simplicity. 

FPOC does not maintain the set of meta-preference functions $\policyset$ to compute the meta-policy. Instead, the meta-policy is an $\epsilon$-greedy policy w.r.t. the option-value estimates for all options that can be initiated. Formally,

\begin{align*}
    \mu_{\Omega}^n(h \mid s) = 
    \begin{cases}
    & 0 \quad h \not \in \Omega\\
    & (1 - \epsilon) / |\argmax_{x \in \Omega} Q^n(s, x)| + \epsilon / |\Omega| \quad h \in \argmax_{x \in \Omega} Q^n(s, x)\\
    & \epsilon / |\Omega| \quad \text{otherwise}
    \end{cases}.
\end{align*}

The action taken at time step $t$, $A_t$, is chosen according to the $o_{H_t}$'s policy $\pi(\cdot \mid S_t, H_t)$. 

Given that $\policyset$ is not maintained and all other estimates are maintained using tables, we can obtain an estimate of $m_\vtheta^n$ in the following way. We first sample a set of options $\Omega$ according to the interest functions. An unbiased estimate of $m^n_{\vtheta}(s, x)$ can be obtained:
\begin{align}\label{eq: naive estimate of m}
    & \hat m^n_{\Omega, \vtheta}(s, x) \doteq \sum_{h \in \Omega} \mu_{\Omega, \vtheta}^{n} (h \mid s) \Bigg ( \sum_{\bar h \in \Omega} \nabla \log i_\vtheta(s, \bar h) + \sum_{\bar h \not \in \Omega} \nabla \log (1 - i_\vtheta(s, \bar h)) \nonumber\\
    & \nabla \log \mu^n_{\Omega, \vtheta} (h \mid s) \Bigg) q^n_\vtheta (s, h) - c \sum_{h \in \calH} \nabla i_\vtheta(s, h) \nonumber\\
    & =  \sum_{h \in \Omega} \mu_{\Omega}^{n} (h \mid s) \Bigg ( \sum_{\bar h \in \Omega} \nabla_{W^i(s, x)} \log i(s, \bar h) + \sum_{\bar h \not \in \Omega} \nabla_{W^i(s, x)} \log (1 - i(s, \bar h)) \Bigg) Q^n (s, h) \nonumber\\\
    & - c \sum_{h \in \calH} \nabla_{W^i(s, x)} i(s, h)\nonumber\\
    & = \sum_{h \in \Omega} \mu_{\Omega}^{n} (h \mid s) \Bigg ( \bfI(x \in \Omega) (1 -  i(s, x)) + \bfI(x \not \in \Omega) ( -  i(s, x)) \Bigg) Q^n (s, h) - c i(s, x) (1 - i(s, x))\nonumber\\
    & = \Bigg ( \bfI(x \in \Omega) - i(s, x) \Bigg)  \sum_{h \in \Omega} \mu^n_{\Omega} (h \mid s) Q^n (s, h) - c i(s, x) (1 - i(s, x)),
\end{align}
where $i(s, x) = \text{Sigmoid}(W^i(s, x))$, for all $s \in \statespace$, $x \in \calH$. 




Note that the first term in the above estimate has a very high variance, especially when the magnitude of $\sum_{h \in \Omega} \mu^n_{\Omega} (h \mid s) Q^n (s, h)$ is large. The variance is due to stochasticity of having $x$ in the set $\Omega$ or not. To understand this, consider that $c=0$ and $i(s, x) = 0.5$, fix all elements except for $x$ in $\Omega$, the update when $x$ is in $\Omega$ and the update when $x$ is not in $\Omega$ are in opposite directions and have similar magnitude!

We propose to use the following estimate, which generally has lower variance. We first sample a set of options $\Omega$ according to their interest functions. Then for each $x$, we construct two sets from $\Omega$, one of which contains $x$ while the other one doesn't. Specifically, $\Omega_u^+ \doteq \Omega \bigcup \{u\}$ is a set that contains all elements in $\Omega$ and also contains $u$, and let $\Omega_u^- = \Omega \backslash \{u\}$ is the set that contains all elements in $\Omega$ except for $u$ (if $u \not \in \Omega$, $u \not \in \Omega_u^-$). We obtain a new estimate 
\begin{align*}
    M^n(s, x) = i(s, x) \hat m_{\Omega_x^+, \vtheta}^n (s, x) + (1 - i(s, x)) \hat m_{\Omega_x^-, \vtheta}^n (s, x).
\end{align*}
Note that the r.h.s. is the weighted average of including $x$ and not including it in the set.
\begin{align} \label{eq: final estimate of m}
    & M^n(s, x) \nonumber \\
    & = i(s, x)  \left (\sum_{h \in \Omega_x^+} \mu^n_{\Omega_x^+} (h \mid s) \Bigg ( \bfI(x \in \Omega_x^+) - i(s, x) \Bigg) Q^n (s, h) - c i(s, x) (1 - i(s, x)) \right) + \nonumber \\
    & = (1 - i(s, x))  \left ( \sum_{h \in \Omega_u^-} \mu^n_{\Omega_x^-} (h \mid s) \Bigg ( \bfI(x \in \Omega_x^-) - i(s, x) \Bigg) Q^n (s, h) - c i(s, x) (1 - i(s, x)) \right ) \nonumber\\
    & = i(s, x) (1 - i(s, x) )  \left ( \sum_{h \in \Omega^+_x} \mu_{\Omega_x^+}^n(h \mid s) Q^n (s, h) - \sum_{h \in \Omega_x^-} \mu_{\Omega_x^-}^n(h \mid s) Q^n (s, h) - c\right ).
\end{align}

Note that $\sum_{h \in \Omega_x^+} \mu_{\Omega_x^+}^n(h \mid s) Q^n (s, h) - \sum_{h \in \Omega_x^-} \mu_{\Omega_x^-}^n(h \mid s) Q^n (s, h) - c$ has clear meaning -- it is the advantage of including the option $h$ given $\Omega$. Note that the first two terms is the difference of values of including $x$ in the initiation set or not, given all other options in $\Omega$. The third term $c$ is the extra computation cost when taking one more option into consideration. To better understand the above estimate, as an example, consider the case when option $x$ is not likely to be chosen by $\mu$ at some state, then the first two terms almost cancel out and only the third term is left. In this case $\tilde m$ is negative and updating $i$ using $\tilde m$ decreases the interest at this state.

Note that $c$ is a parameter of the objective and does not necessarily be used by the algorithm. In our algorithm, we use $M^n(s, x)$ but replace $c$ with $\bar c$. In our experiments, we have tested different values of $\bar c$.



In addition to an estimate of $m$, we also need an estimate of $v$. Again this can be constructed using a sample $\Omega$ directly:
\begin{align*}
    \hat v_\vtheta^n(s) \doteq \sum_{h \in \Omega} \mu_{\Omega}^n(h \mid s) Q(n, s, h) - c |\Omega|.
\end{align*}
Similar as what we did for $m$, we use the following estimate for $v$:
\begin{align}\label{eq: estimate of v}
    & V^n(s) \doteq \nonumber\\
    & \frac{1}{|\calH^{adj}|}\sum_{x \in \calH^{adj}} \left ( i(s, x) \sum_{h \in \Omega_{x}^+} \mu_{\Omega_x^+}^n(h \mid s) Q^n(s, h) + (1 - i(s, x)) \sum_{h \in \Omega_{x}^-} \mu_{\Omega_x^-}^n(h \mid s) Q^n(s, h) \right)\nonumber \\
    & - c \sum_{x \in \calH^{adj}} i(s, x).
\end{align}
Similar as what we did for $M^n$, when we use $V^n(s)$ in our algorithm, we replace $c$ with $\bar c$. In the experiments, we have tested different values of $\bar c$.

\textbf{FPOC Update Rules.}

FPOC updates $W^\pi$ with
\begin{align*}
    W^\pi(S_t, H_t, a) \overset{\alpha}{\gets} \left ( (\bfI(a=A_t) - \pi(a \mid S_t, H_t)) \delta_t(H_t) + \eta \frac{\partial \text{Ent}(\pi(\cdot \mid S_t, H_t))}{\partial W^\pi(S_t, H_t, a)} \right ), \ \forall a \in \calA,
\end{align*}
and keep all other elements unchanged. Here $a \overset{\alpha}{\gets} b$ means $a = a + \alpha b$. $\alpha$ is a step-size, $\bfI_{a=A_t} - \pi(a \mid S_t, H_t) = \partial \log \pi(A_t \mid S_t, H_t) / \partial W^\pi(S_t, H_t, a), \forall a \in \actionspace$, 
$\text{Ent}(\pi(\cdot \mid s, h))$ is the entropy of $\pi(\cdot \mid s, h)$, and $\partial \text{Ent}(\pi(\cdot \mid s, h))/\partial W^\pi(s, h, a) = -\pi(a \mid s, h) (\log \pi(a \mid s, h) + \text{Ent}(\pi(\cdot \mid s, h))$ is the partial derivative of the entropy (see \cref{lemma: entropy}),
and 
\begin{align}\label{eq: td error}
    \delta_t(H_t) & \doteq R_{t+1} + U_t(S_{t+1}, H_t) - Q_t^{N}(S_t, H_t)
\end{align}
is the temporal difference (TD) error 
with
\begin{align}
    U^{N_t}(S_{t+1}, H_t) \doteq \begin{cases}
        \gamma (1 - Z_{t+1}) V^{N}(S_{t+1}) & \text{w.p. }  \beta_{t+1}\\
        \gamma (1 - Z_{t+1}) Q^{N}(S_{t+1}, H_t) & \text{w.p. } 1 - \beta_{t+1}
    \end{cases}, \label{eq: sample U}
\end{align} 
where $\beta_{t+1} \doteq \beta(S_{t+1}, H_t)$, and $V^N(S_{t+1})$ is defined in \eqref{eq: estimate of v}.
FPOC updates $W^\beta$ with
\small
\begin{align*}
    W^\beta(S_{t+1}, H_t) \overset{\alpha}{\gets} \gamma (Z_{t+1} - 1)\beta_{t+1} (1 - \beta_{t+1}) \left(Q^{N}(S_{t+1}, H_t) - V^{N}(S_{t+1}) - \eta \frac{\partial \text{Ent}([\beta_{t+1}, 1 - \beta_{t+1}])}{\partial W^\beta(S_{t+1}, H_t)} \right),
\end{align*}
\normalsize
where $\beta_{t+1} (1 - \beta_{t+1}) = \partial \beta_{t+1} / \partial W^\beta(S_{t+1}, H_t)$,  
$\partial \text{Ent}([\beta_{t+1}, 1 - \beta_{t+1}])/\partial W^\beta(S_{t+1}, H_t) = \beta_{t+1} (1 - \beta_{t+1}) \log ((1 - \beta_{t+1})/\beta_{t+1})$ (see \cref{lemma: entropy}).

FPOC updates $W^i$ with
\begin{align*}
    W^i(S_t, h) \overset{\alpha}{\gets} \gamma \max(\beta_{t}, Z_{t-1}) \Bigg ( M^n(S_t, h) + \eta \frac{\partial \text{Ent} ([i(S_t, h), 1 - i(S_t, h)])}{\partial W^i(S_{t}, h)}\Bigg), \ \forall h \in \calH^{adj},
\end{align*}
where $M^n(S_t, h)$ is defined in \eqref{eq: final estimate of m}, $\partial \text{Ent}([i(S_t, h), 1 - i(S_t, h)])/\partial W^i(S_{t+1}, h) = i(S_t, h) (1 - i(S_t, h)) \log ((1 - i(S_t, h))/i(S_t, h))$.
Here we use $\max(\beta_t, Z_{t-1})$ so that the agent always updates $W^i$ for the first state of the episode (remember that $Z_{t-1} = 1$ if $S_t$ is the initial state).

Finally, our algorithm updates $Q$ for all options $h \in \calH$ with
\begin{align}\label{eq: update one q}
    Q^{N}(S_t, h) & \gets Q^{N}(S_t, h) +
     \alpha \rho_t(h) \delta_t(h),
\end{align}
where $\rho_t(h) \doteq \max \left(1, \frac{\pi(A_t \mid S_t, h)}{\pi(A_t \mid S_t, H_t)} \right)$ is the clipped importance sampling ratio. 

\begin{algorithm}[H]
\DontPrintSemicolon
\SetAlgoLined
\KwIn{exploration parameter $\epsilon \in [0, 1]$, stepsize $\alpha \in (0, 1)$, cost of choosing options $\bar c \geq 0$, weight for entropy regularization $\eta \geq 0$, discount factor $\gamma \leq 1$.}
\SetKwInput{AP}{Algorithm parameters}
\SetKwRepeat{Do}{do}{while}
Initialize $Q^n \in \bbR^{\cardS \times \cardH}, \forall n \in \calN, W^\pi \in \bbR^{\cardS \times |\calH^{adj}| \times \cardA}, W^\beta \in \bbR^{\cardS \times |\calH^{adj}|}, W^i \in \bbR^{\cardS \times |\calH^{adj}|}$ arbitrarily (e.g., $0$).\\
Sample initial task $N$ from $\calN$, initial state $S \sim d_0$.\\
\While{still time to train}
{
Sample an initiation set $\Omega \sim \prod_{h \in \calH} i(S, h)$.\\
$H \sim \epsilon\text{-greedy}(\{Q^N(S, h), \forall h \in \Omega\})$, $A \sim \pi(\cdot \mid S, H)$.\\
Take action $A$, observe next reward $R$, next state $S'$, episode termination signal $Z$.\\
Obtain $V^N(S')$ using $\eqref{eq: estimate of v}$.\\
$\beta \gets \beta(S', H)$.\\
$
\delta(h) \gets R - Q^N(S, h) + \begin{cases}
\gamma (1 - Z) V^N(S') & \text{w.p. } \beta\\
\gamma (1 - Z) Q^{N}(S', h) & \text{w.p. } 1 - \beta
\end{cases}, \forall h \in \calH
$.\\
\If {$H \in \calH^{adj}$}{
\For{$a = 1, 2, \dots, \actionspace$}{
$W^\pi(S, H, a) \overset{\alpha}{\gets} ((\mathbf{1}_{a=A} - \pi(a \mid S, H)) \delta(H) - \eta \pi(a \mid S, H) (\log \pi(a \mid S, H) + \text{Ent}(\pi(\cdot \mid S, H)))$.
}
$W^\beta(S', H) \overset{\alpha}{\gets} \gamma (Z - 1)  \beta (1 - \beta) \Big(Q^{N}(S', H) - V - \eta \log \frac{(1 - \beta)} {\beta}\Big)$.
}
Obtain $M^N(S, h)$ for all $h \in \calH^{adj}$ using \eqref{eq: final estimate of m}.\\
$i(h) \gets i(S, h)\ \forall h \in \calH^{adj}$.\\
$W^i(S, h) \overset{\alpha}{\gets} \gamma \beta^- \left(M^N(S, h) + \eta i(h) (1 - i(h)) \log \frac{1 - i(h)}{i(h)} \right),\ \forall h \in \calH^{adj}$.\\
$\rho(h) \gets \max\left(1, \frac{\pi(A \mid S, h)}{\pi(A \mid S, H)}\right)$.\\
$Q^{N}(S, h) \overset{\alpha}{\gets} \rho(h) \delta(h), \forall h \in \calH$ .\\
$\beta^- = \beta$.\\
\If {$Z = 1$} 
{
Sample $N$ from $\calN$.\\
Sample $S \sim d_0$.\\
$\beta^- = 1$.
}
}
\caption{The Fast Planning Option-Critic Algorithm}
\label{algo:On-Policy Actor-Critic + Q-Learning Algorithm}
\end{algorithm}

\section{Proofs}


\subsection{Gradient of the Objective}\label{sec: Gradient of the Objective}
For simplicity, in the proof we omit $\vtheta$ when writing functions that depend on it. These functions include $i_\vtheta, \pi_\vtheta$, $\beta_\vtheta$, $\mu_\vtheta$, $v_\vtheta$, $q_\vtheta$, $d_{\alpha, \gamma, \vtheta}$, $J(\vtheta)$ etc. We also omit it in the gradient operator $\nabla_\vtheta$. Finally, we omit the task index and the following arguments apply to all tasks.

By definition,
\begin{align*}
v(s) & = \sum_h \sum_{\Omega \in \calP(\calH)} \Pr(\Omega \mid s) \frac{\bfI(h \in \Omega)f(s, h)}{\sum_{\bar h \in \Omega} f(s, \bar h)} q(s, h) - c \sum_h i(s, h) \\
& = \sum_{\Omega \in \calP(\calH)} \Pr(\Omega \mid s) \sum_{h \in \Omega} \frac{f(s, h)}{\sum_{\bar h \in \Omega} f(s, \bar h)} q(s, h) - c \sum_h i(s, h) 
\end{align*}
Consider the gradient of the first term.
\begin{align*}
    & \nabla \sum_{\Omega \in \calP(\calH)} \Pr(\Omega \mid s) \sum_{h \in \Omega} \frac{f(s, h)}{\sum_{\bar h \in \Omega} f(s, \bar h)} q(s, h) \\ 
    & = \nabla \sum_{\Omega \in \calP(\calH)} \Pr(\Omega \mid s) \sum_{h \in \Omega} \mu_\Omega(h \mid s) q(s, h) \\ 
    & = \sum_{\Omega \in \calP(\calH)} \left ( \nabla \Pr(\Omega \mid s) \right) \sum_{h \in \Omega} \mu_\Omega(h \mid s) q(s, h) + \sum_{\Omega \in \calP(\calH)} \Pr(\Omega \mid s) \sum_{h \in \Omega} \nabla \mu_\Omega(h \mid s)  q(s, h) \\
    & + \sum_{\Omega \in \calP(\calH)} \Pr(\Omega \mid s) \sum_{h \in \Omega} \mu_\Omega(h \mid s) \nabla q(s, h) \\
    & = \sum_{\Omega \in \calP(\calH)} \Pr(\Omega \mid s) \sum_{h \in \Omega} \mu_\Omega(h \mid s) \left ( \nabla \log \Pr(\Omega \mid s) \right) q(s, h) + \sum_{\Omega \in \calP(\calH)} \Pr(\Omega \mid s) \sum_{h \in \Omega} \mu_\Omega(h \mid s) \left ( \nabla \log \mu_\Omega(h \mid s) \right ) q(s, h) \\
    & + \sum_{\Omega \in \calP(\calH)} \Pr(\Omega \mid s) \sum_{h \in \Omega} \mu_\Omega(h \mid s) \nabla q(s, h) \\
    & = \sum_{\Omega \in \calP(\calH)} \Pr(\Omega \mid s) \sum_{h \in \Omega} \mu_\Omega(h \mid s) \Bigg (\nabla \log \Pr(\Omega \mid s)  q(s, h) + \nabla \log \mu_\Omega(h \mid s) q(s, h) + \nabla q(s, h) \Bigg )
\end{align*}

\begin{align*}
    \nabla \log \Pr(\Omega \mid s) & = \nabla \log \prod_{\bar h \in \Omega} i(s, \bar h) \prod_{\bar h \not \in \Omega} (1 - i(s, \bar h))\\
    & = \nabla \sum_{\bar h \in \Omega} \log i(s, \bar h) + \nabla \sum_{\bar h \not \in \Omega} \log (1 - i(s, \bar h))\\
    & = \sum_{\bar h \in \Omega} \nabla \log i(s, \bar h) + \sum_{\bar h \not \in \Omega} \nabla \log (1 - i(s, \bar h))
\end{align*}


Therefore,
\begin{align*}
    & \nabla v(s) = \sum_{\Omega \in \calP(\calH)} \Pr(\Omega \mid s) \sum_{h \in \Omega} \mu_\Omega(h \mid s) \nabla q(s, h) \\
    & + \sum_{\Omega \in \calP(\calH)} \Pr(\Omega \mid s) \sum_{h \in \Omega} \mu_\Omega(h \mid s) \\
    & \Bigg (\Big ( \nabla \sum_{\bar h \in \Omega} \log i(s, \bar h) + \nabla \sum_{\bar h \not \in \Omega} \log (1 - i(s, \bar h)) + \nabla \log \mu_\Omega(h \mid s) \Big ) q(s, h) \Bigg ) \\
    & - c \sum_{h \in \calH} \nabla i(s, h)\\
    & = \sum_{h \in \calH} \mu(h \mid s) \nabla q(s, h) + m(s),
\end{align*}
where $\mu(h \mid s) \doteq \sum_{\Omega \in \calP(\calH)} \Pr(\Omega \mid s) \sum_{h \in \Omega} \mu_\Omega(h \mid s) $, and
\begin{align*}
   m(s) & \doteq \sum_{\Omega \in \calP(\calH)} \Pr(\Omega \mid s) \sum_{h \in \Omega} \mu_\Omega(h \mid s) \\
   & \Bigg (\Big ( \nabla \sum_{\bar h \in \Omega} \log i(s, \bar h) + \nabla \sum_{\bar h \not \in \Omega} \log (1 - i(s, \bar h)) + \nabla \log \mu_\Omega(h \mid s) \Big ) q(s, h) \Bigg ) - c \sum_{h \in \calH} \nabla i(s, h).
\end{align*}


\begin{align*}
    \nabla J & = \nabla \sum_{s} \alpha(s) v (s)\\
    & = \sum_{s} \alpha(s) \left (m(s) +  \sum_{h} \mu(s, h) \nabla q(s, h) \right )
\end{align*}

\begin{align}
    \nabla q  (s, h) & =  \nabla \sum_a \pi(a \mid s, h) q  (s, h, a) \nonumber\\
    & = \sum_a \left (\nabla \pi  (a \mid  s, h) q  (s, h, a) + \pi  (a \mid s, h) \nabla q  (s, h, a) \right ) \label{eq: grad_qso}
\end{align}

\begin{align}
    \nabla q  (s, h, a) & = \gamma \sum_{s'} p(s' \mid s, a) \nabla \left( \beta (s', h) v  (s') + (1 - \beta  (s', h)) q  (s', h) \right)  \nonumber \\
    & = \gamma \sum_{s'} p(s' \mid s, a) (\nabla q  (s', h) - \nabla (\beta  (s', h) ( q  (s', h) - v  (s')) ) \label{eq: grad_qsoa}
\end{align}

\begin{align}
    \nabla (\beta  (s', h) ( q  (s', h) - v  (s') )) & = \nabla \beta  (s', h)  (q  (s', h) - v  (s')) + \beta  (s', h) \nabla (q  (s', h) - v  (s')) \label{eq: grad_betaadv}
\end{align}

\begin{align*}
    \nabla (q  (s', h) - v  (s')) = \nabla q  (s', h) - m(s') - \sum_{h'} \mu(h' \mid s') \nabla q(s', h')
\end{align*}

substitute back to \eqref{eq: grad_betaadv}, we have
\begin{align*}
     & \nabla (\beta(s', h) ( q(s', h) - v(s')))\\
     & = \nabla \beta(s', h)  (q(s', h) - v(s')) + \beta(s', h) \Bigg ( \nabla q  (s', h) - m(s') - \sum_{h'} \mu(h' \mid s') \nabla q(s', h') \Bigg)
\end{align*}

substitute back to \eqref{eq: grad_qsoa}, we have

\begin{align*}
    & \nabla q(s, h, a) = \gamma \sum_{s'} p(s' \mid s, a) \Bigg ( \nabla q(s', h) \\
    & - \Bigg ( \nabla \beta(s', h)  (q(s', h) - v(s')) + \beta(s', h) \Bigg ( \nabla q  (s', h) - m(s') - \sum_{h} \mu(h' \mid s') \nabla q(s', h') \Bigg) \Bigg) \Bigg)
\end{align*}

substitute back to \eqref{eq: grad_qso}, we have
\begin{align}
    & \nabla q(s, h) \nonumber\\
    & = \sum_{a}  \nabla \pi(a \mid s, h)  q(s, h, a) + \gamma \sum_{a, s'} \pi(a \mid s, h)  p(s' \mid s, a) \Bigg ( \nabla q(s', h) \nonumber \\
    & - \Bigg ( \nabla \beta(s', h)  (q(s', h) - v(s')) + \beta(s', o) \Bigg ( \nabla q  (s', h) - m(s') - \sum_{h'} \mu(h' \mid s') \nabla q(s', h') \Bigg) \Bigg) \Bigg) \nonumber\\
    & = \sum_{a}  \nabla \pi(a \mid s, h)  q(s, h, a) \nonumber\\
    & + \gamma \sum_{a, s'} \pi(a \mid s, h) p(s' \mid s, a) \left ( - \nabla \beta(s', h)  (q(s', h) - v(s')) + \beta(s', h) m(s') \right) \nonumber \\
    & + \gamma \sum_{a, s'} \pi(a \mid s, h) p (s' \mid s, a) \left ( (1 - \beta(s', h) ) \nabla q(s', h) + \beta(s', h) \sum_{h'} \mu(h' \mid s')  \nabla q(s', h')  \right ) \label{eq: grad_qso_extended}
\end{align}

Let

\begin{align*}
    H(s, h)
    & \doteq \sum_{a} \nabla \pi(a \mid s, h) q(s, h, a) + \gamma \sum_{a, s'} \pi(a \mid s, h) p(s' \mid s, a) \\
    & \Bigg( - \nabla \beta(s', h)  (q(s', h) - v(s')) + \beta(s', h) m(s') \Bigg )
\end{align*}

Then we have \eqref{eq: grad_qso_extended} is equal to

\begin{align}
    & H(s, h) + \gamma \sum_{a, s'} \pi(a \mid s, h) p (s' \mid s, a) \left ( (1 - \beta(s', h)) \nabla q(s', h) + \beta(s', h) \sum_{h'} \mu(h' \mid s')  \nabla q(s', h') \right ) \nonumber \\
    & =  \sum_{a} \pi(a \mid s, h) \Bigg (H(s, h) + \gamma \sum_{s', h'} p(s' \mid s, a) \left ( (1 - \beta(s', h)) \bfI(h' = h) + \beta(s', h) \mu(h' \mid s') \right)  \nabla q(s', h') \Bigg) \label{recursive H(s, o)}
\end{align}

Now let's construct a new MDP by defining $(s, h)$ as a new state $\bar{s}$, $H(s, h)$ as the reward $\bar{r}(\bar{s})$ for state $\bar{s} = (s, h)$, $a$ as the action $\bar{a}$ and the transition probability $\bar{p}(\bar{s}' \mid \bar{s}, \bar{a}) = \Pr(s', h' \mid s, h, a) = p(s' \mid s, a) ((1 - \beta(s', h)) \bbI_{h' = h} + \beta(s', h) \mu(h' \mid s'))$. Defined a new policy $\bar{\pi}(\bar{a} \mid \bar{s}) = \pi(a \mid s, h)$ in the new MDP. Then \eqref{recursive H(s, o)} can be rewritten to

\begin{align*}
    \nabla q (\bar{s}) = \sum_{\bar{a}} \bar{\pi}(\bar{a} \mid \bar{s}) (r(\bar{s}) + \gamma \sum_{\bar{s}'} \bar{p}(\bar{s}' \mid \bar{s}, \bar{a}) \nabla q (\bar{s}))
\end{align*}

The above equation is a Bellman equation and we see that $\nabla q (s, h)$ is the value function for this new MDP for policy $\bar{\pi}$, i.e., $\bar{v}_{\bar{\pi}} = \nabla q (s, h)$

Furthermore write the above Bellman equation in vector form.

\begin{align*}
    \bar{\vv}_{\bar{\pi}} = (\mI - \gamma \bar{\mP}_{\bar{\pi}})^{-1} \bar{\vr}_{\bar{\pi}}
\end{align*}

With a little abuse of notation, define $\alpha (s, h) \doteq \alpha(s) \mu(h \mid s)$ and $\bar{\alpha} (\bar{s}) \doteq \alpha (s, h)$

\begin{align*}
    \sum_{s} \alpha(s) \sum_h \mu(h \mid s) \nabla q(s, h)
    & = \sum_{s, h} \alpha (s, h) \nabla q (s, h) \\
    & = \sum_{\bar{s}} \bar{\alpha} (\bar{s}) \bar{v}_{\bar{\pi}}(\bar{s})\\
    & = \bar{\valpha}^\top \bar{\vv}_{\bar{\pi}}\\
    & = \bar{\valpha}^\top (\mI - \gamma \bar{\mP}_{\bar{\pi}})^{-1} \bar{\vr}_{\bar{\pi}}
\end{align*}

$\bar{\vd} \doteq \bar{\valpha}_\mu^\top (\mI - \gamma \bar{\mP}_{\bar{\pi}})^{-1}$ is the $|\calS|$-vector of the $\bar{d}(\bar{s})$ for all $\bar{s} \in \bar{\calS}$, where $\bar{d}(\bar{s})$ is the discounted weighting of state $\bar{s}$ in the new MDP. According to our definition, it is also the discounted weighting of state-option pair $(s, h)$, $d_{\alpha, \mu}(s, h)$ in the old MDP.

\begin{align*}
    & = \vd_{\alpha}^\top \bar{\vr}_{\bar{\pi}} \\
    & = \sum_{\bar{s}} d_{\bar{\alpha}_\mu} (\bar{s}) \bar{r}_{\bar{\pi}} (\bar{s})\\
    & = \sum_{s, h} d_{\alpha, \mu} (s, h) H(s, h)\\
    & = \sum_{s, h} d_{\alpha, \mu} (s, h) \sum_{a} \nabla \pi(a \mid s, h) q(s, h, a) \\
    & + \gamma \sum_{a, s'} \pi(a \mid s, h) p(s' \mid s, a) \Bigg( - \nabla \beta(s', h)  (q(s', h) - v(s')) + \beta(s', h) m(s') \Bigg )
\end{align*}

Finally we have

\begin{align*}
    & \nabla \sum_s \alpha(s) v_\mu(s) = \sum_{s} \alpha(s) m(s) + \sum_{s, h} d_{\alpha, \mu} (s, h) \Bigg ( \sum_{a} \nabla \pi(a \mid s, h) q (s, h, a) \\
    & + \gamma \sum_{a, s'} \pi(a \mid s, h) p(s' \mid s, a) \left( - \nabla \beta(s', h)  (q(s', h) - v(s')) + \beta(s', h) m(s') \right ) \Bigg)
\end{align*}
\subsection{Derivative of the Entropy Term}
\begin{lemma}\label{lemma: entropy}
For any $s \in \statespace, h \in \calH, \bar a \in \actionspace$, 
\begin{align*}
    & \frac{\partial H(\pi(\cdot \mid s, h))}{\partial W^\pi(s, h, \bar a)} \\
    &= -\pi(\bar a \mid s, h) \log \pi(\bar a \mid s, h) - \pi(\bar a \mid s, h) H(\pi(\cdot \mid s, h)),\\
    & \frac{\partial H([\beta(s, h), 1 - \beta(s, h)])}{W^\beta(s, h)} \\
    &= \beta(s, h) (1 - \beta(s, h)) \log \frac{(1 - \beta(s, h))} {\beta(s, h)}.
\end{align*}
\end{lemma}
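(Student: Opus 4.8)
The plan is to establish each of the two identities by direct differentiation, handling the softmax and sigmoid parameterizations in turn. To keep the algebra transparent I would abbreviate $p_a \doteq \pi(a \mid s, h)$ and $w_a \doteq W^\pi(s, h, a)$ for the first identity, and $\beta \doteq \beta(s, h)$, $w \doteq W^\beta(s, h)$ for the second.

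For the softmax identity, the single ingredient I would invoke is the standard softmax Jacobian $\partial p_a / \partial w_{\bar a} = p_a(\bbI(a = \bar a) - p_{\bar a})$. Writing the entropy as $H = -\sum_a p_a \log p_a$ and differentiating term by term gives $\partial H / \partial w_{\bar a} = -\sum_a (\partial p_a/\partial w_{\bar a})(\log p_a + 1)$. The key observation is that $\sum_a \partial p_a / \partial w_{\bar a} = 0$, since $\sum_a p_a = 1$, so the additive ``$+1$'' contribution vanishes and only $-\sum_a (\partial p_a/\partial w_{\bar a})\log p_a$ remains. Substituting the Jacobian and separating the diagonal term $a = \bar a$ from the remainder $p_{\bar a}\sum_a p_a \log p_a$ collapses the expression to $-p_{\bar a}\log p_{\bar a} - p_{\bar a} H$, which is exactly the first claimed formula.

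For the sigmoid identity, the relevant distribution is the two-point distribution $[\beta, 1 - \beta]$ with $\beta = \text{sigmoid}(w)$, so I would apply the chain rule $dH/dw = (dH/d\beta)(d\beta/dw)$. The factor $d\beta/dw = \beta(1 - \beta)$ is the standard sigmoid derivative, and differentiating the binary entropy $H = -\beta\log\beta - (1 - \beta)\log(1 - \beta)$ in $\beta$ yields $dH/d\beta = \log((1 - \beta)/\beta)$ after the same cancellation of the $\pm 1$ constants. Multiplying the two factors gives $\beta(1 - \beta)\log((1 - \beta)/\beta)$, matching the second claim.

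Neither computation poses a genuine obstacle; the only step meriting care is the cancellation of the additive constant produced by differentiating $x\log x$, which relies on the normalization $\sum_a p_a = 1$ (respectively $\beta + (1 - \beta) = 1$). I would make that normalization step explicit so it is clear why the constant terms cancel rather than accumulate.
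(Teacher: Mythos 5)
Your proposal is correct and follows essentially the same route as the paper's proof: direct differentiation of the softmax and sigmoid parameterizations, with the additive constants from $\frac{d}{dx}(x\log x)$ cancelling by normalization. The only cosmetic difference is that you invoke the standard softmax Jacobian and make the cancellation $\sum_a \partial p_a/\partial w_{\bar a}=0$ explicit, whereas the paper re-derives the Jacobian via the quotient rule and cancels the constant terms after splitting off the $a=\bar a$ term.
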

\begin{proof}
Fix a $s \in \statespace$ and a $h \in \calH$, for simplicity, we use $p_a$ to denote $\pi(a \mid s, h)$ and use $w_a$ to denote $W^\pi(s, h, a)$.
\begin{align*}
    H(\pi(\cdot \mid s, h)) & = - \sum_a p_a \log p_a\\
    & = - \sum_a \frac{e^{w_a}}{\sum_{a'} e^{w_{a'}}} \log \frac{e^{w_a}}{\sum_{a'} e^{w_{a'}}}
\end{align*}

\begin{align*}
    & \frac{\partial H(\pi(\cdot \mid s, o))}{\partial w_{\bar a}} \\
    &= - \sum_a \frac{\partial \frac{e^{w_{a}}}{\sum_{a'} e^{w_{a'}}}}{\partial w_{\bar a}} \left(\log \frac{e^{w_{ a}}}{\sum_{a'} e^{w_{a'}}} + 1 \right)\\
    & = - \frac{e^{w_{\bar a}} \sum_{a'} e^{w_{a'}} - e^{w_{\bar a}} e^{w_{\bar a}}}{(\sum_{a'} e^{w_{ a'}})^2} \left(\log \frac{e^{w_{\bar a}}}{\sum_{a'} e^{w_{a'}}} + 1 \right)\\
    & - \sum_{a \neq \bar a} \frac{0 - e^{w_{a}}e^{w_{\bar a}}}{(\sum_{a'} e^{w_{a'}})^2} \left(\log \frac{e^{w_{a}}}{\sum_{a'} e^{w_{a'}}} + 1 \right)\\
    &= - (p_{\bar a} - p_{\bar a} p_{\bar a} )(\log p_{\bar a} + 1) + \sum_{a \neq \bar a} p_a p_{\bar a} (\log p_a + 1)\\
    & = - (p_{\bar a} - p_{\bar a} p_{\bar a} )\log p_{\bar a}  + \sum_{a \neq \bar a} p_a p_{\bar a} \log p_a \\
    & = - p_{\bar a}(1 - p_{\bar a} )\log p_{\bar a}  + \sum_{a \neq \bar a} p_a p_{\bar a} \log p_a \\
    & = - p_{\bar a}(1 - p_{\bar a} )\log p_{\bar a}  + \sum_{a} (p_a p_{\bar a} \log p_a ) - p_{\bar a} p_{\bar a} \log p_{\bar a} \\
    & = -p_{\bar a} \log p_{\bar a} - p_{\bar a} H(\pi(\cdot \mid s, h))
\end{align*}

Fix a $s \in \statespace$ and a $h \in \calH$, for simplicity, We use $p$ to denote $\beta(s, h)$ and use $w$ to denote $W^\beta(s, h)$.

\begin{align*}
    & \frac{\partial H([p, 1 - p])}{\partial w}\\
    &= \frac{\partial (- p \log p - (1 - p) \log (1 - p))}{\partial w}\\
    & = - \frac{\partial p}{\partial w} \log p - \frac{\partial p}{\partial w} - \frac{\partial (1 - p)}{\partial w} \log (1 - p) - \frac{\partial (1 - p)}{\partial w}\\
    & = - \frac{\partial p}{\partial w} ( \log p + 1) + \frac{\partial p}{\partial w} ( \log (1 - p) + 1)\\
    & = - \frac{\partial p}{\partial w} ( \log p - \log (1 - p)) \\
    & = - \frac{\partial p}{\partial w}  \log \frac{p}{(1 - p)} \\
    & = p (1 - p) \log \frac{(1 - p)} {p}
\end{align*}
\end{proof}

\subsection{Details of Experiments}\label{app: Details of Experiments}

\subsubsection{Testing Phase Details}
The testing phase consists of two stages: a model learning stage and a planning stage. In the model learning stage, we fix the learned options from the training phase and let the agent learn an option model in the training/testing tasks. In the planning stage, we apply option-value iteration with the learned model to obtain option values. Greedy meta-policies are then derived from the option value estimates.

Specifically, in the model learning stage, the agent interacts with the environment and learns option models for both training and testing tasks using the intra-option model learning algorithm introduced by Sutton, Precup, \& Singh (1999). The model learning stage lasts for $1000000$ steps and produces a reward model $r^n(s, o)$ and a dynamics model $p^n(s' \mid s, o)$ for each training task or testing task $n$. The planning stage involves applying the option-value iteration algorithm to solve either a set of training tasks or a set of testing tasks, which are similar to the training ones. Before applying planning, we first sample for each state $s$, a set of options, $\Omega(s)$ that planning algorithm may choose from. Then for each iteration, we apply the option-value iteration update
\begin{align*}
    Q^n(s, o) \gets r^n(s, o) + \sum_{s'} \gamma p^n(s' \mid s, o) \max_{o' \in \Omega(s')}Q^n(s', o'),
\end{align*}
for all $n \in \calN$, $s \in \calS$, and $o \in \Omega(s)$.
We then record the value error averaged over all states
\begin{align*}
    \text{Error} \doteq \frac{1}{|\calN| |\calS|}\sum_n \sum_s \bar v_*(s) - \argmax_{o \in \Omega(s)} Q(n, s, o).
\end{align*}
Option-value iteration is terminated when the average value error is less than $0.1$. Suppose that $k$ iterations are used before termination. The total elementary operations used to solve all tasks is then
\begin{align*}
    k \sum_s |\Omega(s)| \cardS |\calN|.
\end{align*}

\subsubsection{Tested Parameter Settings}

The tested parameters in the experiment presented in \cref{sec: FPOC} are summarized in \cref{tab:FPOC}.
\begin{table}[h]
    \centering
    \begin{tabular}{|c|c|}
        Number of adjustable options $k$ & $2, 4, 8$\\
        Cost of choosing options $\bar c$ & $0, 0.2, 0.4, 0.6, 0.8, 1$\\
        Entropy regularization weight $\eta$ & $0, 0.0025, 0.05, 0.1, 0.2, 0.4$\\
        Step-size $\alpha$ & $0.01$\\
        Exploration parameter $\epsilon$ & $0.1$\\
        Discount factor $\gamma$ & $1$
    \end{tabular}
    \caption{Tested Parameters for MAOC and FPOC.}
    \label{tab:FPOC}
\end{table}


\subsubsection{Best Three Parameter Settings}
We rank parameter settings by their resulting compound return averaged over the last five times evaluation and 10 runs in the training phase. 

The best three parameter settings for MAOC are summarized in \cref{tab:MAOC best five parameter settings}. The best parameter setting with two adjustable options is used to produce the learning curve shown in \cref{fig: MAOC learning curve}. 
The best three parameter settings for FPOC are summarized in \cref{tab:FPOC best five parameter settings}. It can be seen that in almost all cases, the choice of $\bar c$ is $0.2$, which is equal to $c$. This shows that, given the problem parameter $c$, one can just set $\bar c$ to be $c$ and does not perform a parameter search over $\bar c$.

\begin{table}[h]
    \centering
    \begin{tabular}{|c|c|c|c|}
        $k$ & $\bar c$ & $\eta$ & Estimate of $J$ \\
        $2$ & $0.2$ & $0.05$ & $- 13.33$\\
        $2$ & $0.2$ & $0.1$ &$- 13.33$\\
        $2$ & $0.2$ & $0.0025$ & $- 13.46$ \\
        $4$ & $0.2$ & $0.05$ & $- 13.23$\\
        $4$ & $0.2$ & $0.1$ & $-13.32$ \\
        $4$ & $0.2$ & $0.2$ & $- 13.37$ \\
        $8$ & $0.2$ & $0.05$ & $- 13.39$\\
        $8$ & $0.2$ & $0.1$ & $- 13.58$\\
        $8$ & $0.2$ & $0.2$ & $- 13.65$
    \end{tabular}
    \caption{MAOC's best three parameter settings with 2, 4, and 8 adjustable options.}
    \label{tab:MAOC best five parameter settings}
\end{table}

\begin{table}[h]
    \centering
    \begin{tabular}{|c|c|c|c|}
        $k$ & $\bar c$ & $\eta$ & Estimate of $J$ \\
        $2$ & $0.2$ & $0.05$ & $- 13.15$\\
        $2$ & $0.2$ & $0.1$ &$- 13.16$\\
        $2$ & $0.4$ & $0.2$ & $- 13.46$ \\
        $4$ & $0.2$ & $0.05$ & $- 12.62$\\
        $4$ & $0.2$ & $0.0025$ & $-12.78$ \\
        $4$ & $0.2$ & $0$ & $- 12.79$ \\
        $8$ & $0.2$ & $0$ & $- 12.21$\\
        $8$ & $0.2$ & $0.0025$ & $- 12.23$\\
        $8$ & $0.2$ & $0.05$ & $- 12.34$
    \end{tabular}
    \caption{FPOC's best three parameter settings with 2, 4, and 8 adjustable options.}
    \label{tab:FPOC best five parameter settings}
\end{table}


\subsubsection{Parameter Study}

To understand the MAOC and the FPOC algorithms' sensitivity of their parameters, we vary each parameter and choose other parameters to be best and plot the resulting learning curves in \cref{fig: parameter study}. 

\begin{figure*}[h]
\begin{subfigure}{0.5\textwidth}
    \centering
    \includegraphics[width=\textwidth]{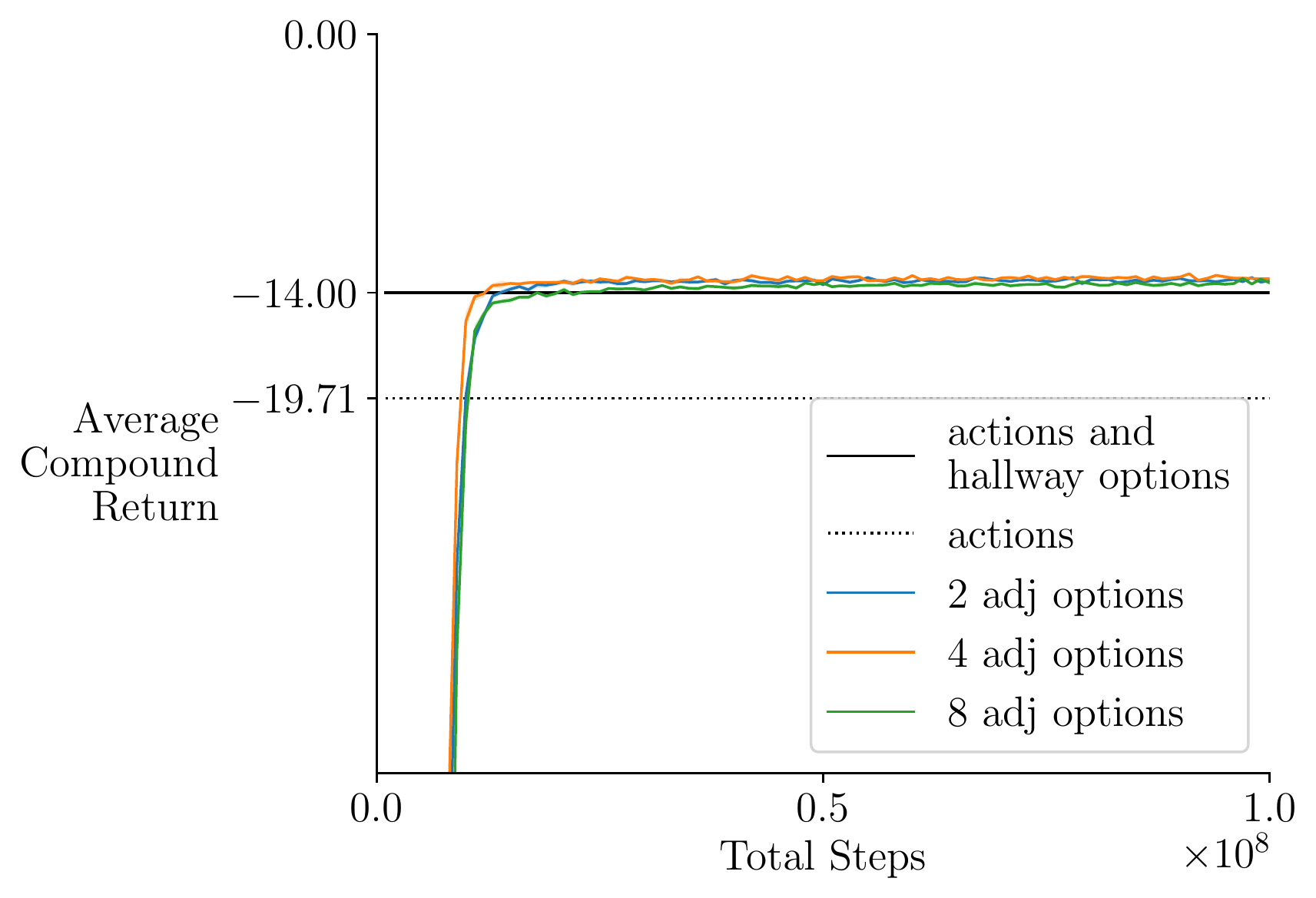}
    \caption{Number of adjustable options (MAOC)}
    \label{fig: MAOC adj options study}
\end{subfigure}%
\begin{subfigure}{0.5\textwidth}
    \centering
    \includegraphics[width=\textwidth]{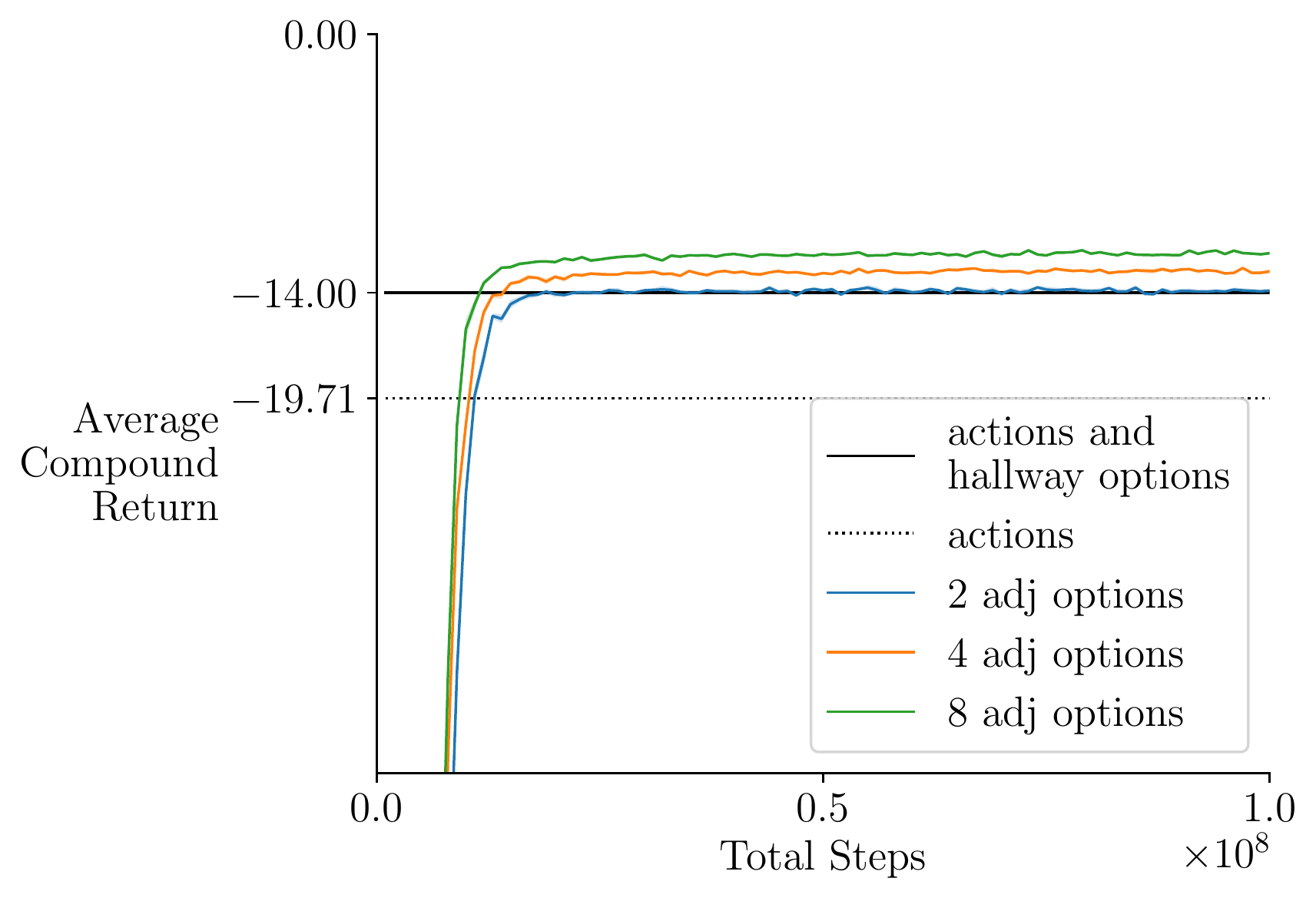}
    \caption{Number of adjustable options (FPOC)}
    \label{fig: FPOC adj options study}
\end{subfigure}
\begin{subfigure}{0.5\textwidth}
    \centering
    \includegraphics[width=\textwidth]{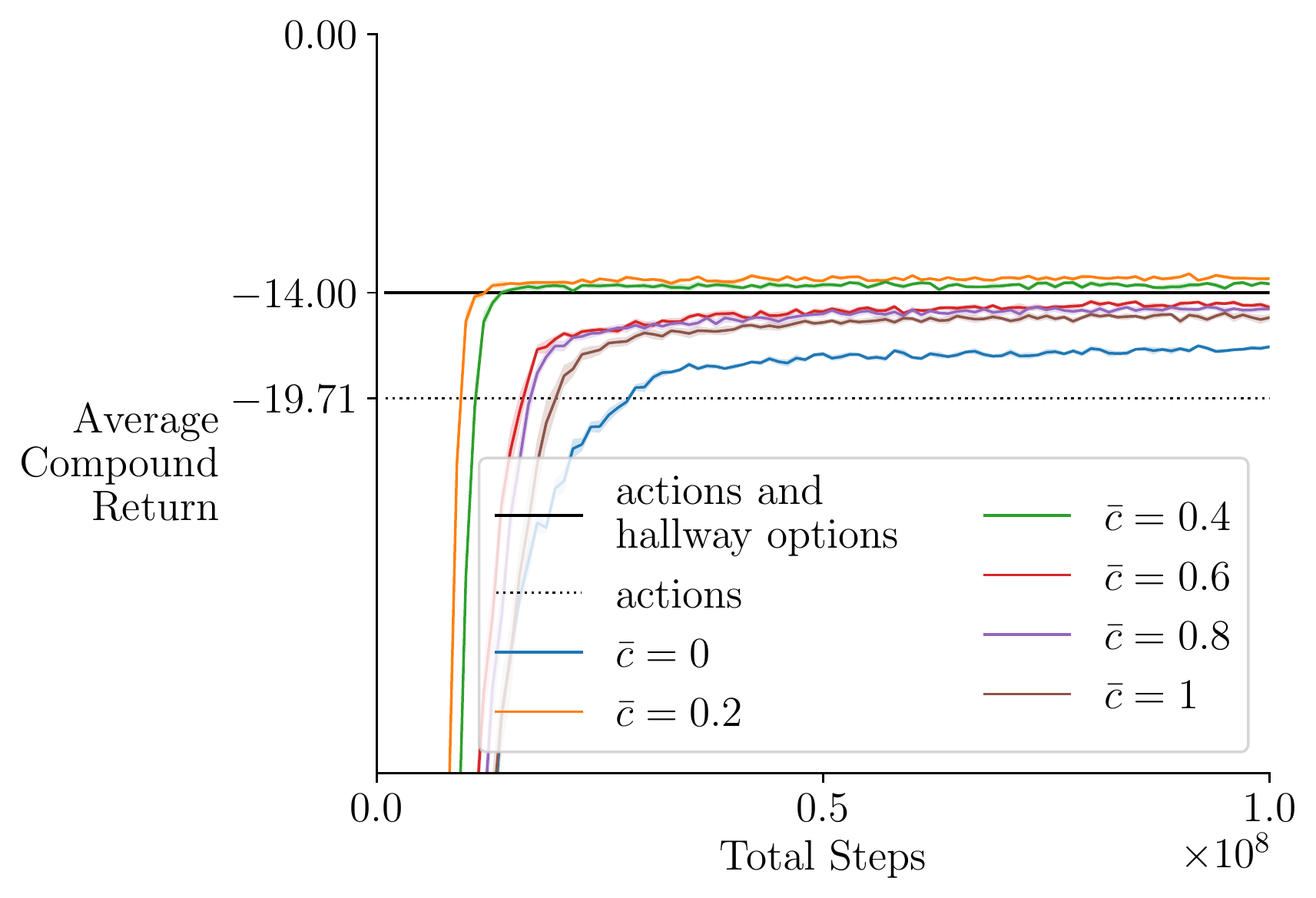}
    \caption{$\bar c$ (MAOC)}
    \label{fig: MAOC c study}
\end{subfigure}%
\begin{subfigure}{0.5\textwidth}
    \centering
    \includegraphics[width=\textwidth]{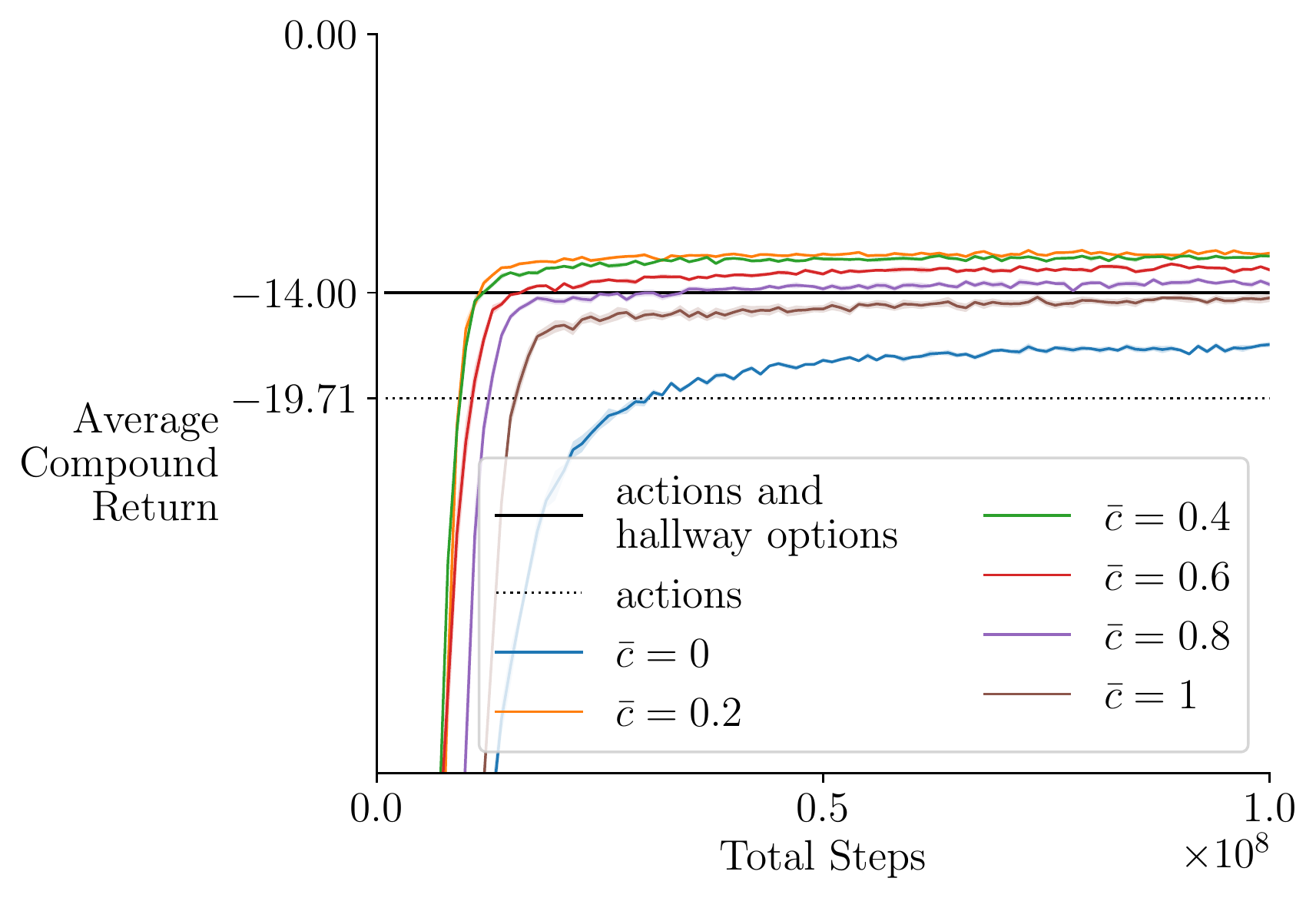}
    \caption{$\bar c$ (FPOC)}
    \label{fig: FPOC c study}
\end{subfigure}
\begin{subfigure}{0.5\textwidth}
    \centering
    \includegraphics[width=\textwidth]{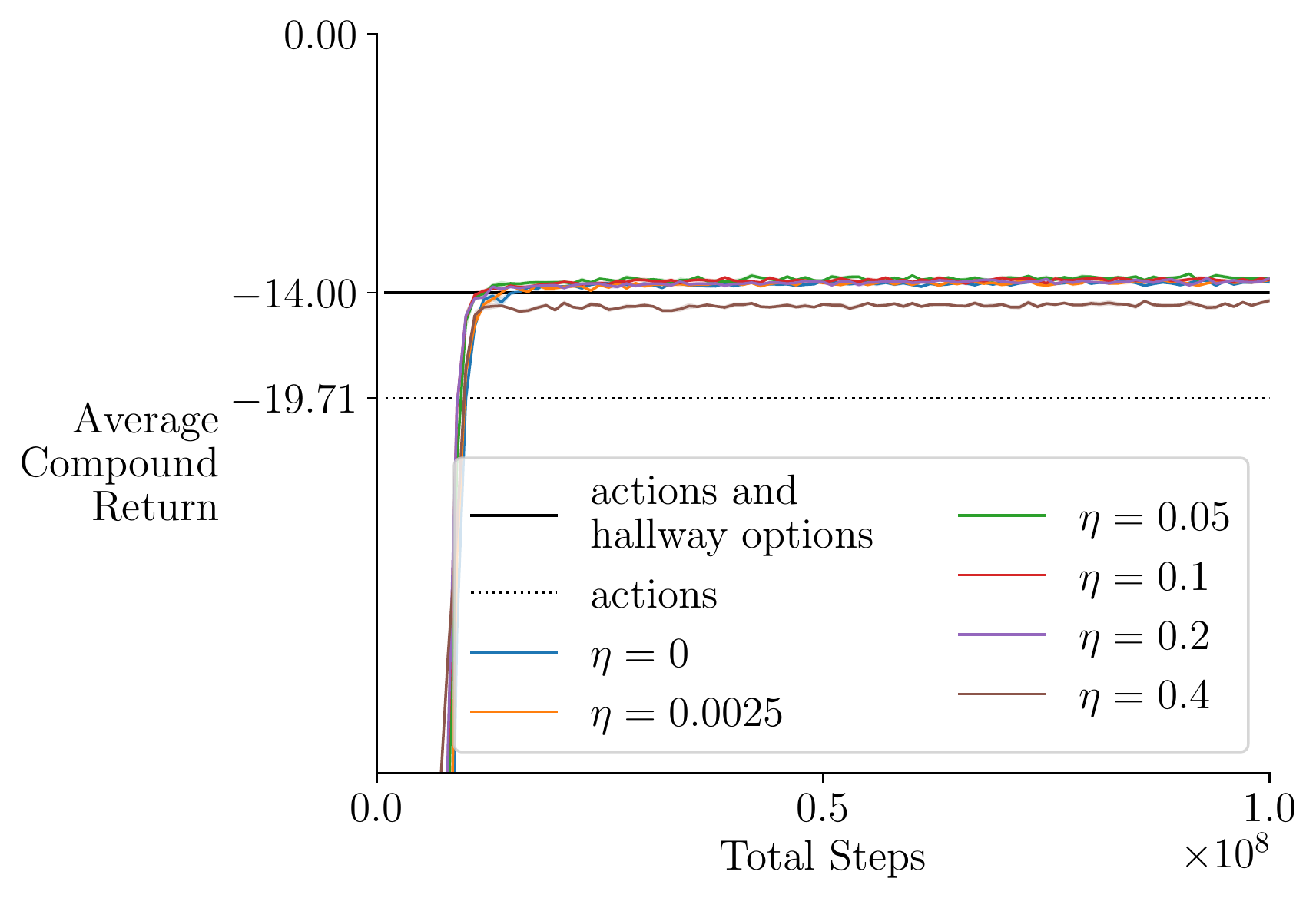}
    \caption{$\eta$ (MAOC)}
    \label{fig: MAOC eta study}
\end{subfigure}%
\begin{subfigure}{0.5\textwidth}
    \centering
    \includegraphics[width=\textwidth]{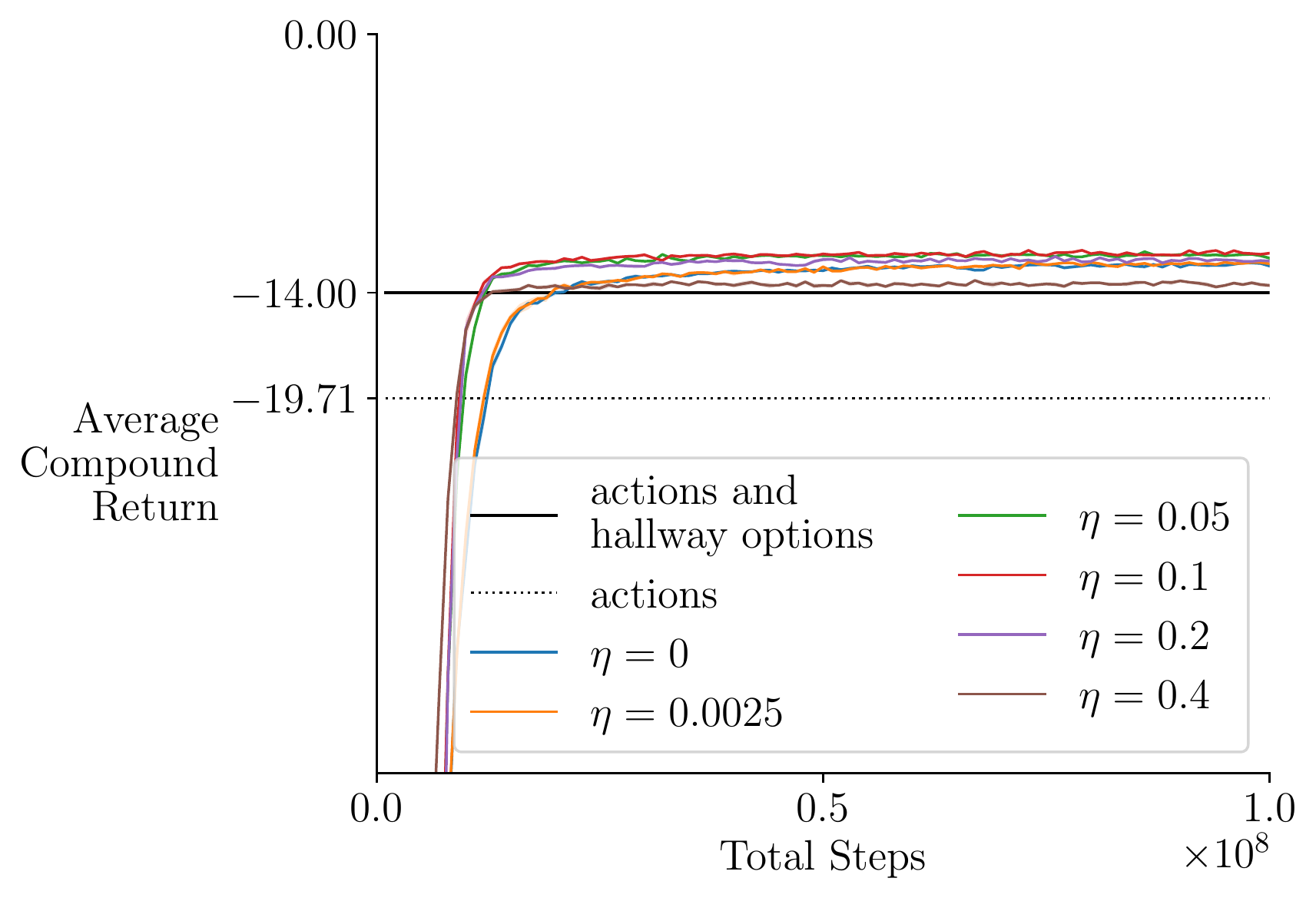}
    \caption{$\eta$ (FPOC)}
    \label{fig: FPOC eta study}
\end{subfigure}
\caption{
The MAOC and FPOC algorithms' sensitivity to their parameters. In each sub-plot, we show the learning curves produced with different values of one of the algorithm's parameters. Other parameters are chosen to be the best. The $x$-axis is the number of time-steps, the $y$-axis is the compound return with $c = 0.2$, averaged over $500$ evaluation episodes. a) For MAOC, the algorithm's asymptotic performance is not sensitive to the choice of number of options. b) FPOC with more adjustable options achieve a better objective value. c) The best $\bar c$ is $0.2$, which, unsurprisingly, equals to $c$. d) Again, the best $\bar c$ is $0.2$. e) No clear improvement with the entropy term. f) The entropy term plays a more important role for FPOC. With $\eta = 0.1$, the algorithm not only achieves faster learning but also obtains a slightly higher objective value eventually.
}
\label{fig: parameter study}
\end{figure*}

\subsubsection{Relation Between $J$ (with $c = 0.2$) and Elementary Operations Used by Option-Value Iteration}
We show in \cref{fig: MAOC study planning iterations vs estimate of J} the relation between the average compound return (with $c = 0.2$) estimated in the training tasks and the number of elementary required to solve the training or testing tasks using the option-value iteration algorithm given a perfect model of the learned options. We show the same figure for FPOC in \cref{fig: FPOC study planning iterations vs estimate of J}.

\begin{figure*}[h]
\begin{subfigure}{0.5\textwidth}
    \centering
    \includegraphics[width=\textwidth]{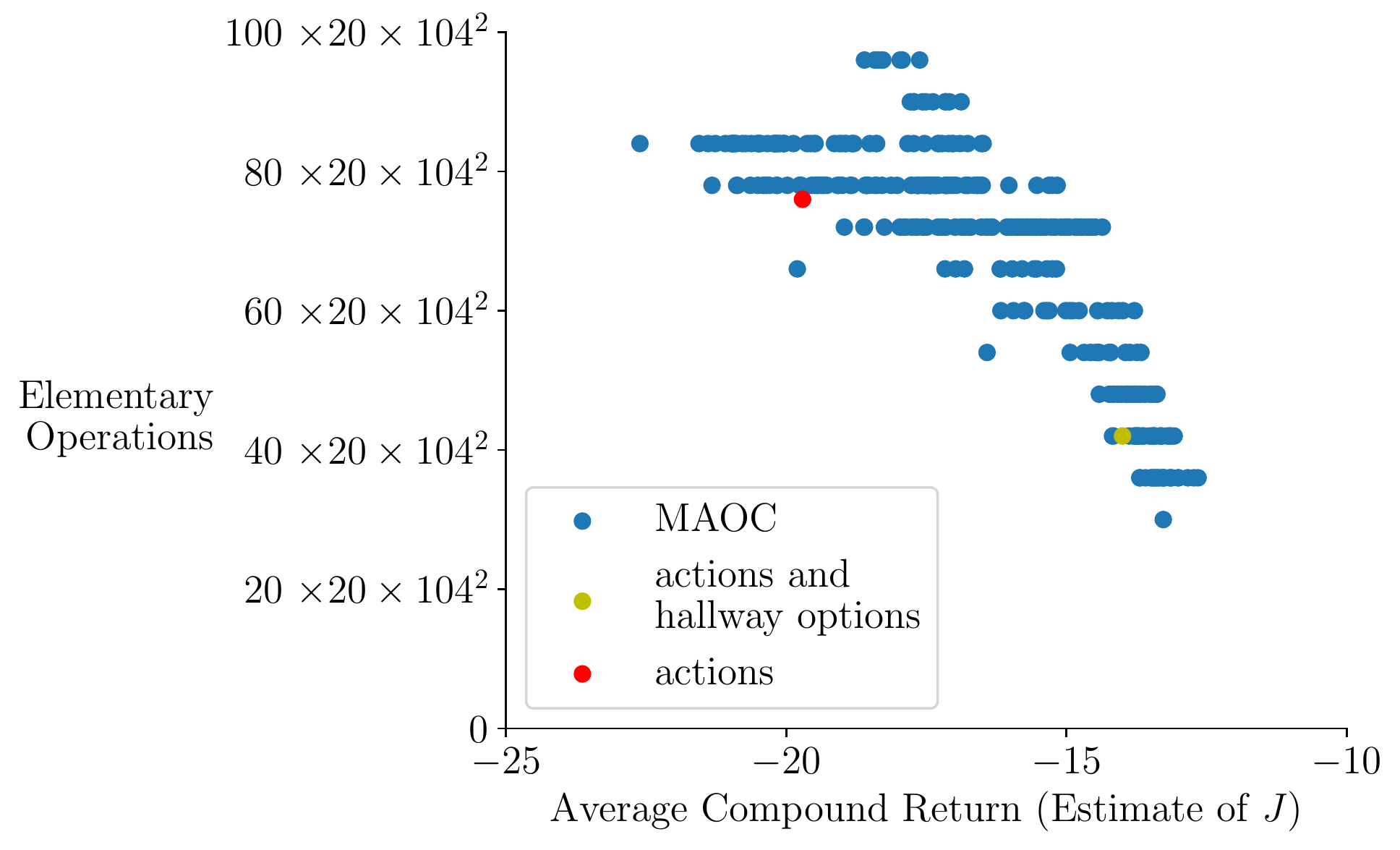}
    \caption{$k = 2$, Training Tasks}
\end{subfigure}%
\begin{subfigure}{0.5\textwidth}
    \centering
    \includegraphics[width=\textwidth]{figures/MAOC_planning_test_2_adj_options.pdf}
    \caption{$k = 2$, Testing Tasks}
\end{subfigure}
\begin{subfigure}{0.5\textwidth}
    \centering
    \includegraphics[width=\textwidth]{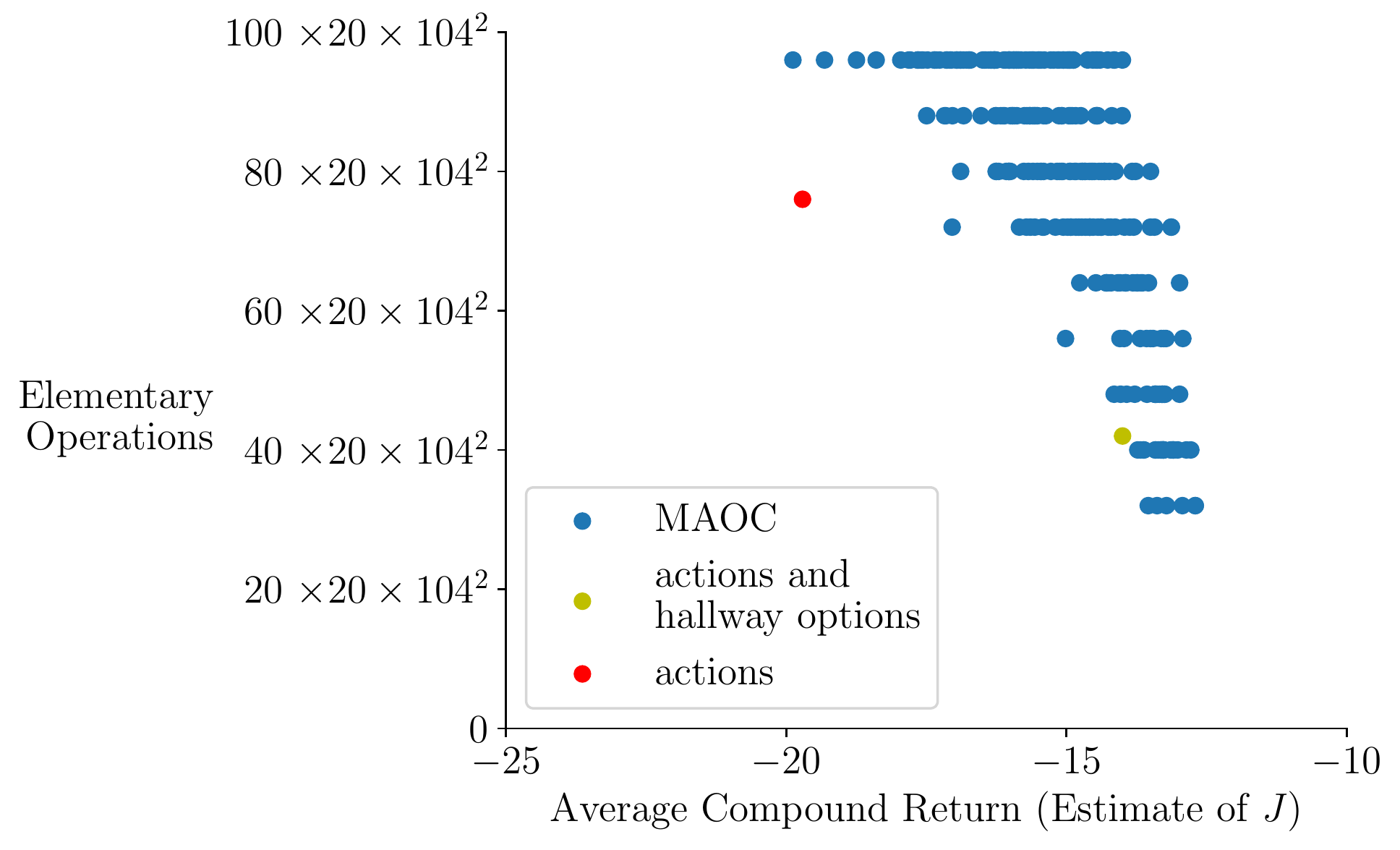}
    \caption{$k = 4$, Training Tasks}
\end{subfigure}%
\begin{subfigure}{0.5\textwidth}
    \centering
    \includegraphics[width=\textwidth]{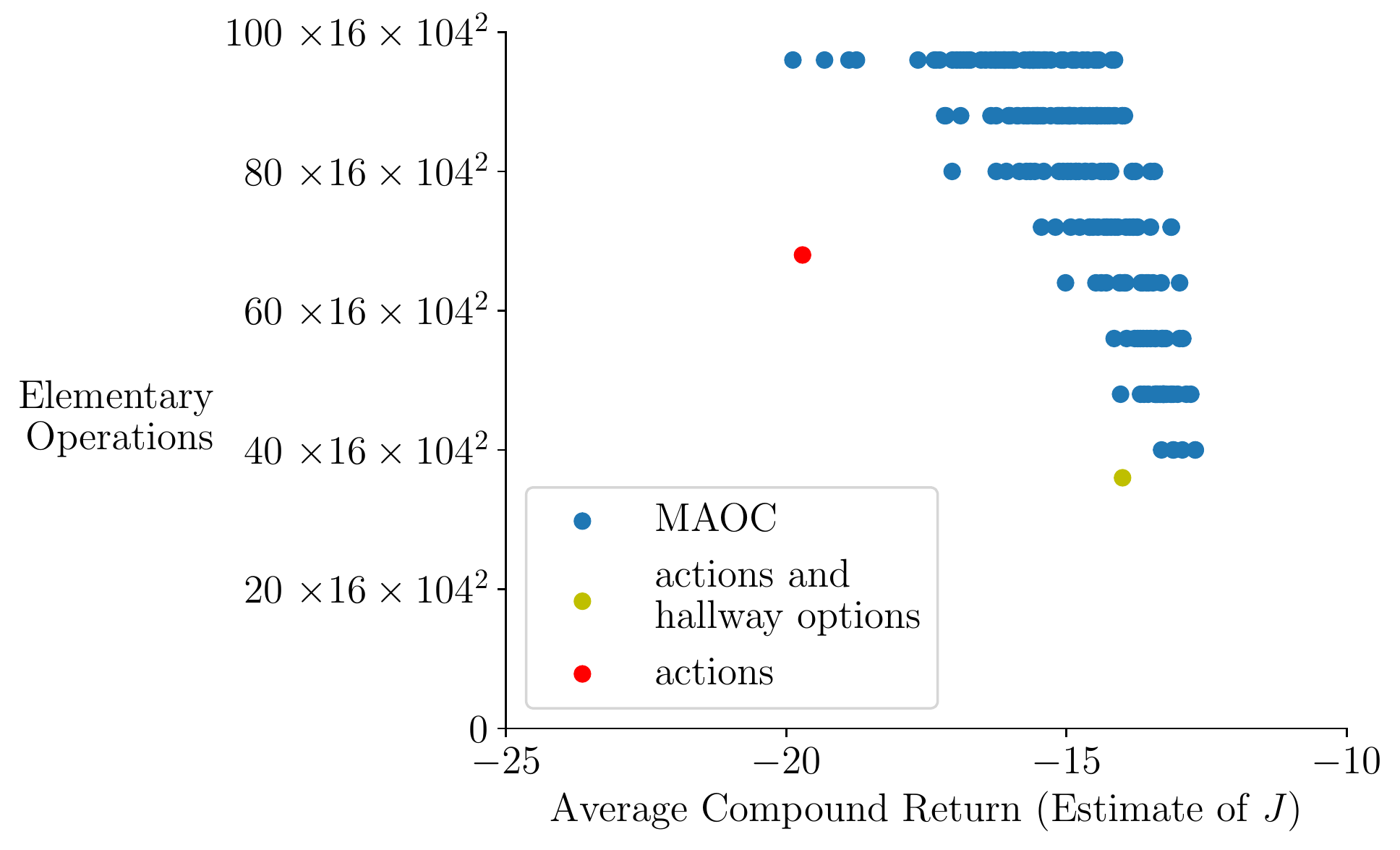}
    \caption{$k = 4$, Testing Tasks}
\end{subfigure}
\begin{subfigure}{0.5\textwidth}
    \centering
    \includegraphics[width=\textwidth]{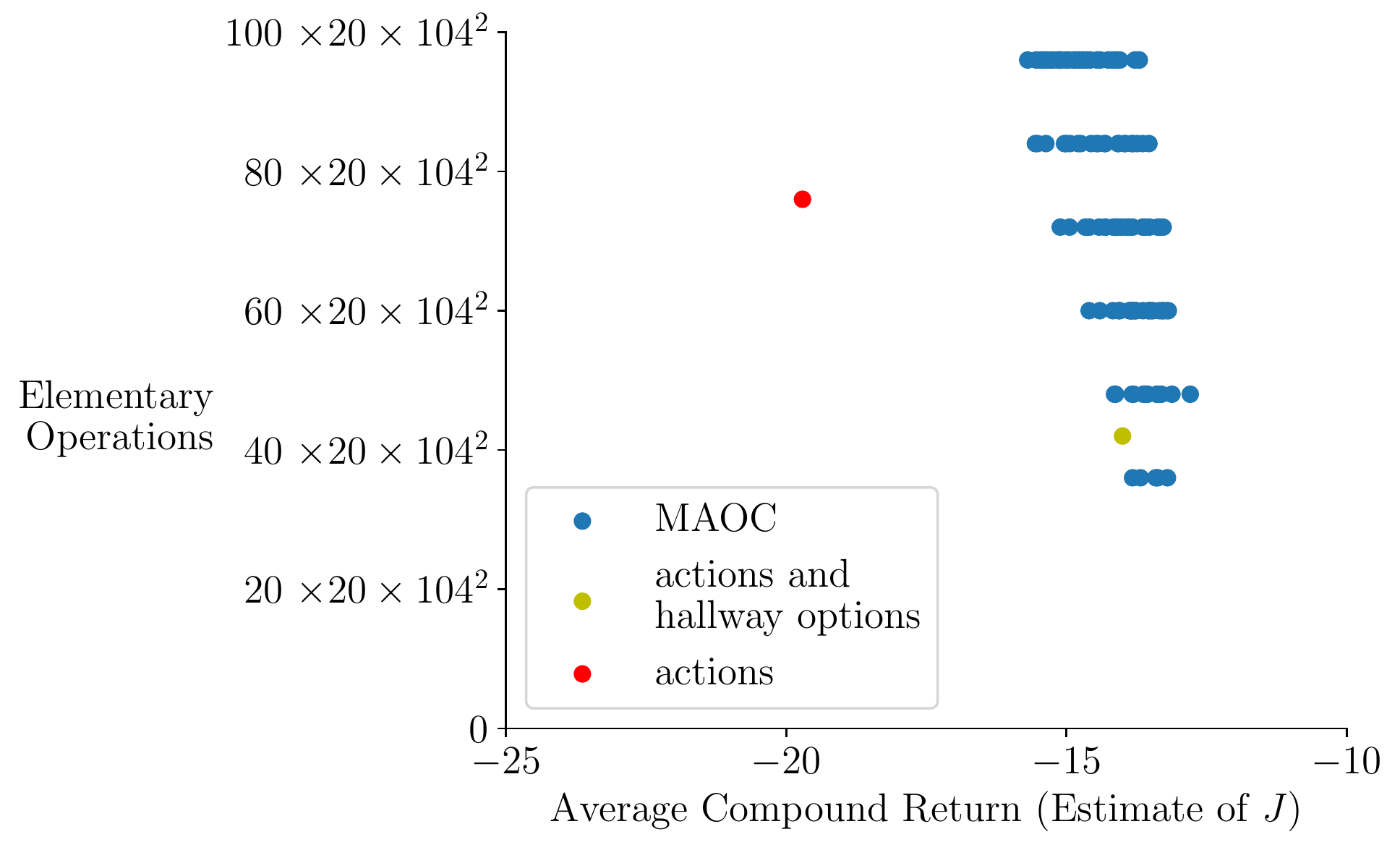}
    \caption{$k = 8$, Training Tasks}
\end{subfigure}%
\begin{subfigure}{0.5\textwidth}
    \centering
    \includegraphics[width=\textwidth]{figures/MAOC_planning_test_8_adj_options.pdf}
    \caption{$k = 8$, Testing Tasks}
\end{subfigure}%
\caption{The number of elementary operations used to achieve near-optimal performance in the set of \emph{training} and the set of \emph{testing} tasks by the MAOC algorithm.
}
\label{fig: MAOC study planning iterations vs estimate of J}
\end{figure*}

\begin{figure*}[h]
\begin{subfigure}{0.5\textwidth}
    \centering
    \includegraphics[width=\textwidth]{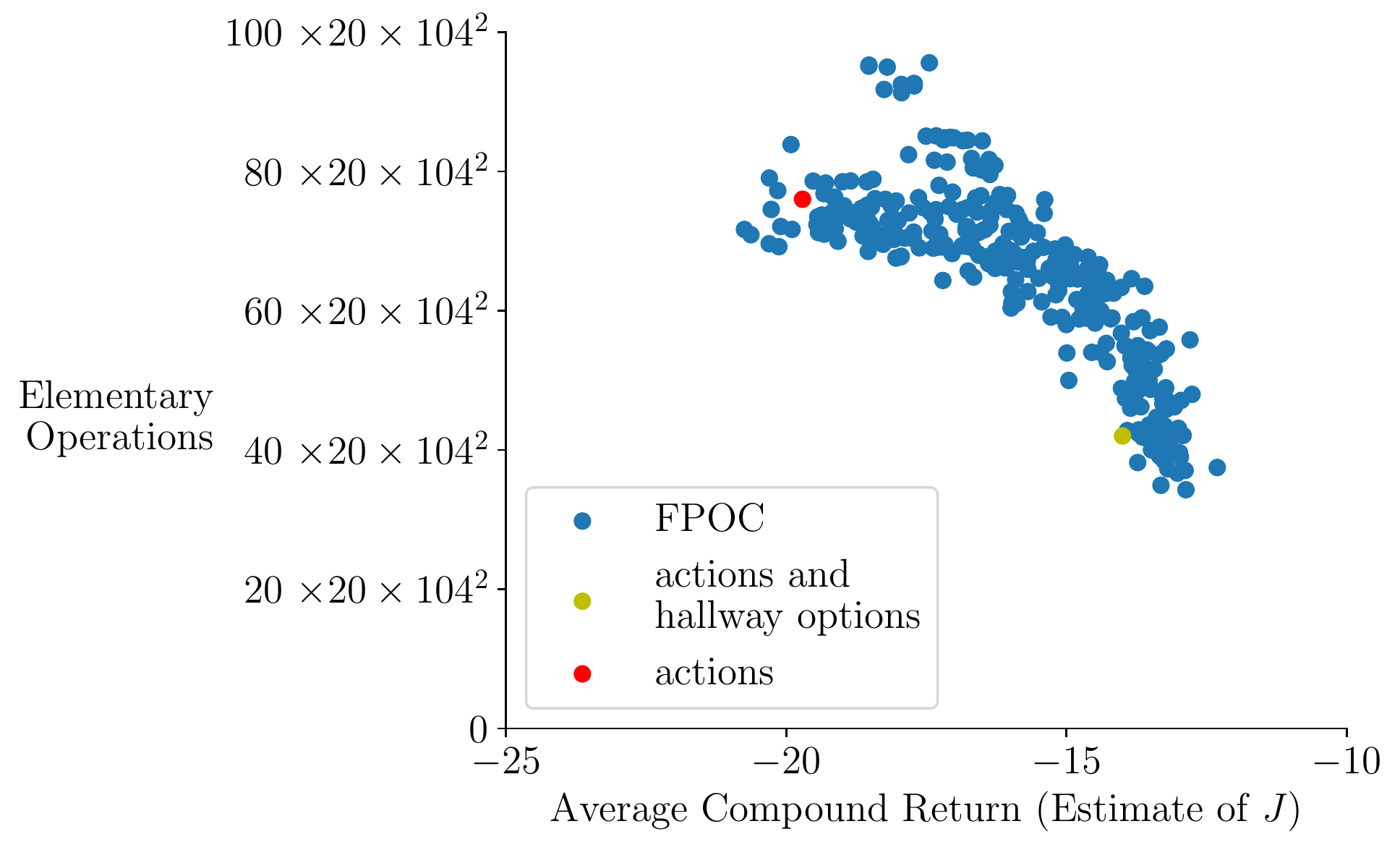}
    \caption{$k = 2$, Training Tasks}
\end{subfigure}%
\begin{subfigure}{0.5\textwidth}
    \centering
    \includegraphics[width=\textwidth]{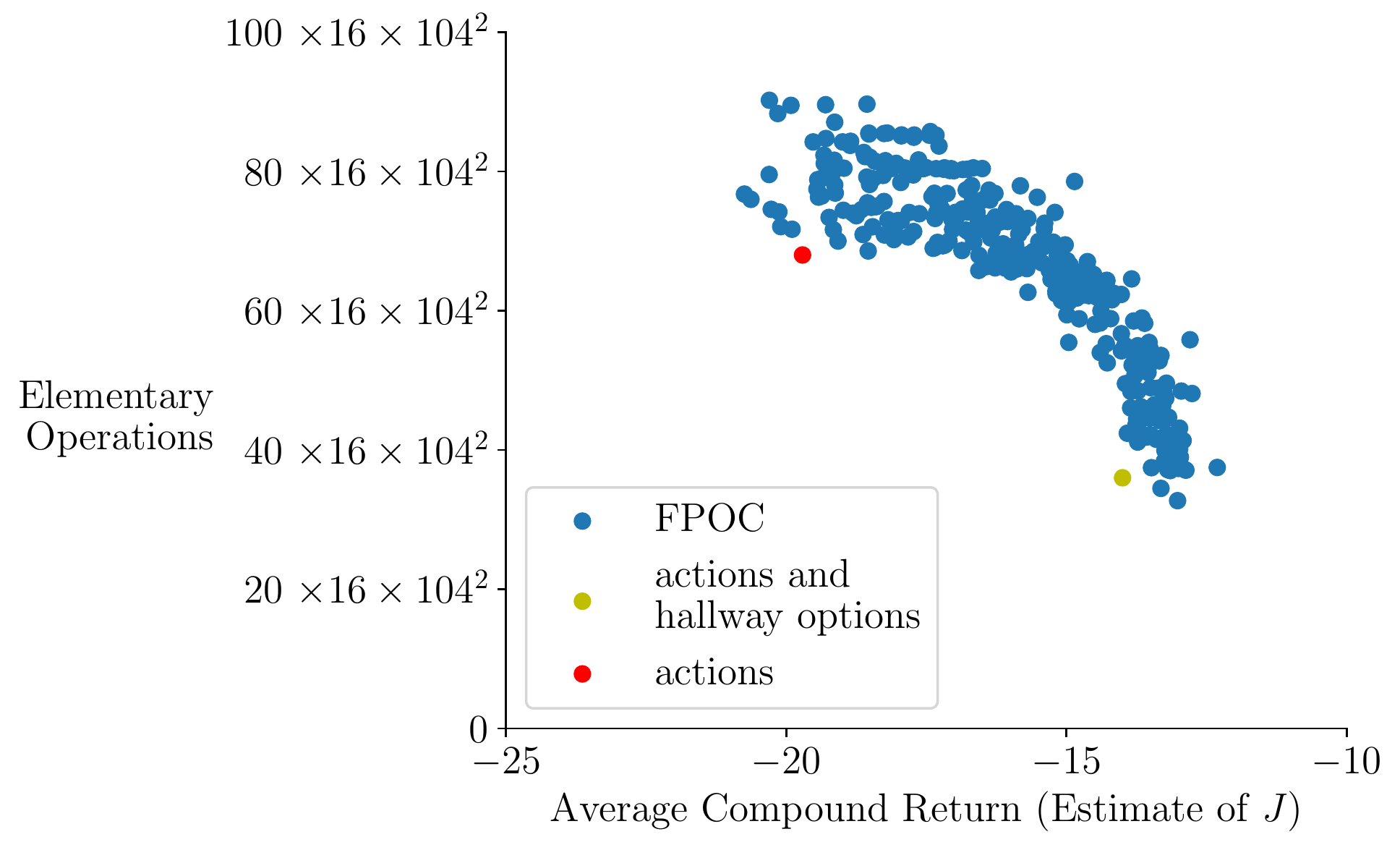}
    \caption{$k = 2$, Testing Tasks}
\end{subfigure}
\begin{subfigure}{0.5\textwidth}
    \centering
    \includegraphics[width=\textwidth]{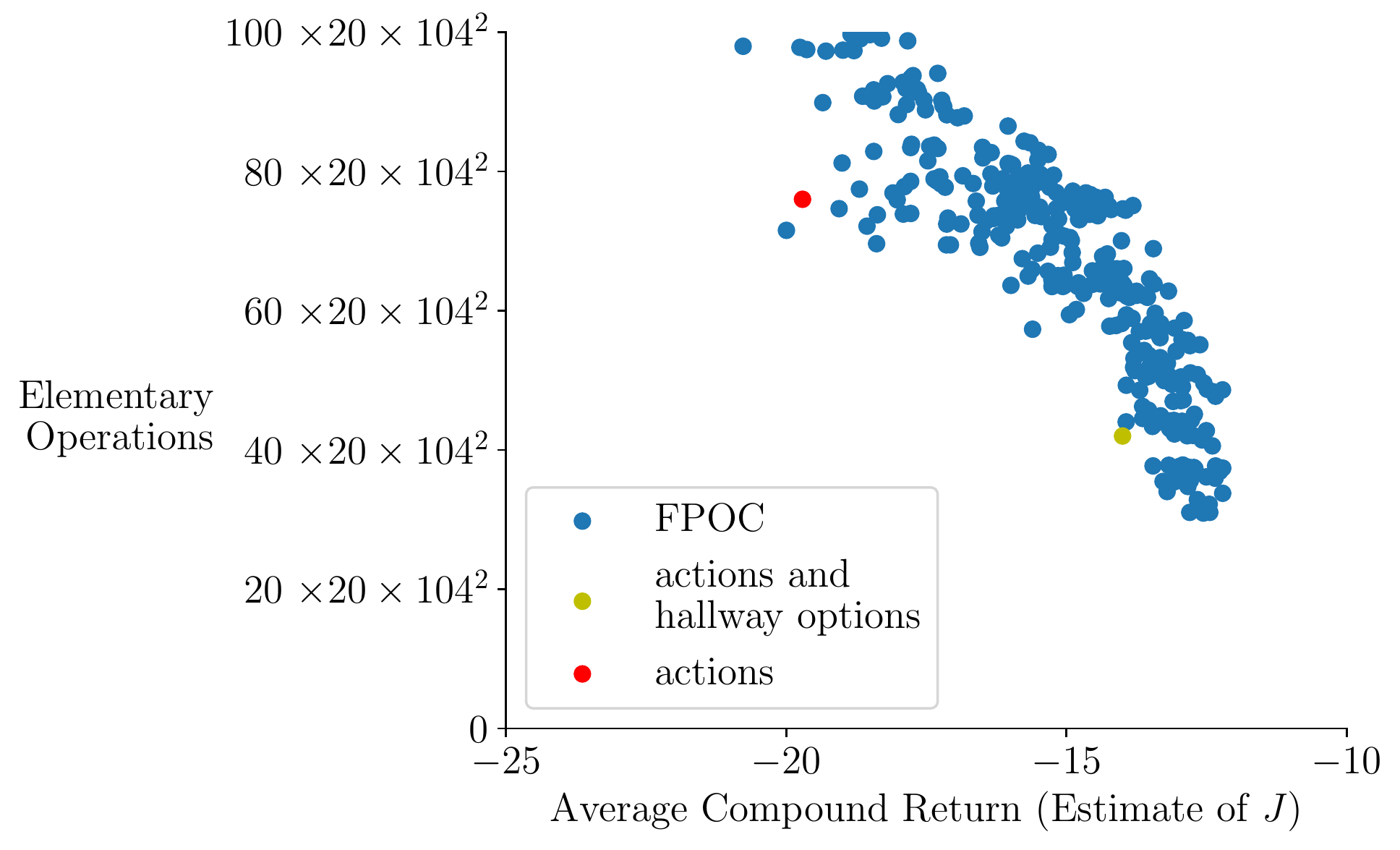}
    \caption{$k = 4$, Training Tasks}
\end{subfigure}%
\begin{subfigure}{0.5\textwidth}
    \centering
    \includegraphics[width=\textwidth]{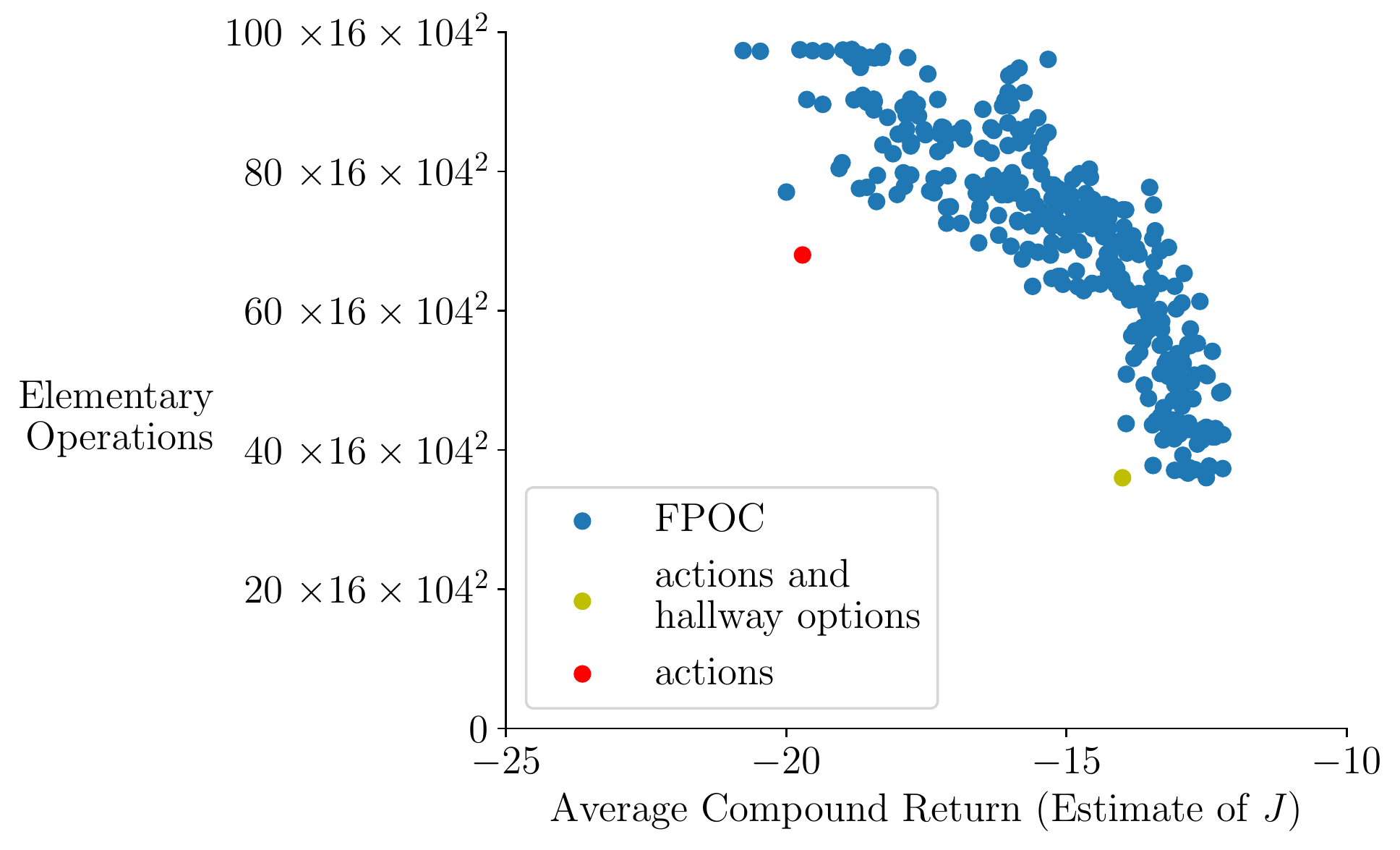}
    \caption{$k = 4$, Testing Tasks}
\end{subfigure}
\begin{subfigure}{0.5\textwidth}
    \centering
    \includegraphics[width=\textwidth]{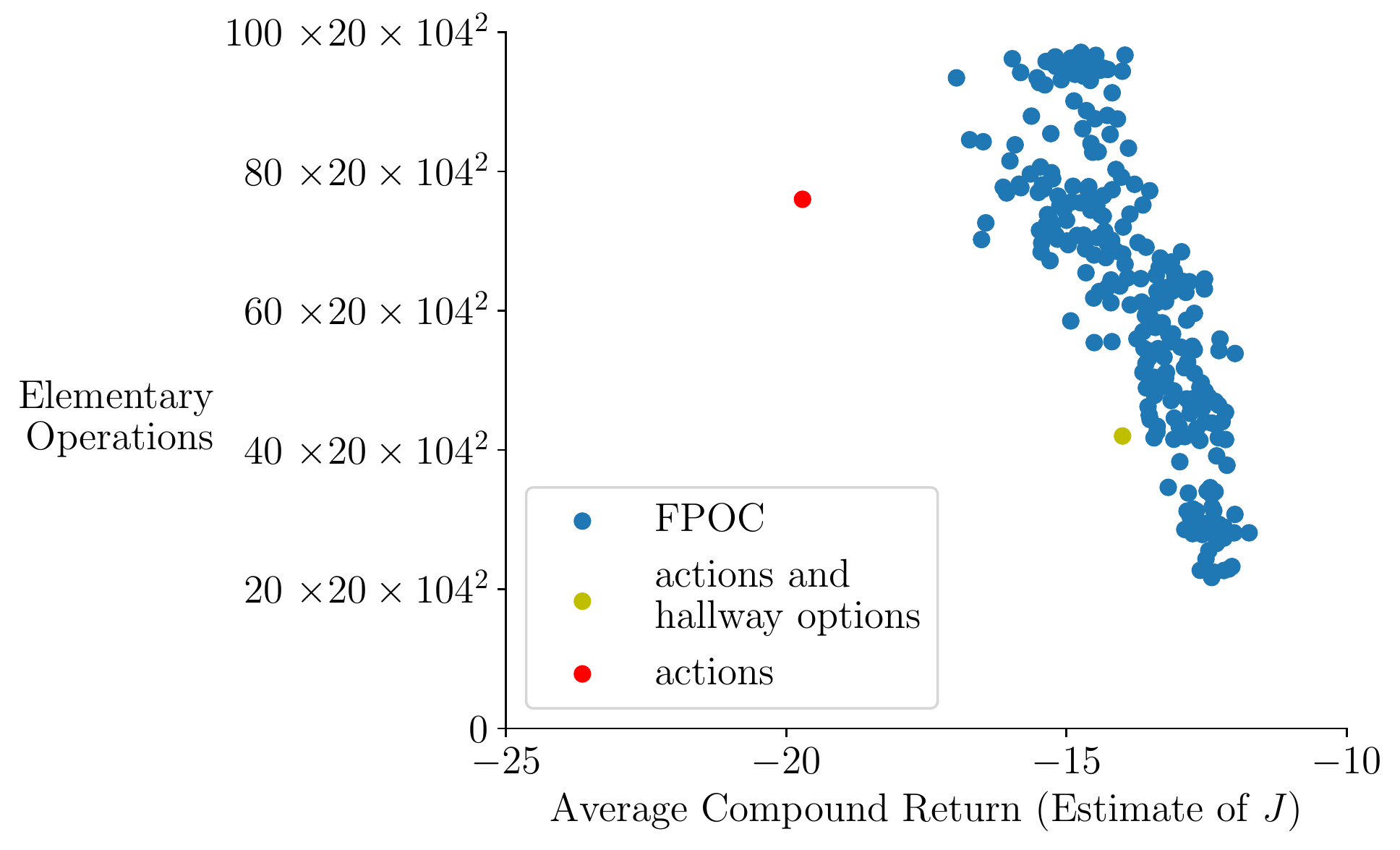}
    \caption{$k = 8$, Training Tasks}
\end{subfigure}%
\begin{subfigure}{0.5\textwidth}
    \centering
    \includegraphics[width=\textwidth]{figures/FPOC_planning_test_8_adj_options.pdf}
    \caption{$k = 8$, Testing Tasks}
\end{subfigure}%
\caption{The number of elementary operations used to achieve near-optimal performance in the set of \emph{training} and the set of \emph{testing} tasks by the FPOC algorithm. 
}
\label{fig: FPOC study planning iterations vs estimate of J}
\end{figure*}

\subsubsection{Learned Options with Different Choices of $c$}

Remember that $c$ is a problem parameter while $\bar c$ is a solution parameter. It is interesting to know what the discovered options are when we vary $c$. From \cref{fig: The learned options using MAOC with 2 adjustable options best c=0} to \cref{fig: The learned options using FPOC with 8 adjustable options best c=1}, we show the the discovered options when $c = 0, 0.4$ and $0.8$ for both MAOC and FPOC.

\begin{figure*}[h]
\begin{subfigure}{0.25\textwidth}
    \centering
    \includegraphics[width=\textwidth]{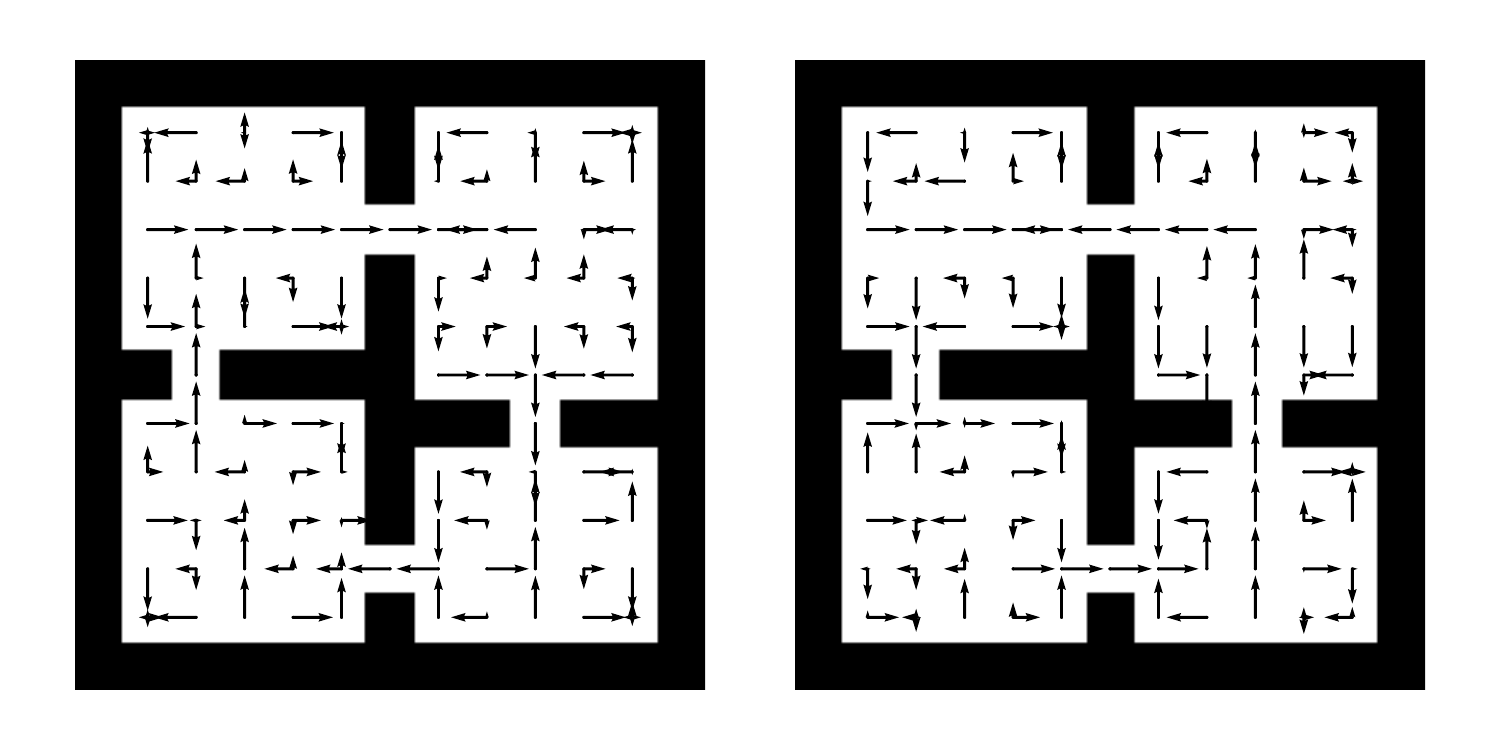}
    \caption{$\pi$}
\end{subfigure}%
\begin{subfigure}{0.25\textwidth}
    \centering
    \includegraphics[width=\textwidth]{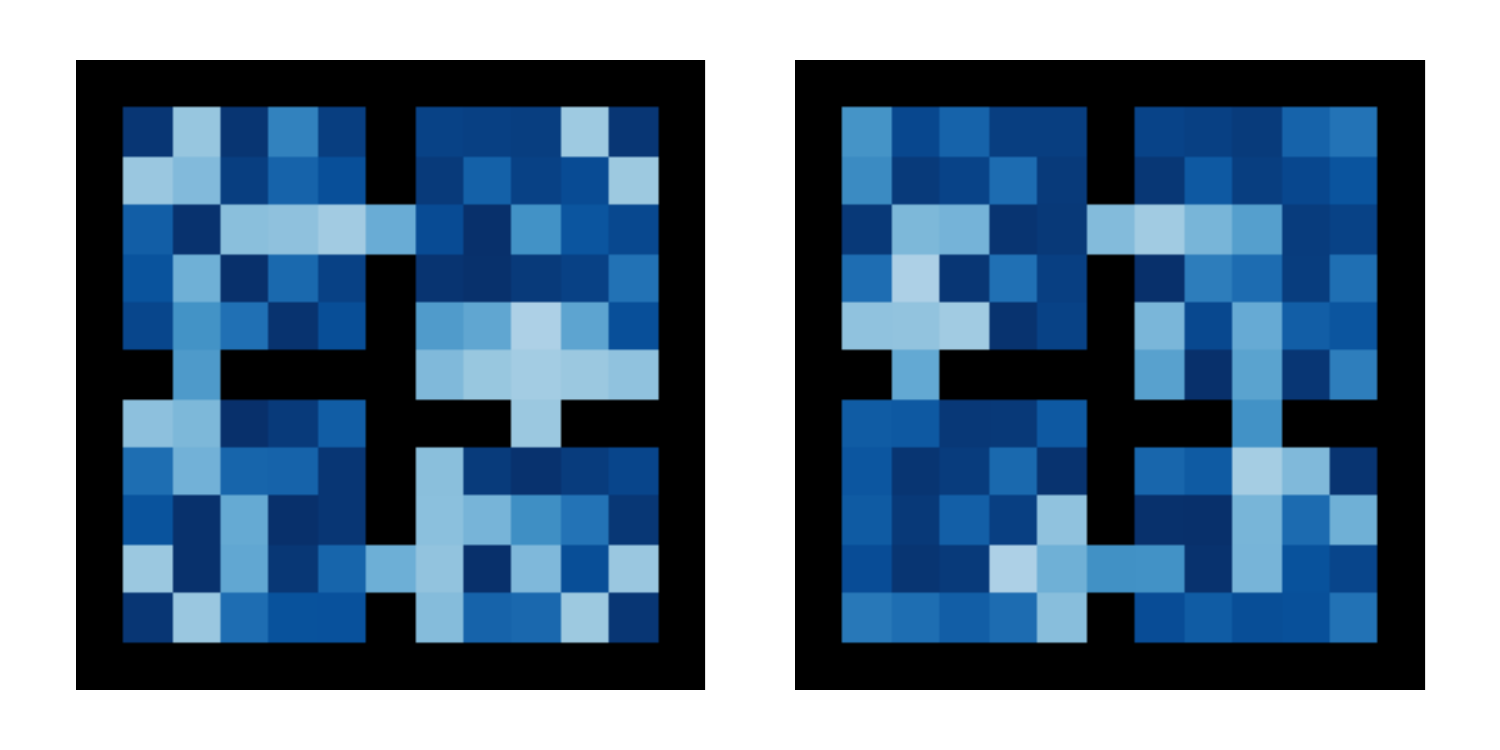}
    \caption{$\beta$}
\end{subfigure}%
\begin{subfigure}{0.25\textwidth}
    \centering
    \includegraphics[width=\textwidth]{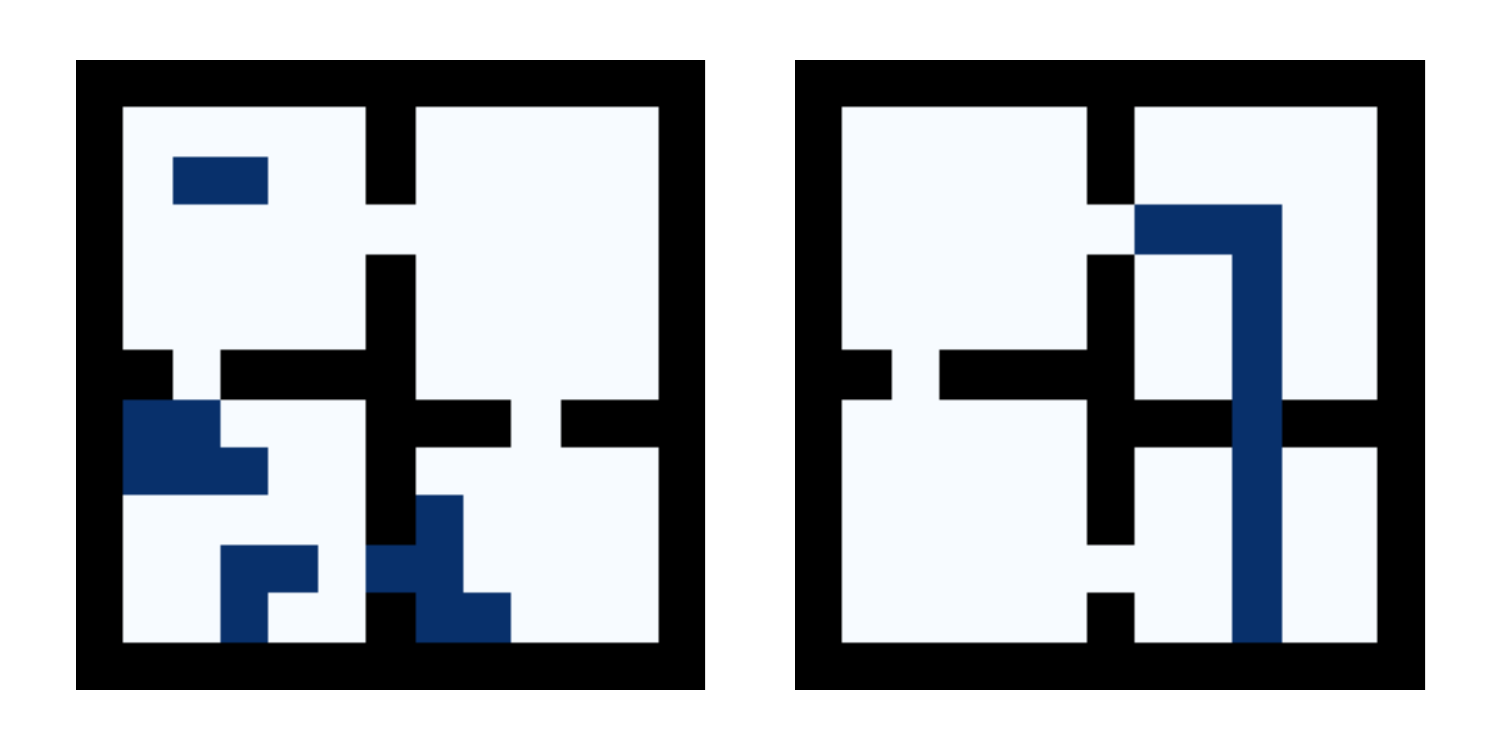}
    \caption{$\mu$ (task 1) }
\end{subfigure}%
\begin{subfigure}{0.25\textwidth}
    \centering
    \includegraphics[width=\textwidth]{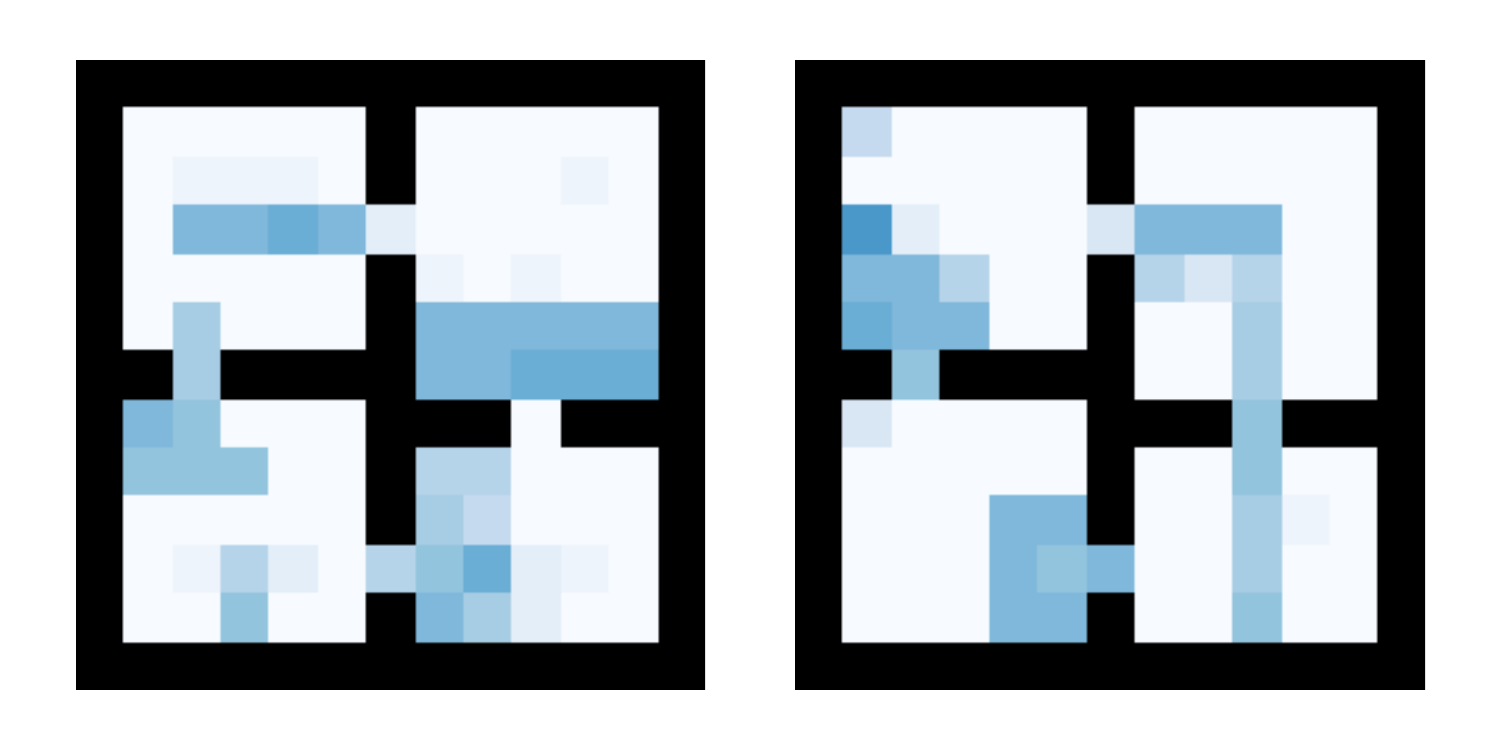}
    \caption{$\mu$ (all tasks)}
\end{subfigure}
\caption{The learned options using MAOC with 2 adjustable options ($c = 0$, best $\bar c$ and $\eta$). We plot the options' policies in (a) and the options' termination probabilities in (b). In (c), we plot the meta-policy for training task 1, which is to move to the cell in the upper left corner. In (d), we plot the meta-policy averaged over all training tasks. This plot shows whether an option is used frequently to solve training tasks.
}
\label{fig: The learned options using MAOC with 2 adjustable options best c=0}
\end{figure*}

\begin{figure*}[h]
\begin{subfigure}{0.25\textwidth}
    \centering
    \includegraphics[width=\textwidth]{figures/921_pi.pdf}
    \caption{$\pi$}
\end{subfigure}%
\begin{subfigure}{0.25\textwidth}
    \centering
    \includegraphics[width=\textwidth]{figures/921_beta.pdf}
    \caption{$\beta$}
\end{subfigure}%
\begin{subfigure}{0.25\textwidth}
    \centering
    \includegraphics[width=\textwidth]{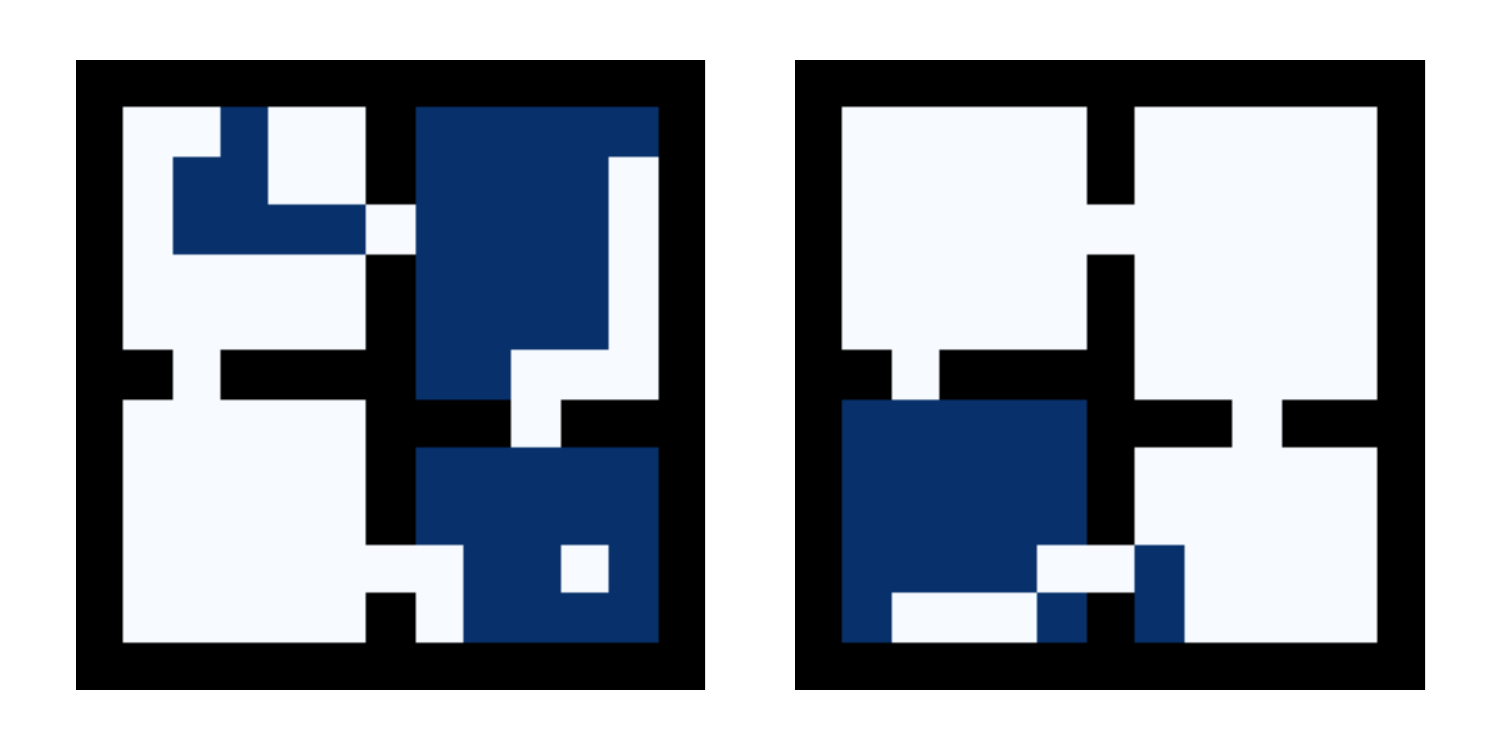}
    \caption{$\mu$ (task 1) }
\end{subfigure}%
\begin{subfigure}{0.25\textwidth}
    \centering
    \includegraphics[width=\textwidth]{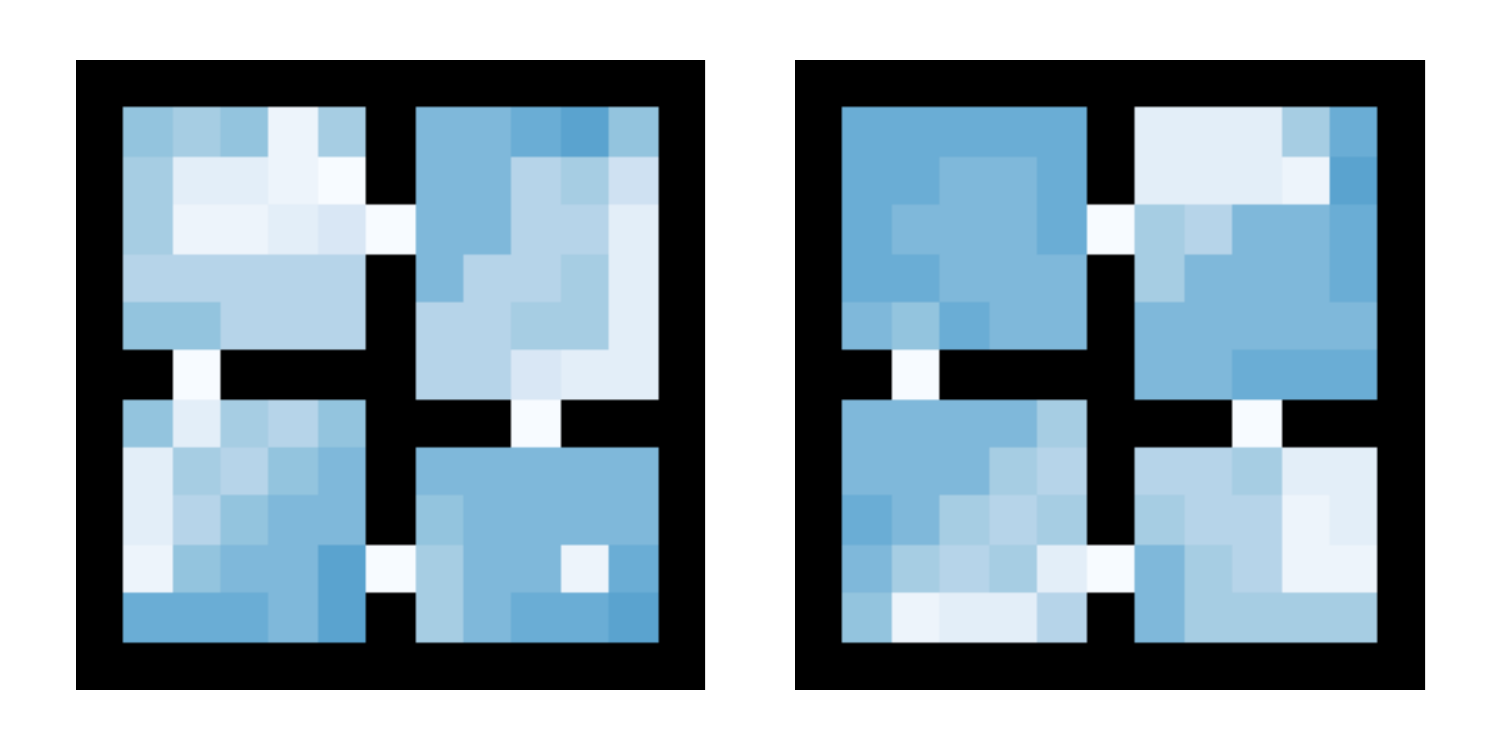}
    \caption{$\mu$ (all tasks)}
\end{subfigure}
\caption{The learned options using MAOC with 2 adjustable options ($c = 0.2$, best $\bar c$ and $\eta$).
}
\label{fig: The learned options using MAOC with 2 adjustable options best c=0.2}
\end{figure*}

\begin{figure*}[h]
\begin{subfigure}{0.25\textwidth}
    \centering
    \includegraphics[width=\textwidth]{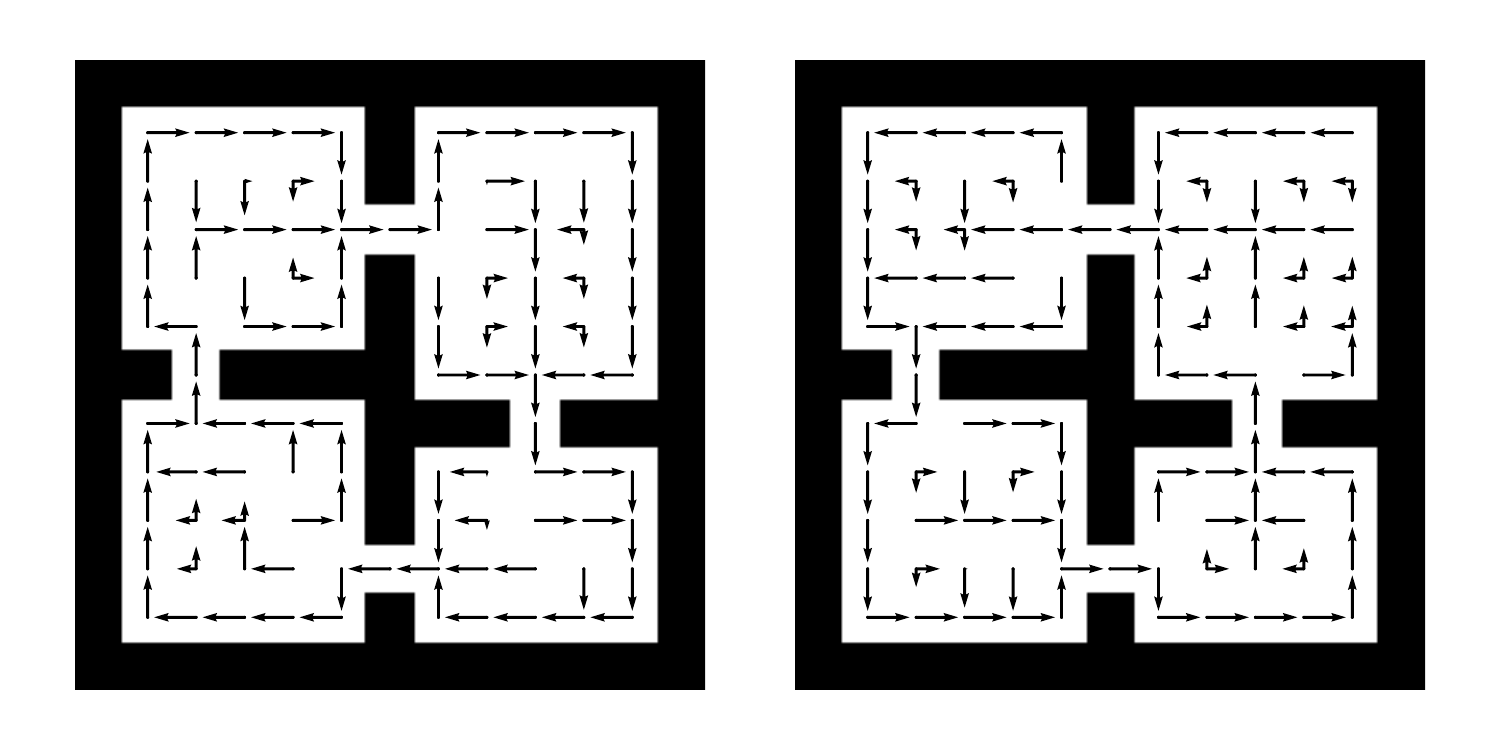}
    \caption{$\pi$}
\end{subfigure}%
\begin{subfigure}{0.25\textwidth}
    \centering
    \includegraphics[width=\textwidth]{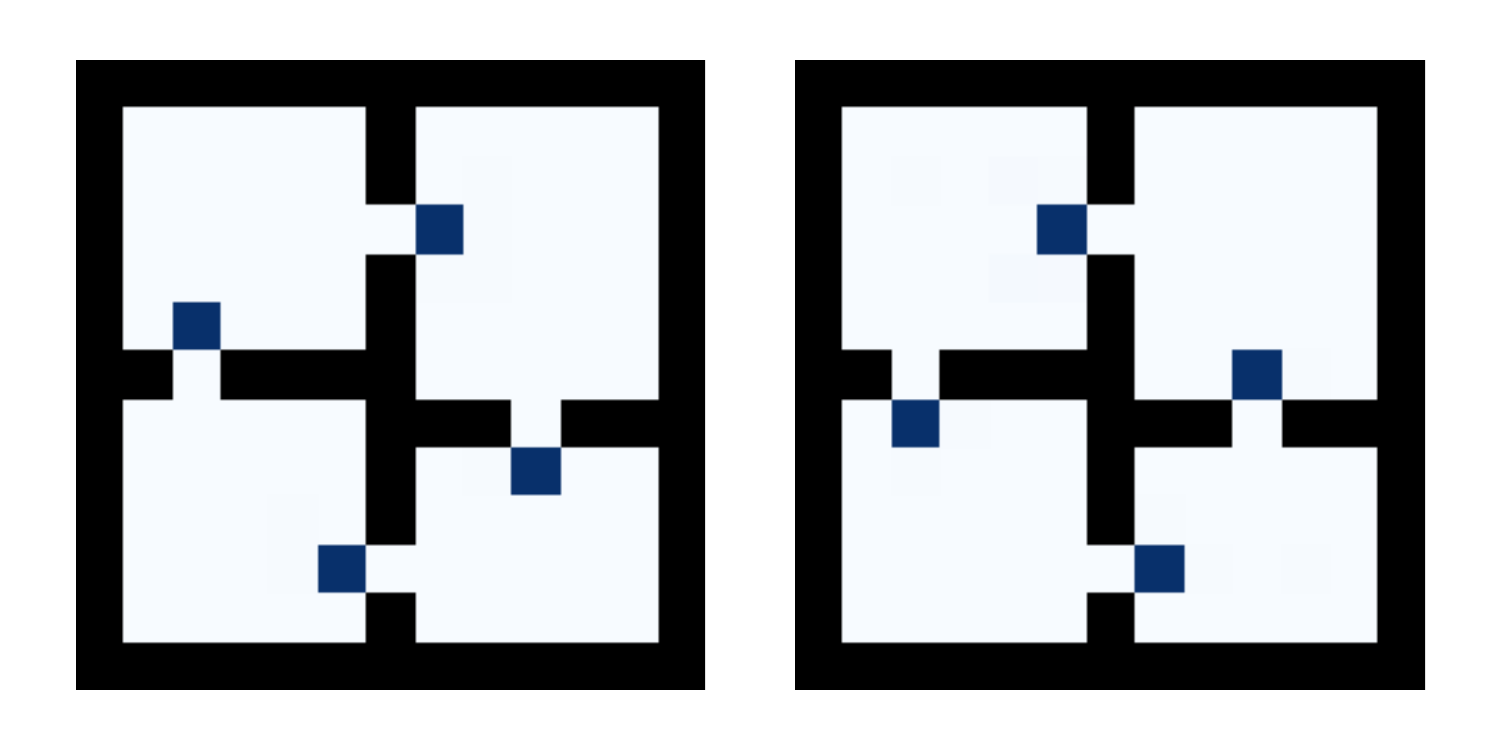}
    \caption{$\beta$}
\end{subfigure}%
\begin{subfigure}{0.25\textwidth}
    \centering
    \includegraphics[width=\textwidth]{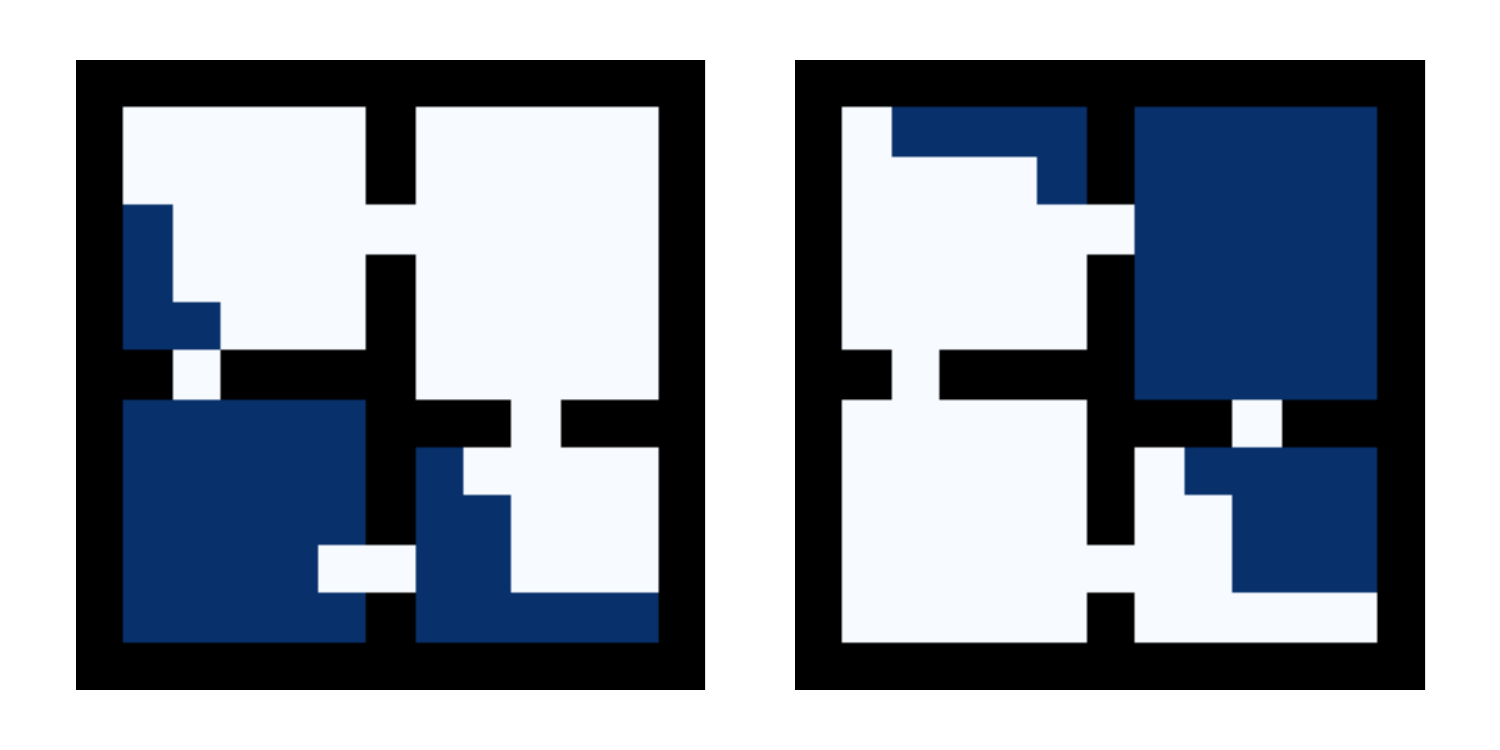}
    \caption{$\mu$ (task 1) }
\end{subfigure}%
\begin{subfigure}{0.25\textwidth}
    \centering
    \includegraphics[width=\textwidth]{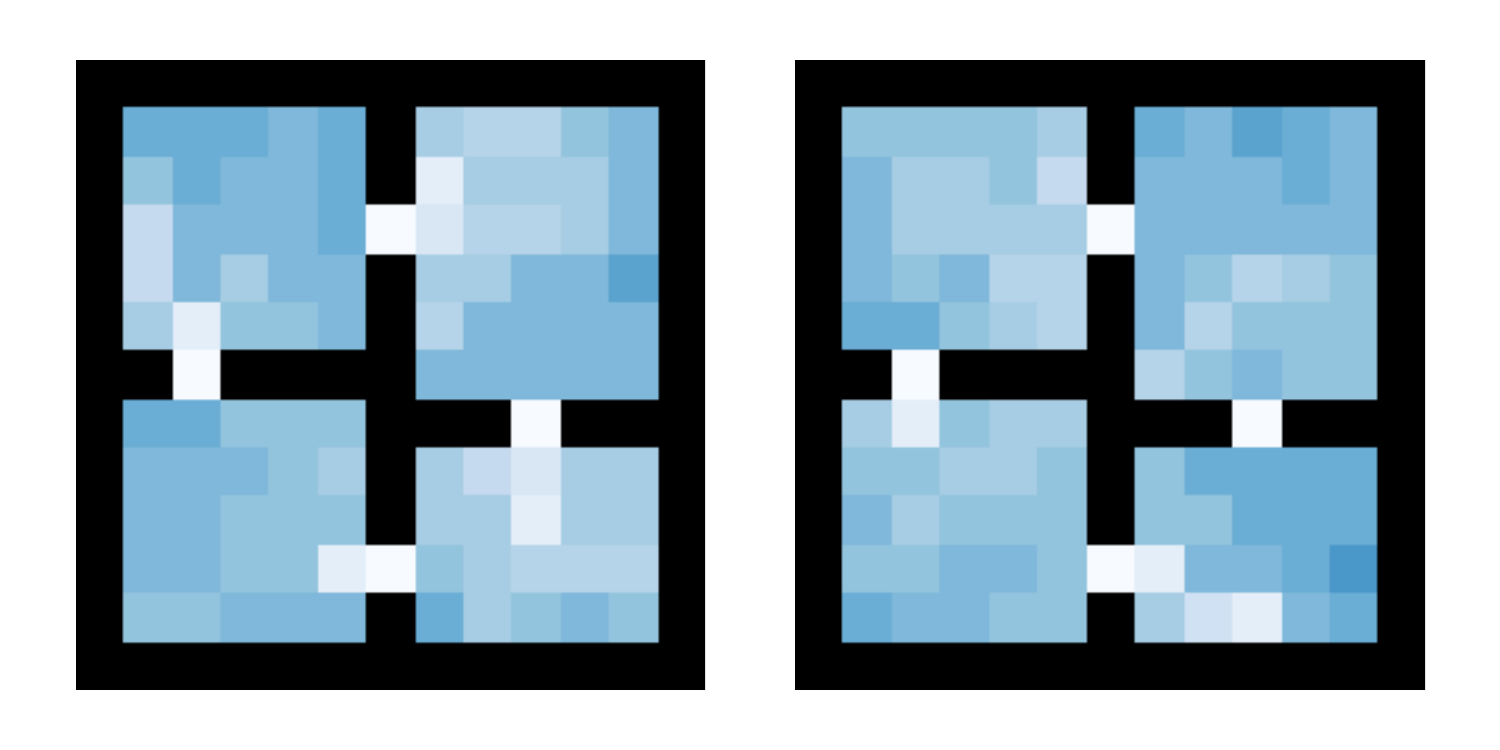}
    \caption{$\mu$ (all tasks)}
\end{subfigure}
\caption{The learned options using MAOC with 2 adjustable options ($c = 0.6$, best $\bar c$ and $\eta$).
}
\label{fig: The learned options using MAOC with 2 adjustable options best c=0.6}
\end{figure*}

\begin{figure*}[h]
\begin{subfigure}{0.25\textwidth}
    \centering
    \includegraphics[width=\textwidth]{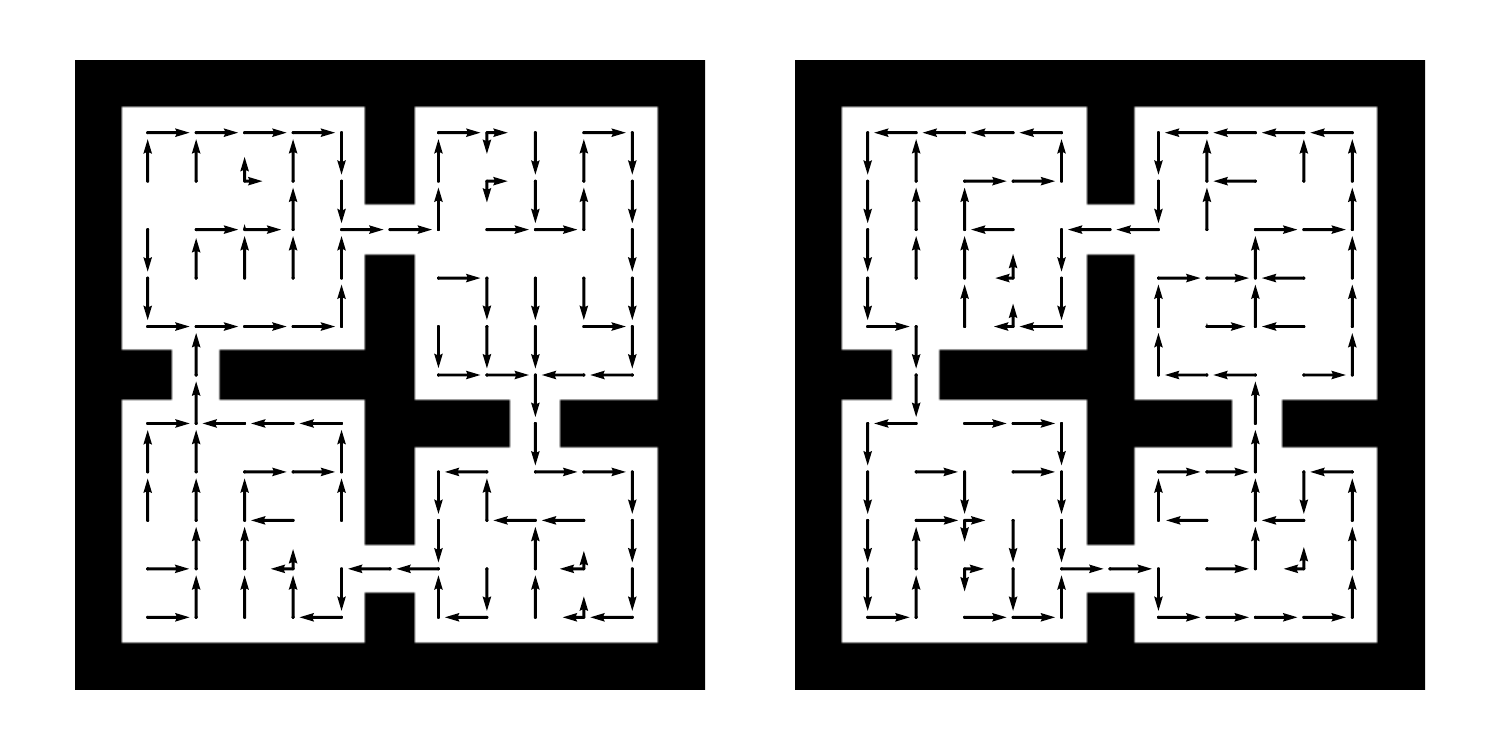}
    \caption{$\pi$}
\end{subfigure}%
\begin{subfigure}{0.25\textwidth}
    \centering
    \includegraphics[width=\textwidth]{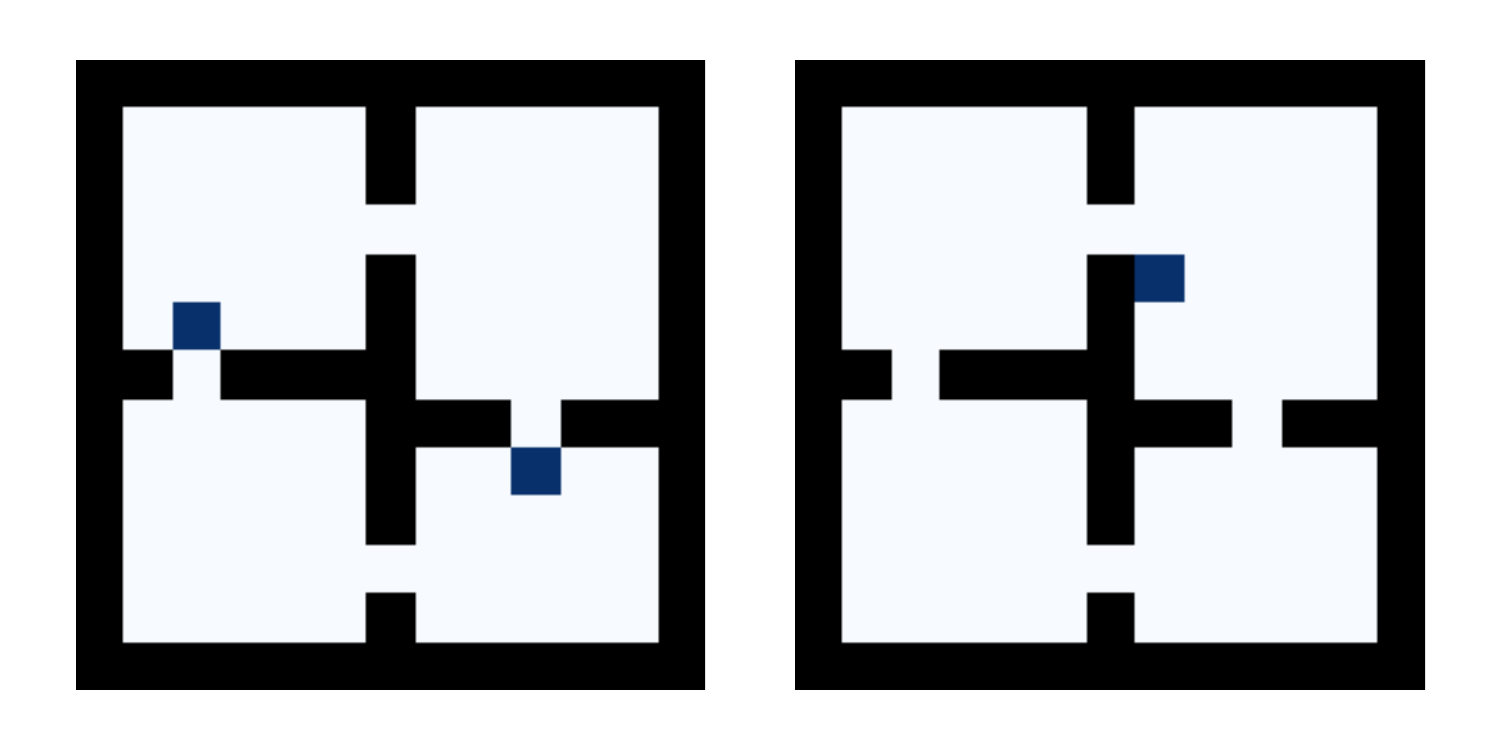}
    \caption{$\beta$}
\end{subfigure}%
\begin{subfigure}{0.25\textwidth}
    \centering
    \includegraphics[width=\textwidth]{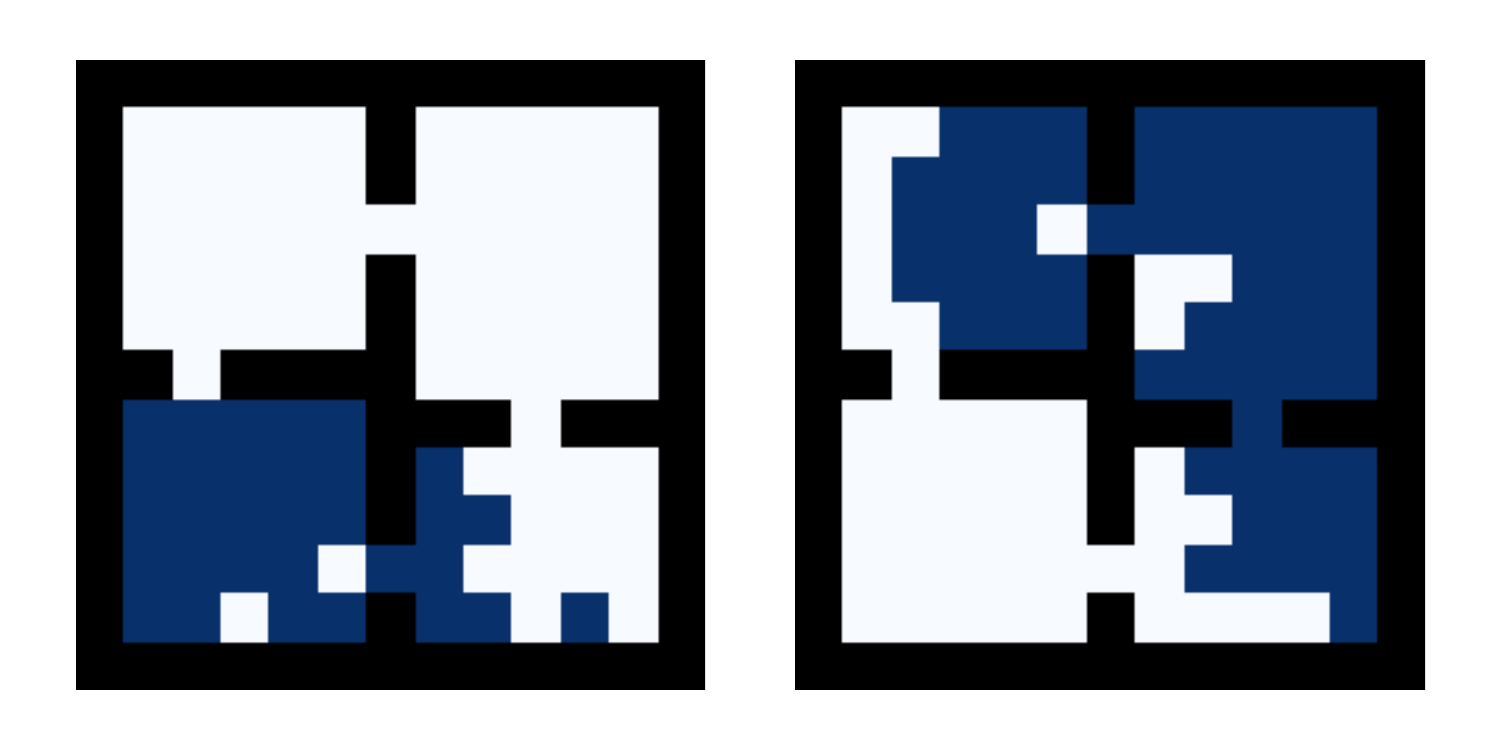}
    \caption{$\mu$ (task 1) }
\end{subfigure}%
\begin{subfigure}{0.25\textwidth}
    \centering
    \includegraphics[width=\textwidth]{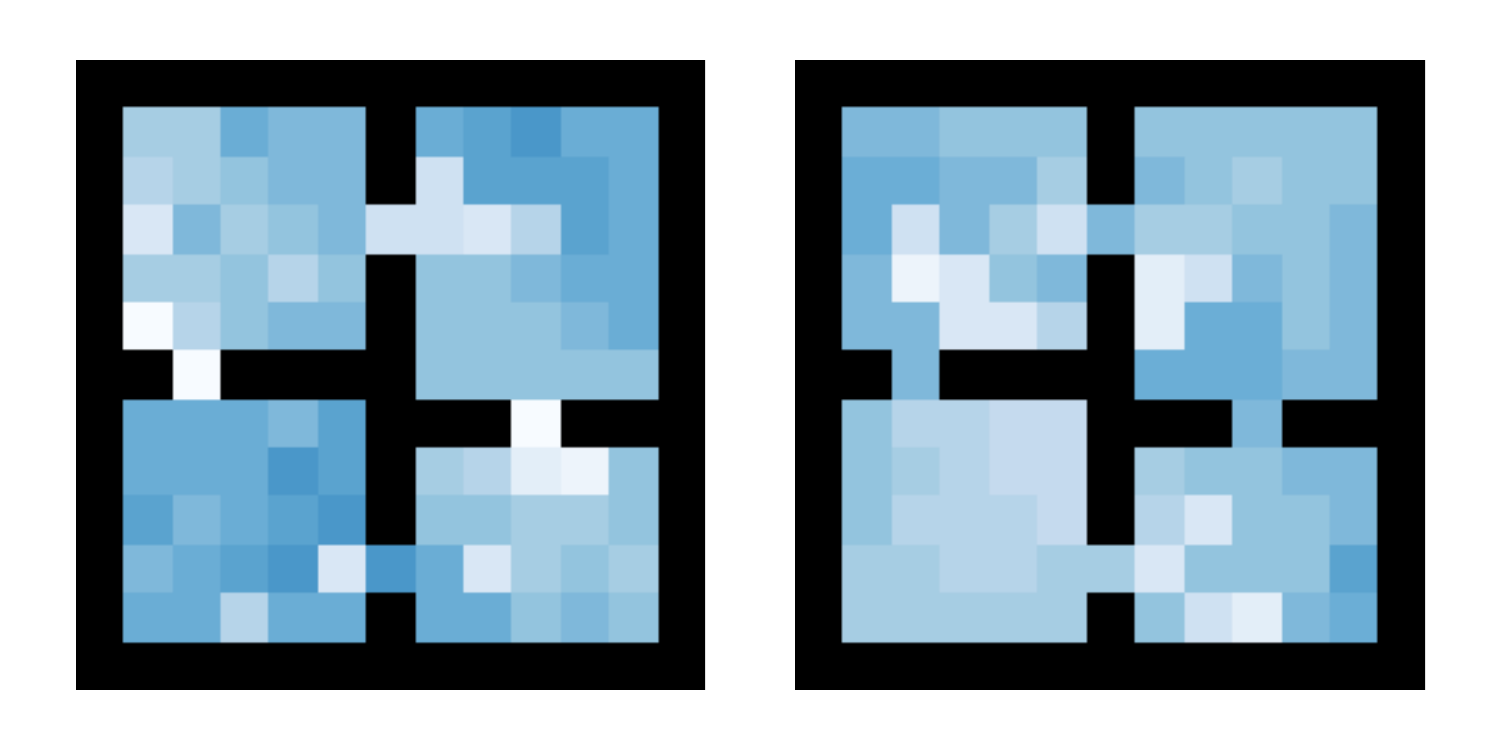}
    \caption{$\mu$ (all tasks)}
\end{subfigure}
\caption{The learned options using MAOC with 2 adjustable options ($c = 1$, best $\bar c$ and $\eta$).
}
\label{fig: The learned options using MAOC with 2 adjustable options best c=0.6}
\end{figure*}

\begin{figure*}[h]
\begin{subfigure}{0.5\textwidth}
    \centering
    \includegraphics[width=\textwidth]{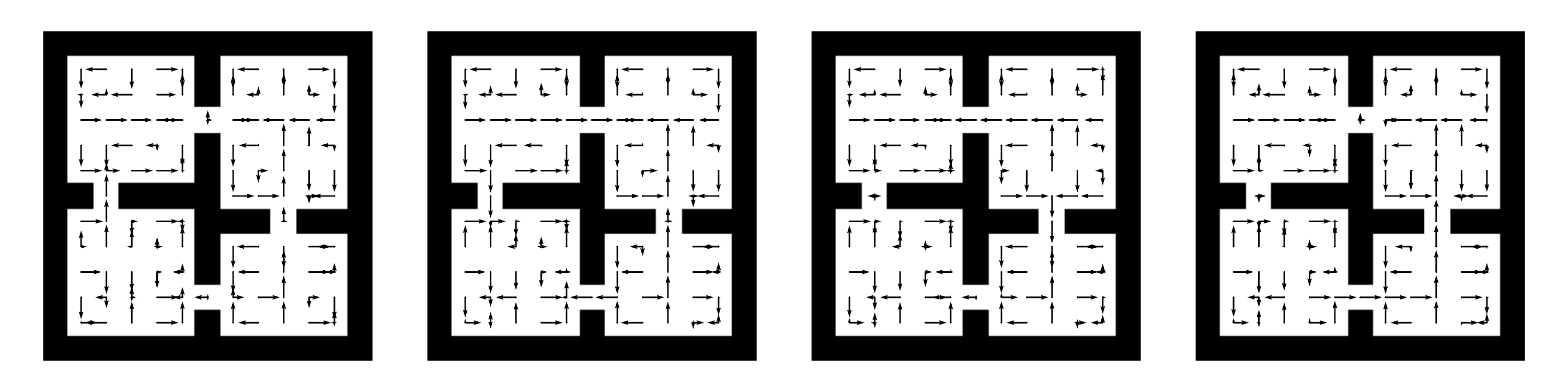}
    \caption{$\pi$}
\end{subfigure}%
\begin{subfigure}{0.5\textwidth}
    \centering
    \includegraphics[width=\textwidth]{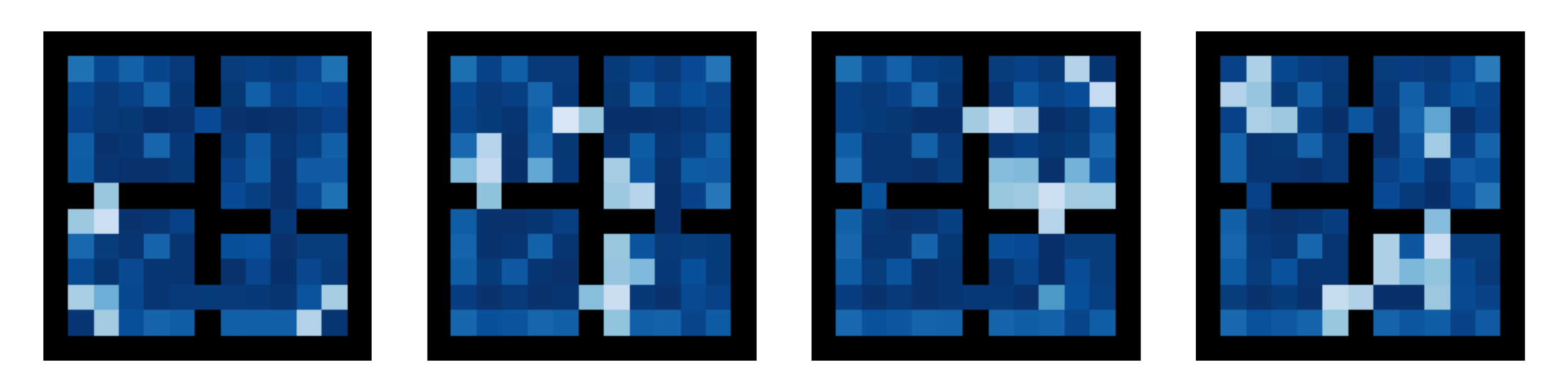}
    \caption{$\beta$}
\end{subfigure}
\begin{subfigure}{0.5\textwidth}
    \centering
    \includegraphics[width=\textwidth]{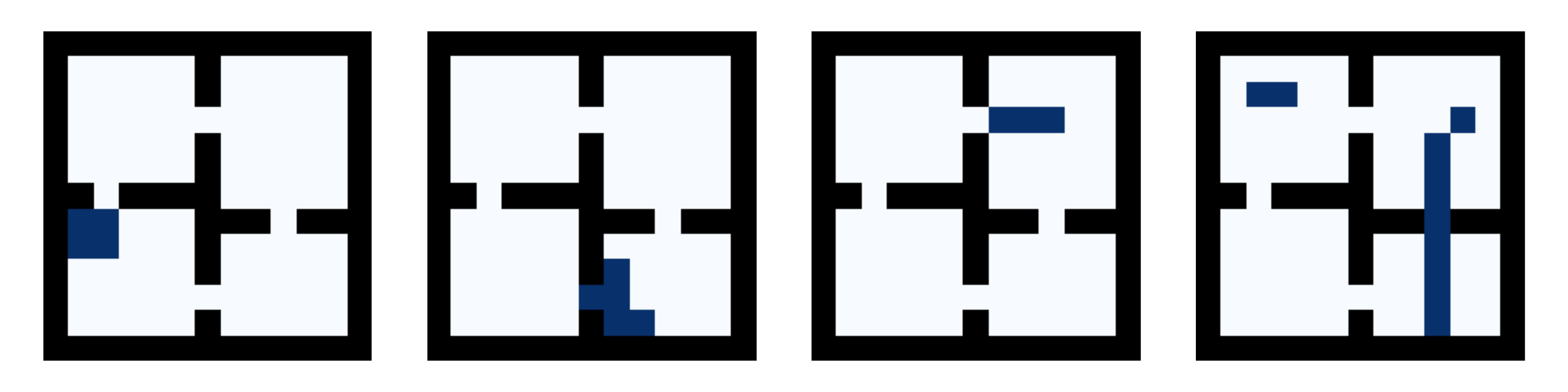}
    \caption{$\mu$ (task 1) }
\end{subfigure}%
\begin{subfigure}{0.5\textwidth}
    \centering
    \includegraphics[width=\textwidth]{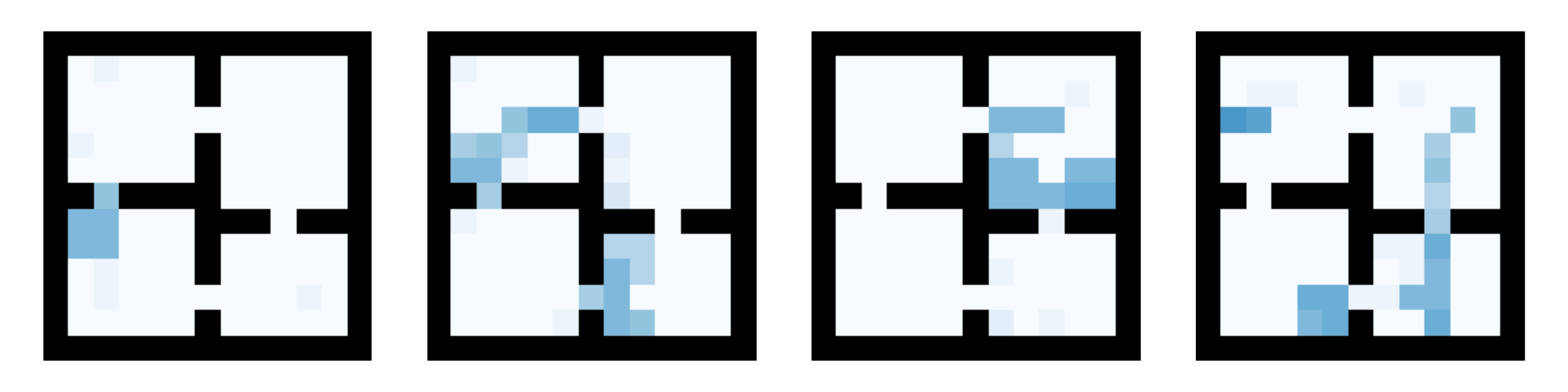}
    \caption{$\mu$ (all tasks) }
\end{subfigure}
\caption{The learned options using MAOC with 4 adjustable options ($c = 0$, best $\bar c$ and $\eta$).
}
\label{fig: The learned options using MAOC with 4 adjustable options best c=0}
\end{figure*}

\begin{figure*}[h]
\begin{subfigure}{0.5\textwidth}
    \centering
    \includegraphics[width=\textwidth]{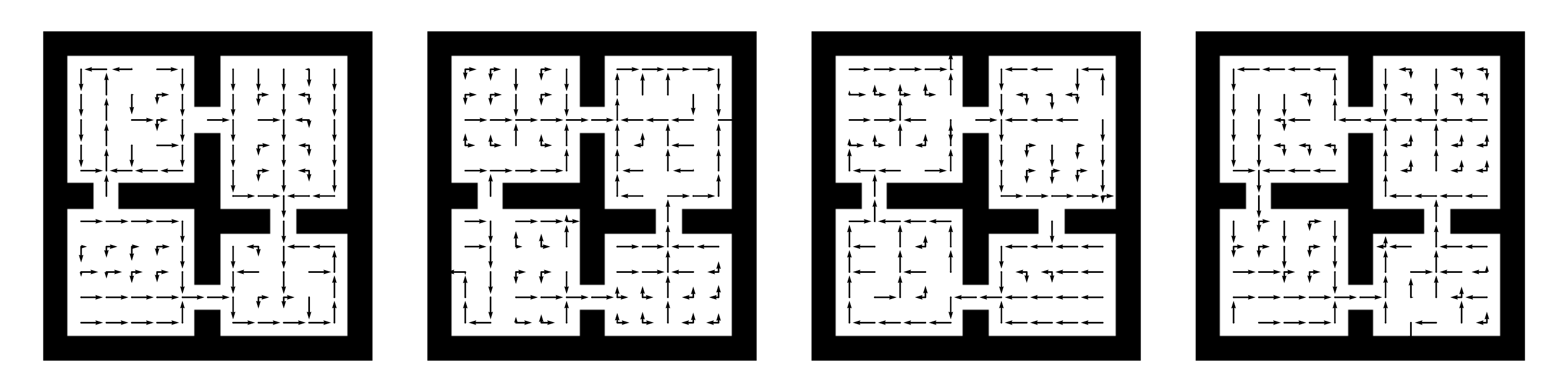}
    \caption{$\pi$}
\end{subfigure}%
\begin{subfigure}{0.5\textwidth}
    \centering
    \includegraphics[width=\textwidth]{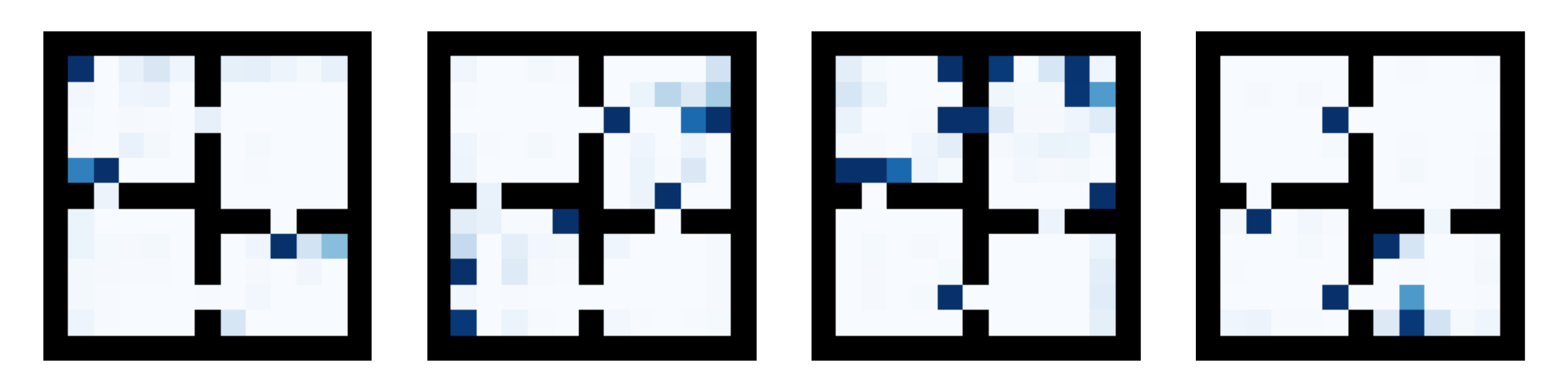}
    \caption{$\beta$}
\end{subfigure}
\begin{subfigure}{0.5\textwidth}
    \centering
    \includegraphics[width=\textwidth]{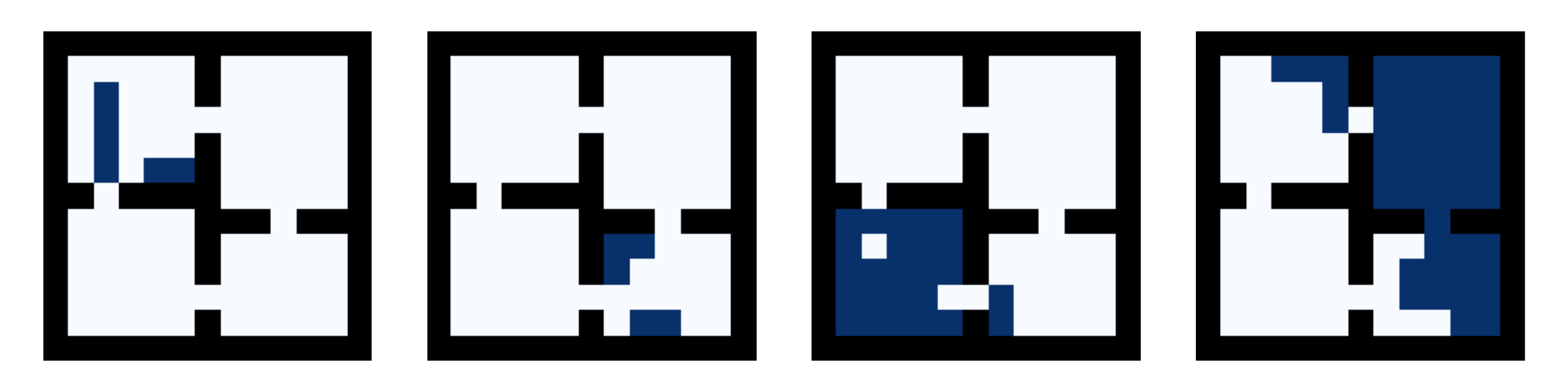}
    \caption{$\mu$ (task 1) }
\end{subfigure}%
\begin{subfigure}{0.5\textwidth}
    \centering
    \includegraphics[width=\textwidth]{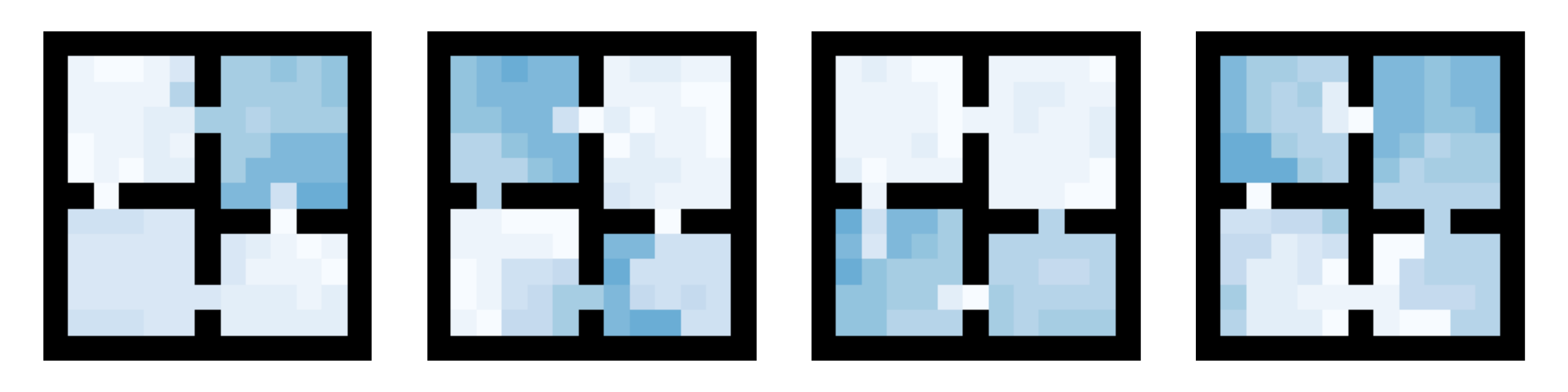}
    \caption{$\mu$ (all tasks) }
\end{subfigure}
\caption{The learned options using MAOC with 4 adjustable options ($c = 0.2$, best $\bar c$ and $\eta$).
}
\label{fig: The learned options using MAOC with 4 adjustable options best c=0.2}
\end{figure*}

\begin{figure*}[h]
\begin{subfigure}{0.5\textwidth}
    \centering
    \includegraphics[width=\textwidth]{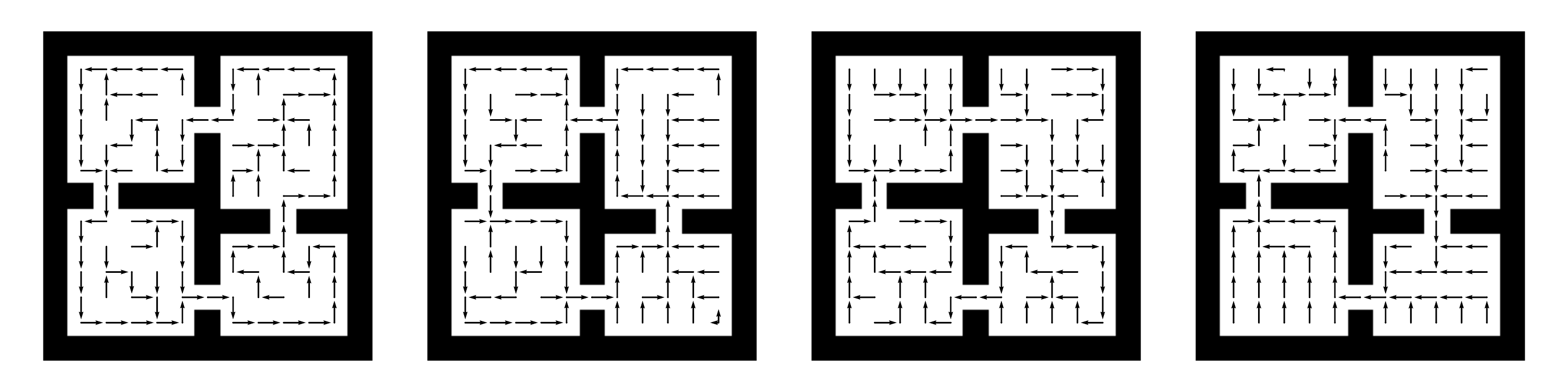}
    \caption{$\pi$}
\end{subfigure}%
\begin{subfigure}{0.5\textwidth}
    \centering
    \includegraphics[width=\textwidth]{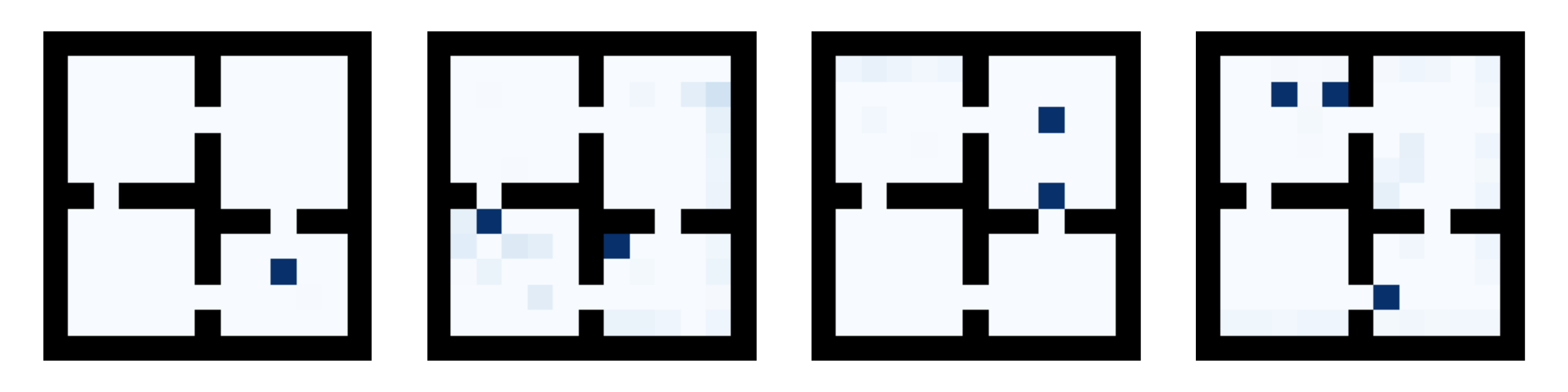}
    \caption{$\beta$}
\end{subfigure}
\begin{subfigure}{0.5\textwidth}
    \centering
    \includegraphics[width=\textwidth]{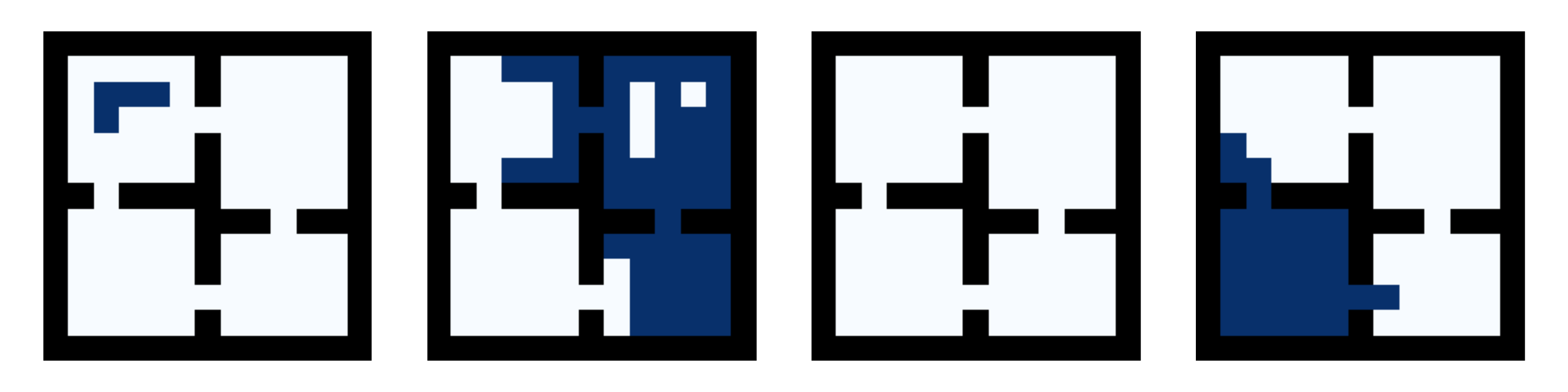}
    \caption{$\mu$ (task 1) }
\end{subfigure}%
\begin{subfigure}{0.5\textwidth}
    \centering
    \includegraphics[width=\textwidth]{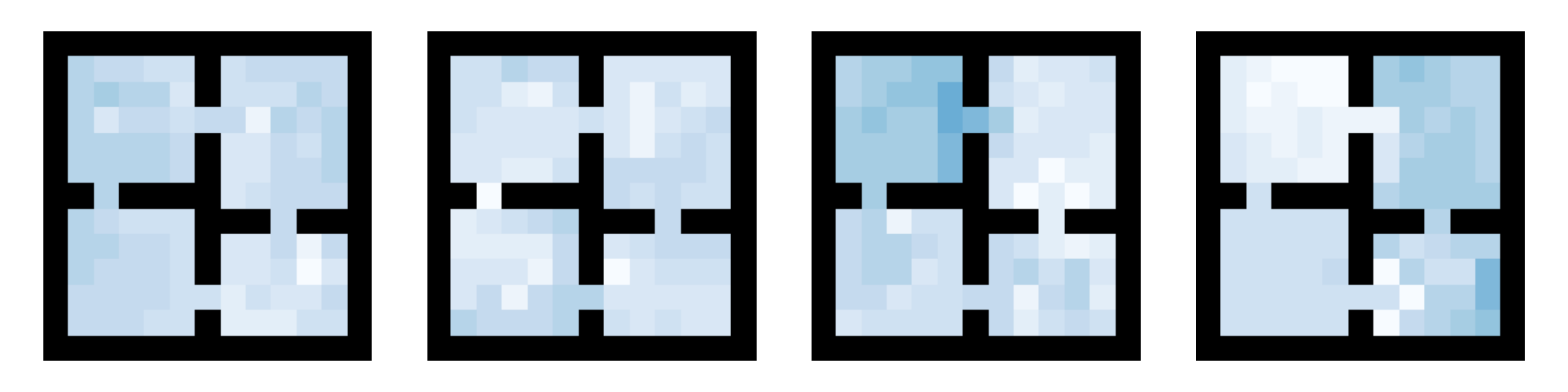}
    \caption{$\mu$ (all tasks) }
\end{subfigure}
\caption{The learned options using MAOC with 4 adjustable options ($c = 0.6$, best $\bar c$ and $\eta$).
}
\label{fig: The learned options using MAOC with 4 adjustable options best c=0.6}
\end{figure*}

\begin{figure*}[h]
\begin{subfigure}{0.5\textwidth}
    \centering
    \includegraphics[width=\textwidth]{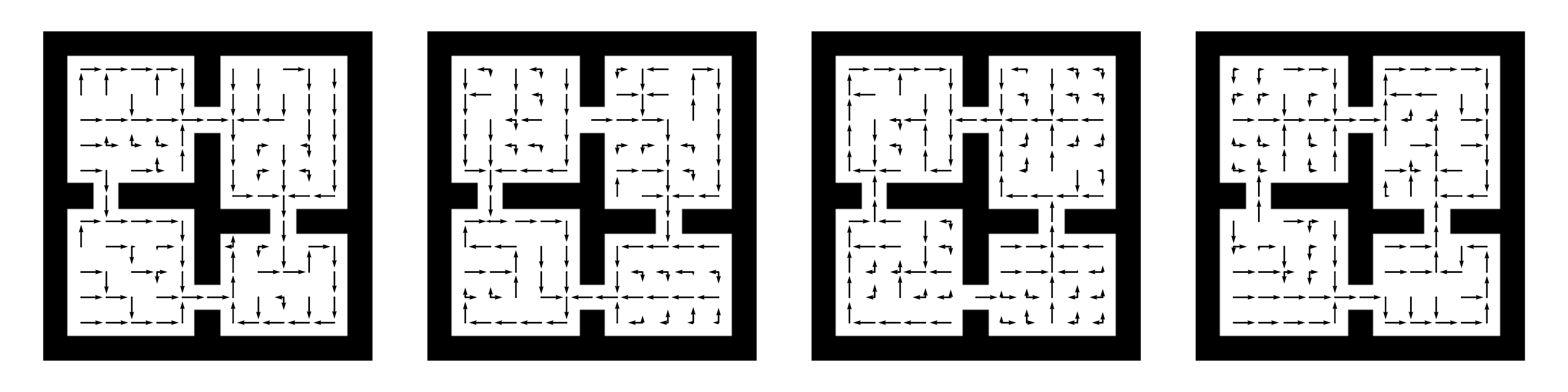}
    \caption{$\pi$}
\end{subfigure}%
\begin{subfigure}{0.5\textwidth}
    \centering
    \includegraphics[width=\textwidth]{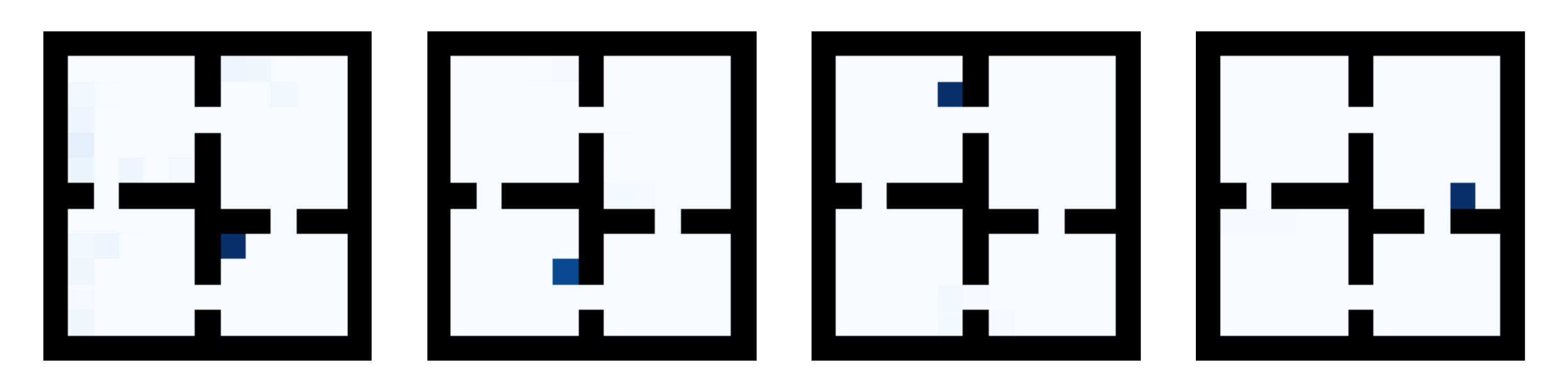}
    \caption{$\beta$}
\end{subfigure}
\begin{subfigure}{0.5\textwidth}
    \centering
    \includegraphics[width=\textwidth]{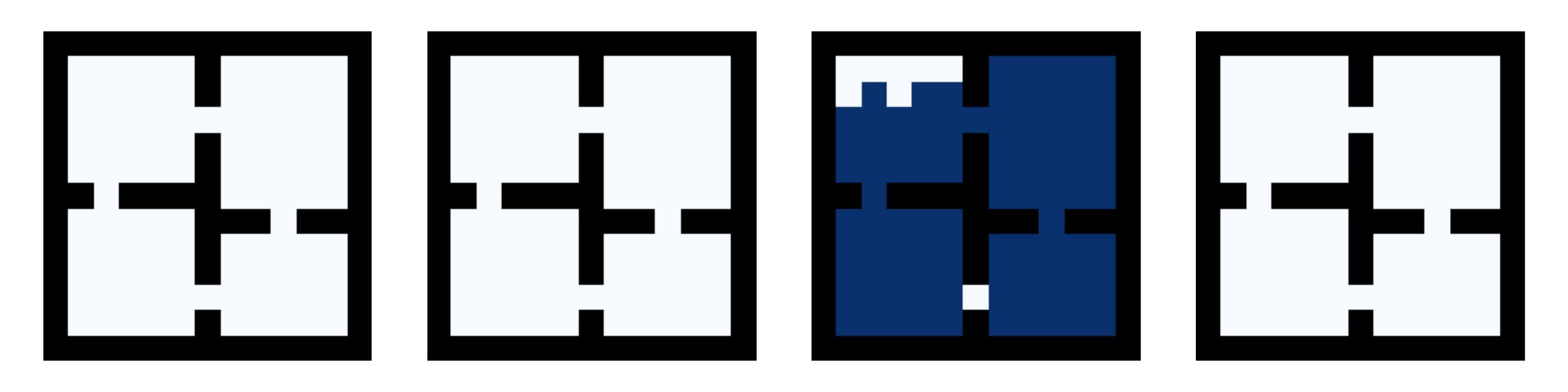}
    \caption{$\mu$ (task 1) }
\end{subfigure}%
\begin{subfigure}{0.5\textwidth}
    \centering
    \includegraphics[width=\textwidth]{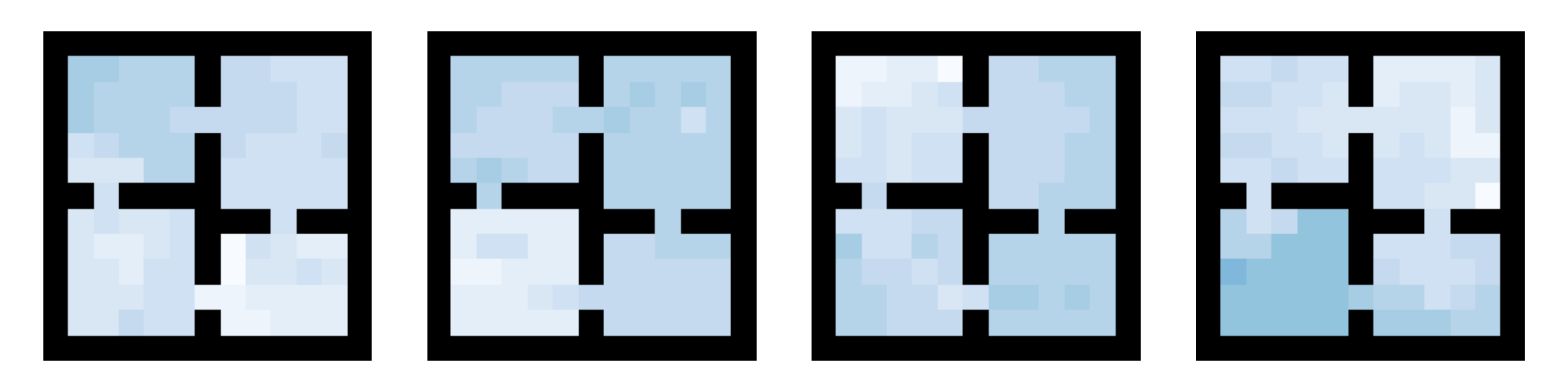}
    \caption{$\mu$ (all tasks) }
\end{subfigure}
\caption{The learned options using MAOC with 4 adjustable options ($c = 1$, best $\bar c$ and $\eta$).
}
\label{fig: The learned options using MAOC with 4 adjustable options best c=1}
\end{figure*}

\begin{figure*}[h]
\begin{subfigure}{\textwidth}
    \centering
    \includegraphics[width=\textwidth]{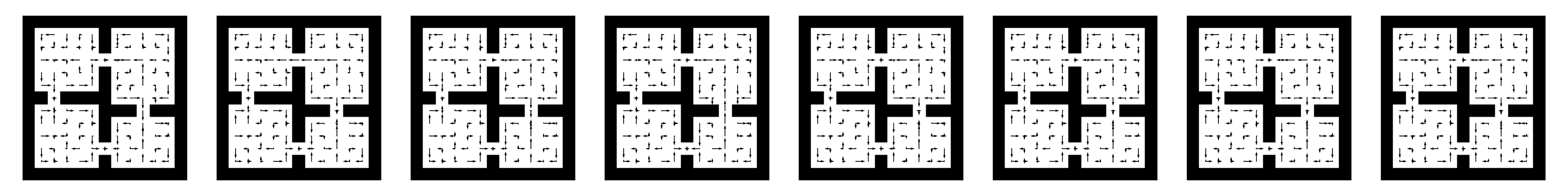}
    \caption{$\pi$}
\end{subfigure}
\begin{subfigure}{\textwidth}
    \centering
    \includegraphics[width=\textwidth]{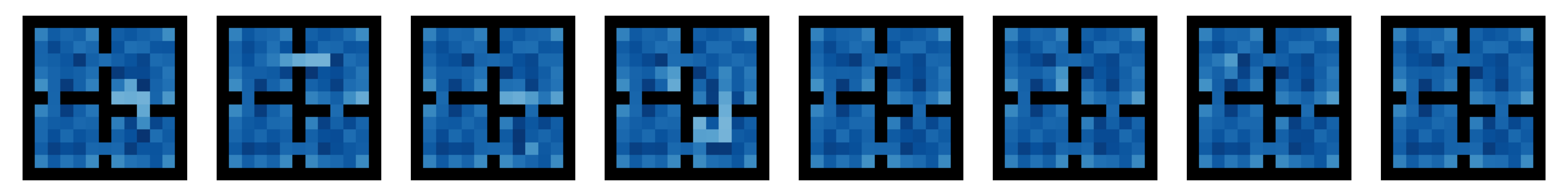}
    \caption{$\beta$}
\end{subfigure}
\begin{subfigure}{\textwidth}
    \centering
    \includegraphics[width=\textwidth]{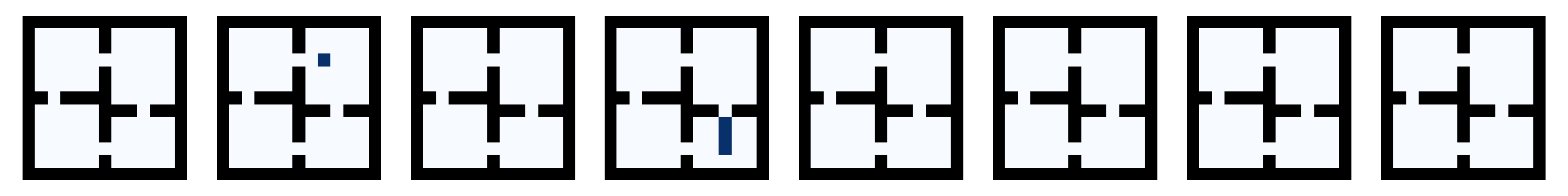}
    \caption{$\mu$ (task 1) }
\end{subfigure}
\begin{subfigure}{\textwidth}
    \centering
    \includegraphics[width=\textwidth]{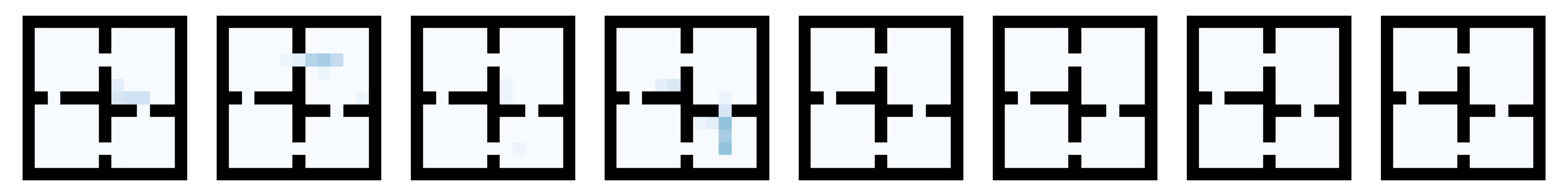}
    \caption{$\mu$ (all tasks) }
\end{subfigure}
\caption{The learned options using MAOC with 8 adjustable options ($c = 0$, best $\bar c$ and $\eta$).
}
\label{fig: The learned options using MAOC with 8 adjustable options best c=0}
\end{figure*}

\begin{figure*}[h]
\begin{subfigure}{\textwidth}
    \centering
    \includegraphics[width=\textwidth]{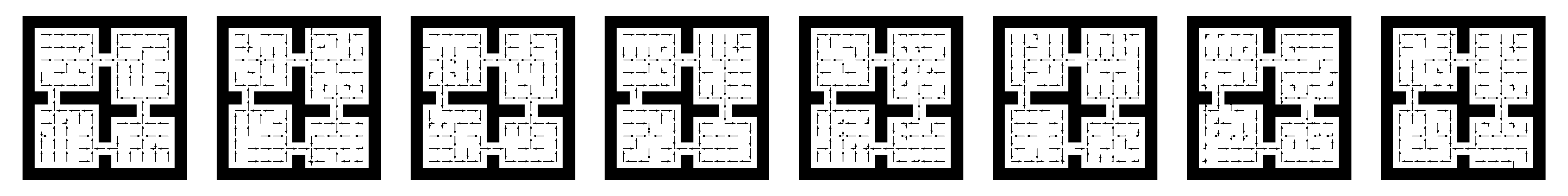}
    \caption{$\pi$}
\end{subfigure}
\begin{subfigure}{\textwidth}
    \centering
    \includegraphics[width=\textwidth]{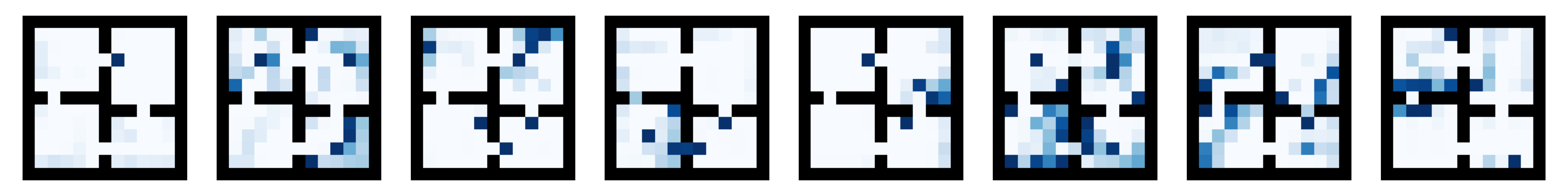}
    \caption{$\beta$}
\end{subfigure}
\begin{subfigure}{\textwidth}
    \centering
    \includegraphics[width=\textwidth]{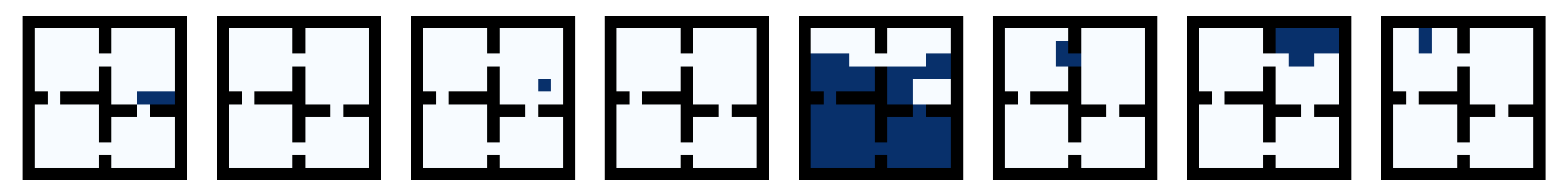}
    \caption{$\mu$ (task 1) }
\end{subfigure}
\begin{subfigure}{\textwidth}
    \centering
    \includegraphics[width=\textwidth]{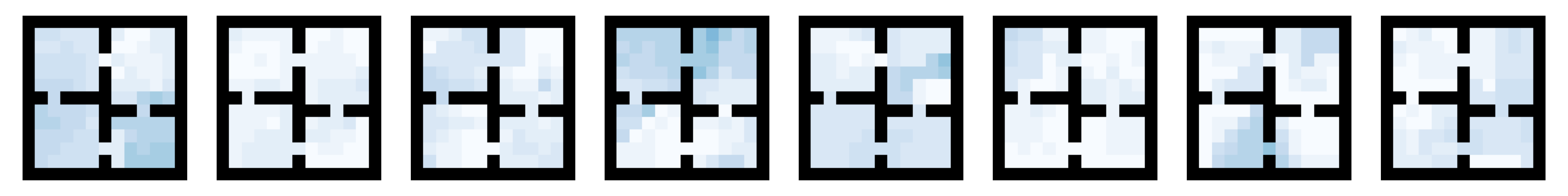}
    \caption{$\mu$ (all tasks) }
\end{subfigure}
\caption{The learned options using MAOC with 8 adjustable options ($c = 0.2$, best $\bar c$ and $\eta$).
}
\label{fig: The learned options using MAOC with 8 adjustable options best c=0.2}
\end{figure*}

\begin{figure*}[h]
\begin{subfigure}{\textwidth}
    \centering
    \includegraphics[width=\textwidth]{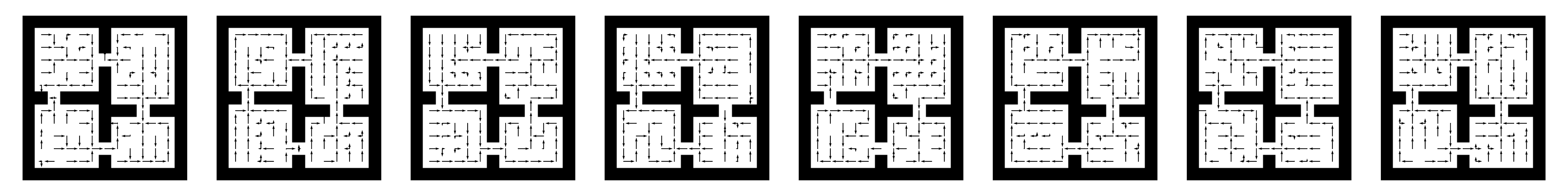}
    \caption{$\pi$}
\end{subfigure}
\begin{subfigure}{\textwidth}
    \centering
    \includegraphics[width=\textwidth]{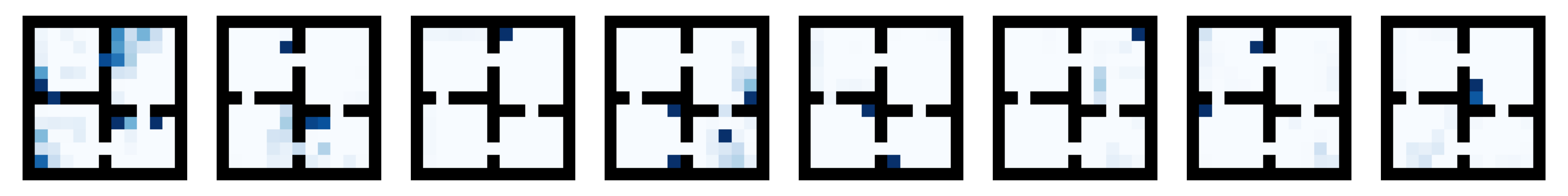}
    \caption{$\beta$}
\end{subfigure}
\begin{subfigure}{\textwidth}
    \centering
    \includegraphics[width=\textwidth]{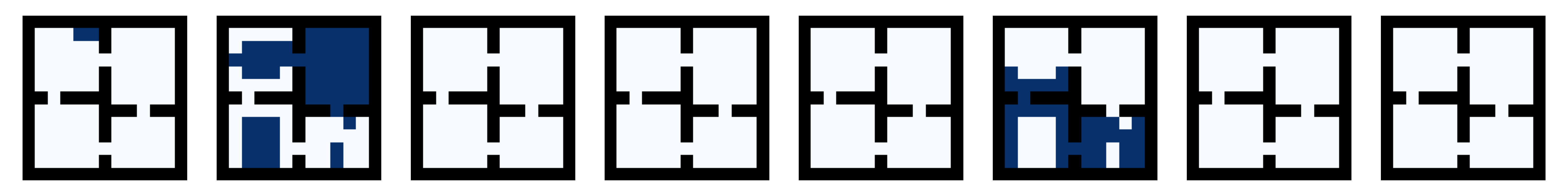}
    \caption{$\mu$ (task 1) }
\end{subfigure}
\begin{subfigure}{\textwidth}
    \centering
    \includegraphics[width=\textwidth]{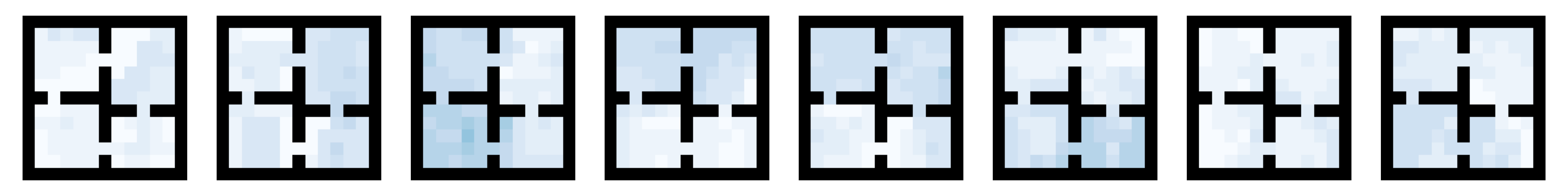}
    \caption{$\mu$ (all tasks) }
\end{subfigure}
\caption{The learned options using MAOC with 8 adjustable options ($c = 0.6$, best $\bar c$ and $\eta$).
}
\label{fig: The learned options using MAOC with 8 adjustable options best c=0.6}
\end{figure*}

\begin{figure*}[h]
\begin{subfigure}{\textwidth}
    \centering
    \includegraphics[width=\textwidth]{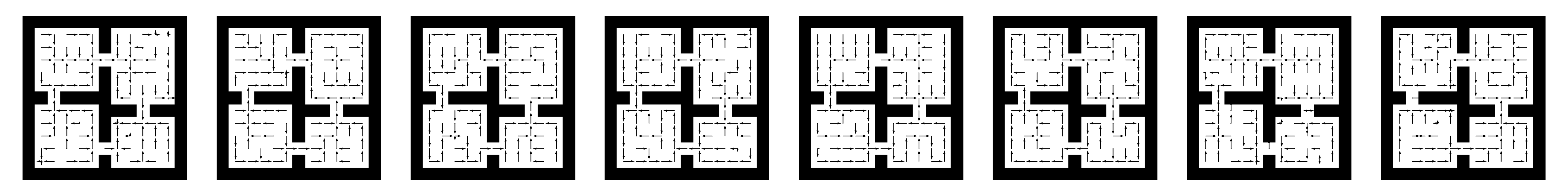}
    \caption{$\pi$}
\end{subfigure}
\begin{subfigure}{\textwidth}
    \centering
    \includegraphics[width=\textwidth]{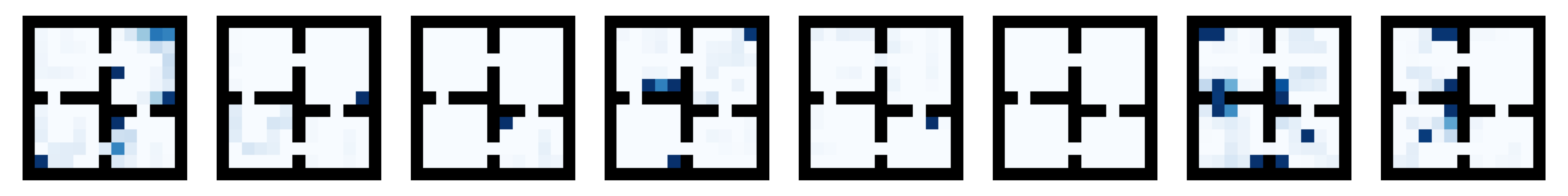}
    \caption{$\beta$}
\end{subfigure}
\begin{subfigure}{\textwidth}
    \centering
    \includegraphics[width=\textwidth]{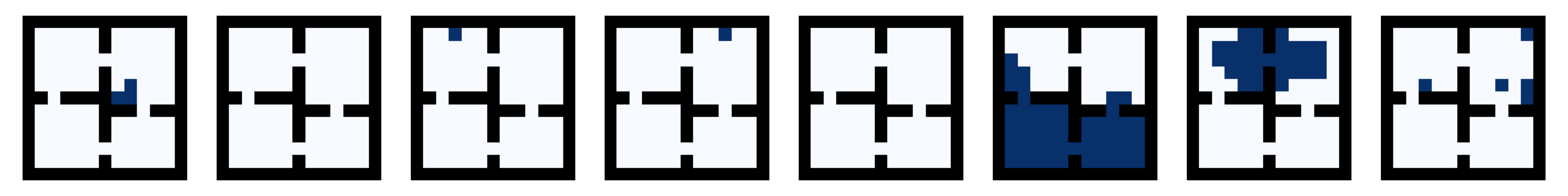}
    \caption{$\mu$ (task 1) }
\end{subfigure}
\begin{subfigure}{\textwidth}
    \centering
    \includegraphics[width=\textwidth]{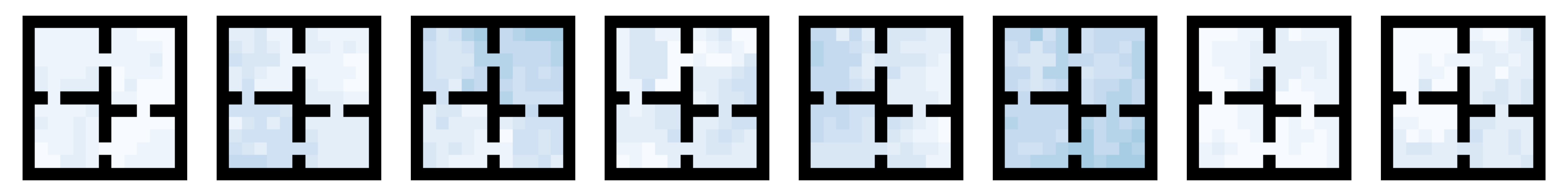}
    \caption{$\mu$ (all tasks) }
\end{subfigure}
\caption{The learned options using MAOC with 8 adjustable options ($c = 1$, best $\bar c$ and $\eta$).
}
\label{fig: The learned options using MAOC with 8 adjustable options best c=1}
\end{figure*}

\begin{figure*}[h]
\begin{subfigure}{0.25\textwidth}
    \centering
    \includegraphics[width=\textwidth]{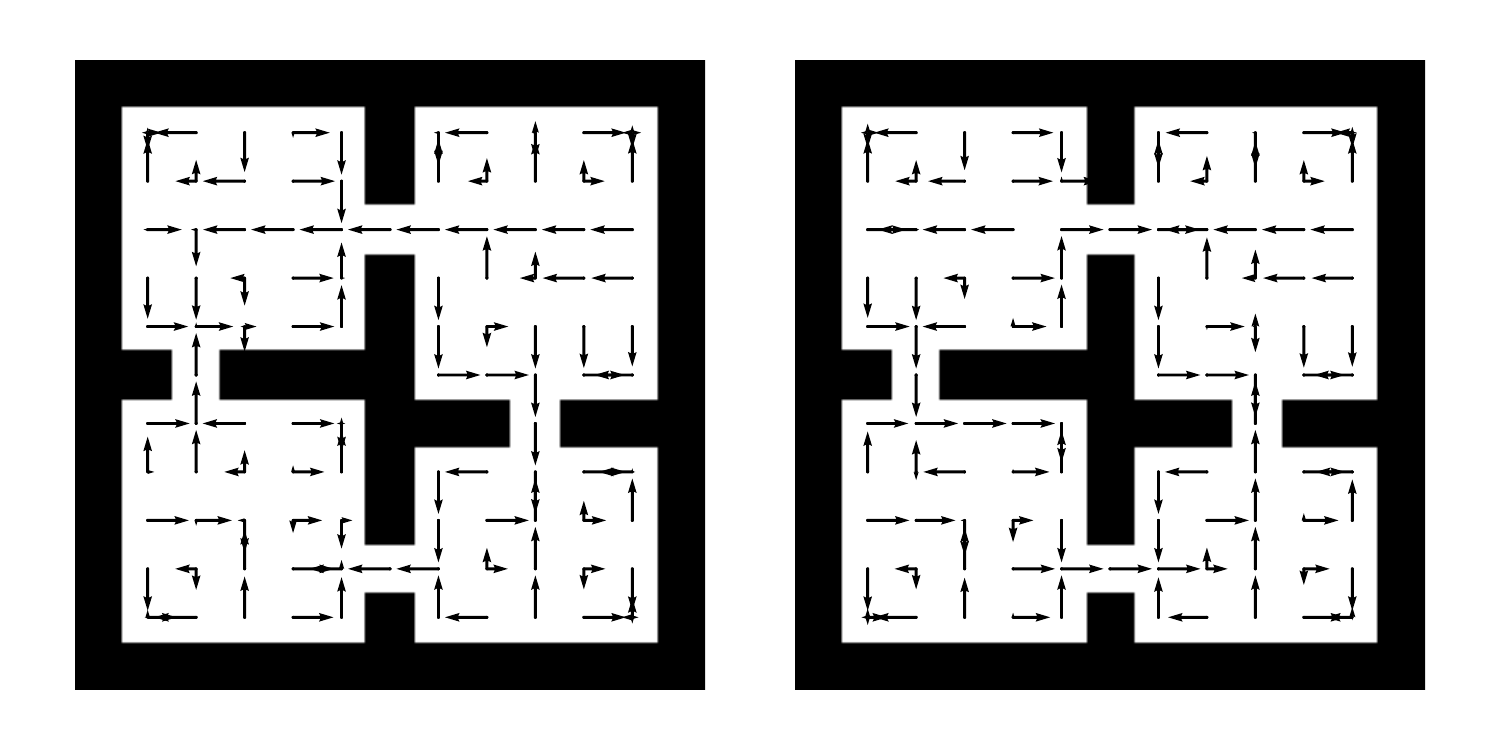}
    \caption{$\pi$}
\end{subfigure}%
\begin{subfigure}{0.25\textwidth}
    \centering
    \includegraphics[width=\textwidth]{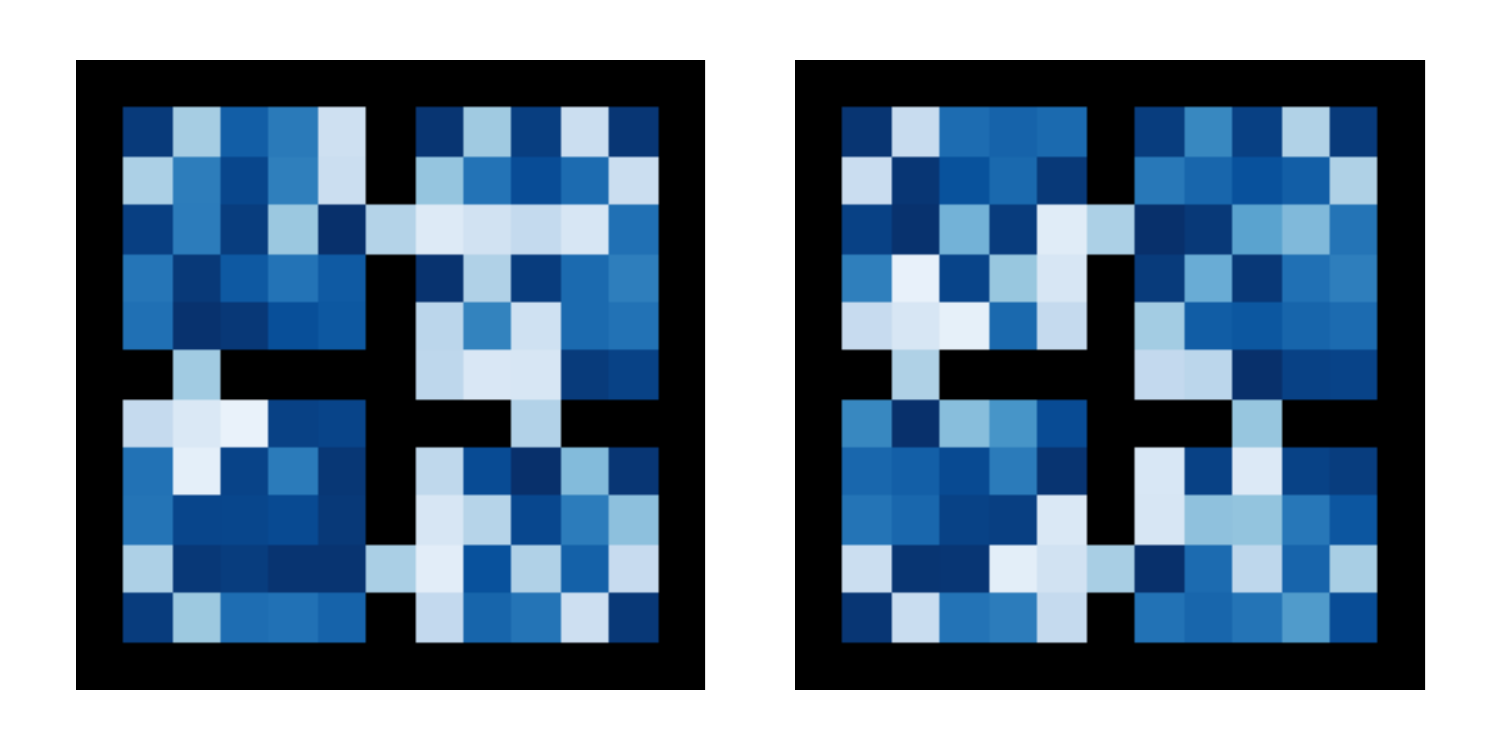}
    \caption{$\beta$}
\end{subfigure}%
\begin{subfigure}{0.25\textwidth}
    \centering
    \includegraphics[width=\textwidth]{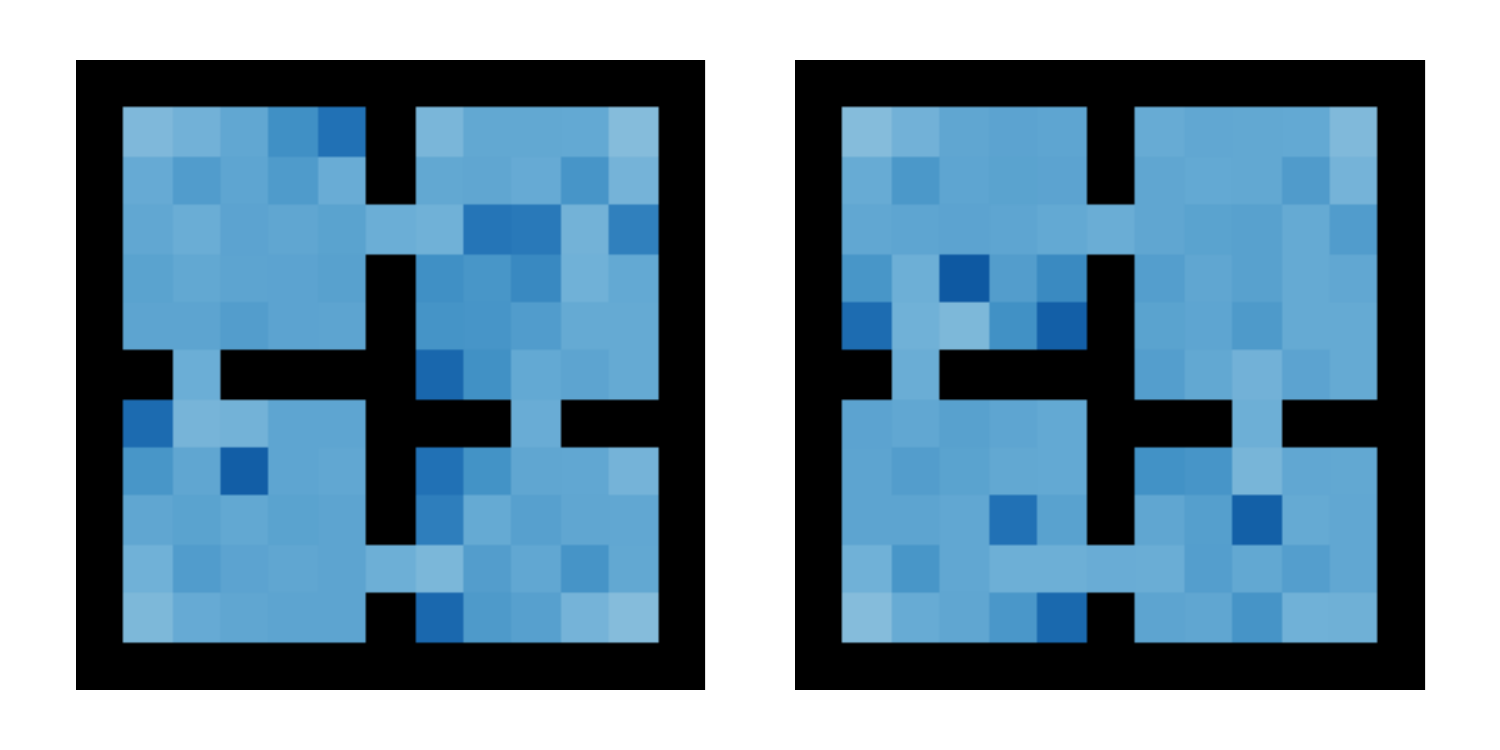}
    \caption{$i$}
\end{subfigure}%
\begin{subfigure}{0.25\textwidth}
    \centering
    \includegraphics[width=\textwidth]{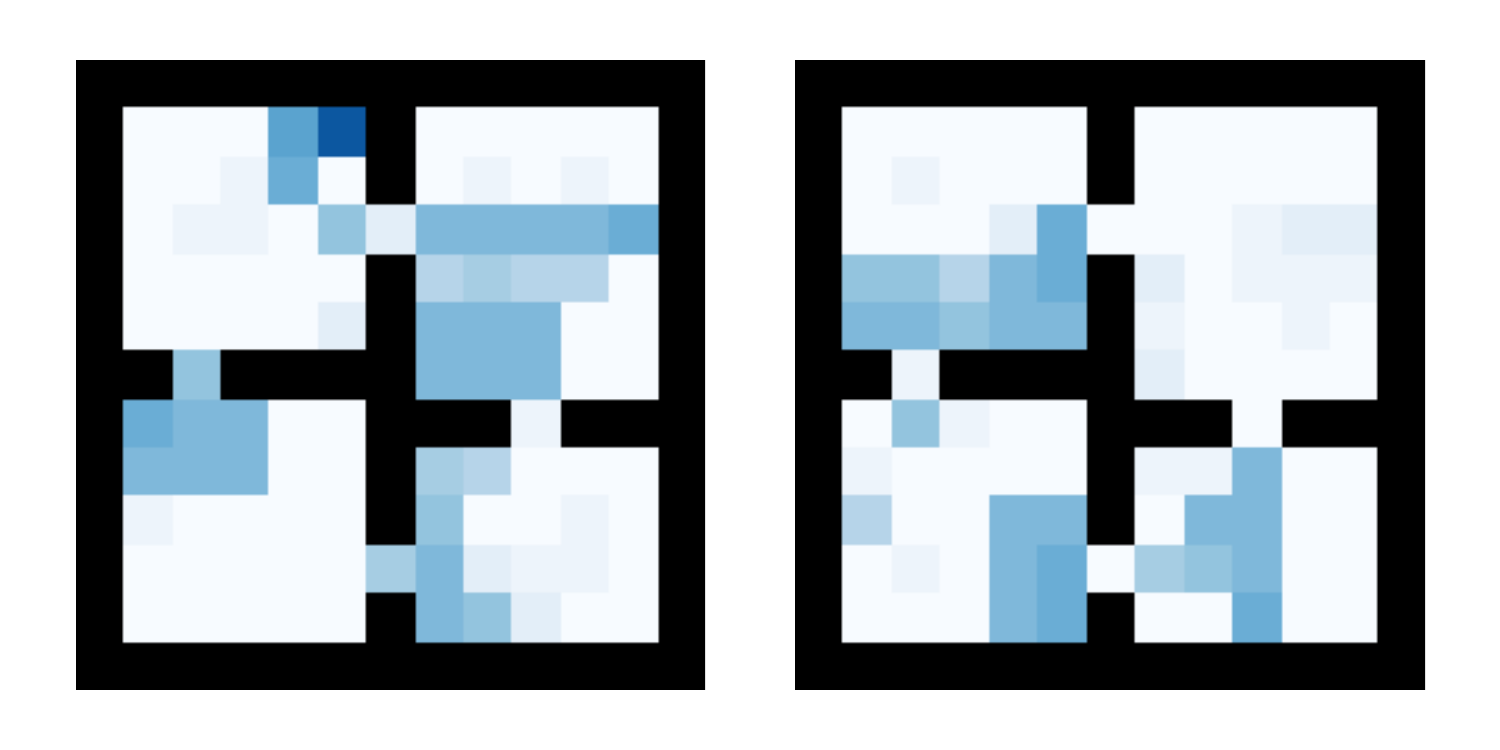}
    \caption{$\mu$ (task 1)}
\end{subfigure}
\begin{subfigure}{0.25\textwidth}
    \centering
    \includegraphics[width=\textwidth]{figures/684_mu_all_tasks.pdf}
    \caption{$\mu$ (all tasks)}
\end{subfigure}
\caption{The learned options using FPOC with 2 adjustable options ($c = 0$, best $\bar c$ and $\eta$). For FPOC, the interest function is learned. We show the learned interest function in (c).
}
\label{fig: The learned options using FPOC with 2 adjustable options best c=0}
\end{figure*}

\begin{figure*}[h]
\begin{subfigure}{0.25\textwidth}
    \centering
    \includegraphics[width=\textwidth]{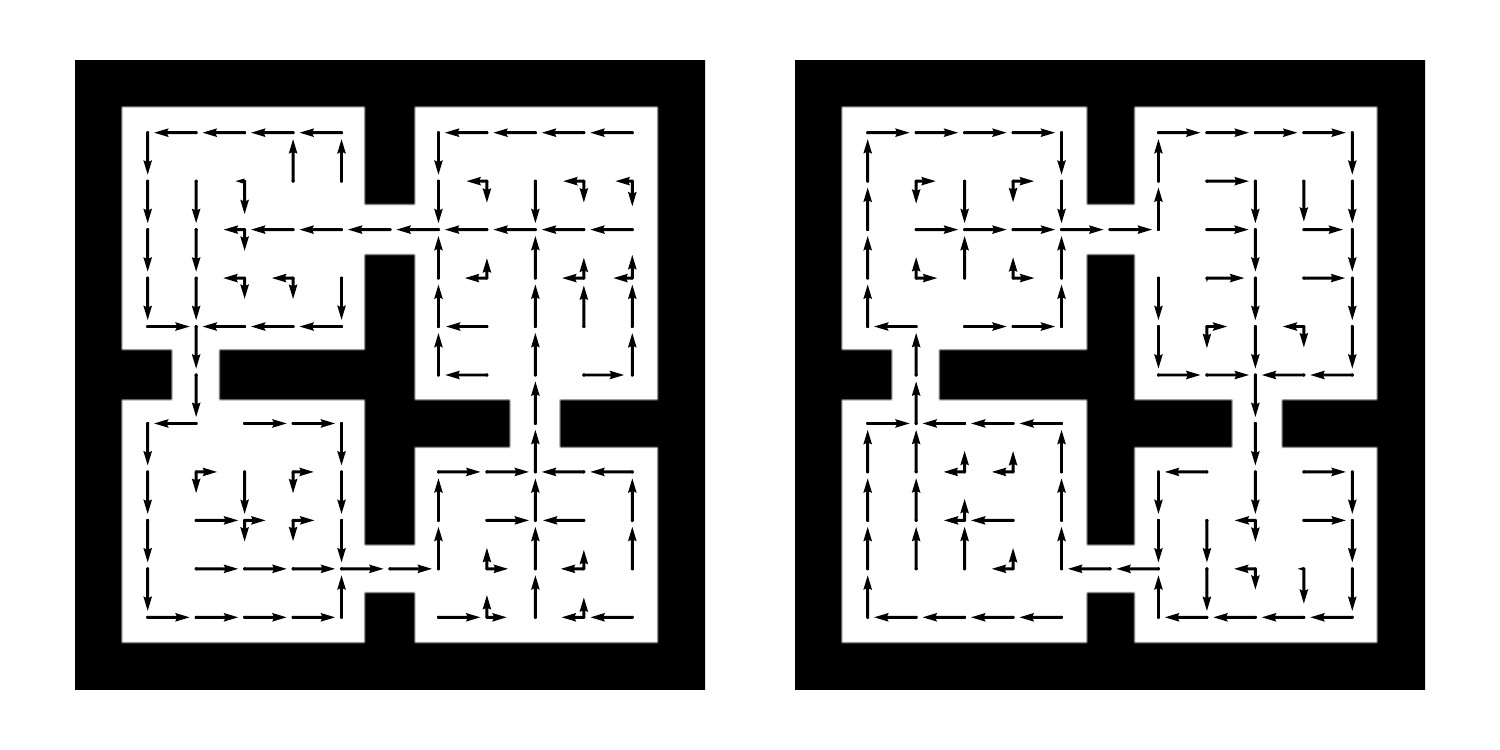}
    \caption{$\pi$}
\end{subfigure}%
\begin{subfigure}{0.25\textwidth}
    \centering
    \includegraphics[width=\textwidth]{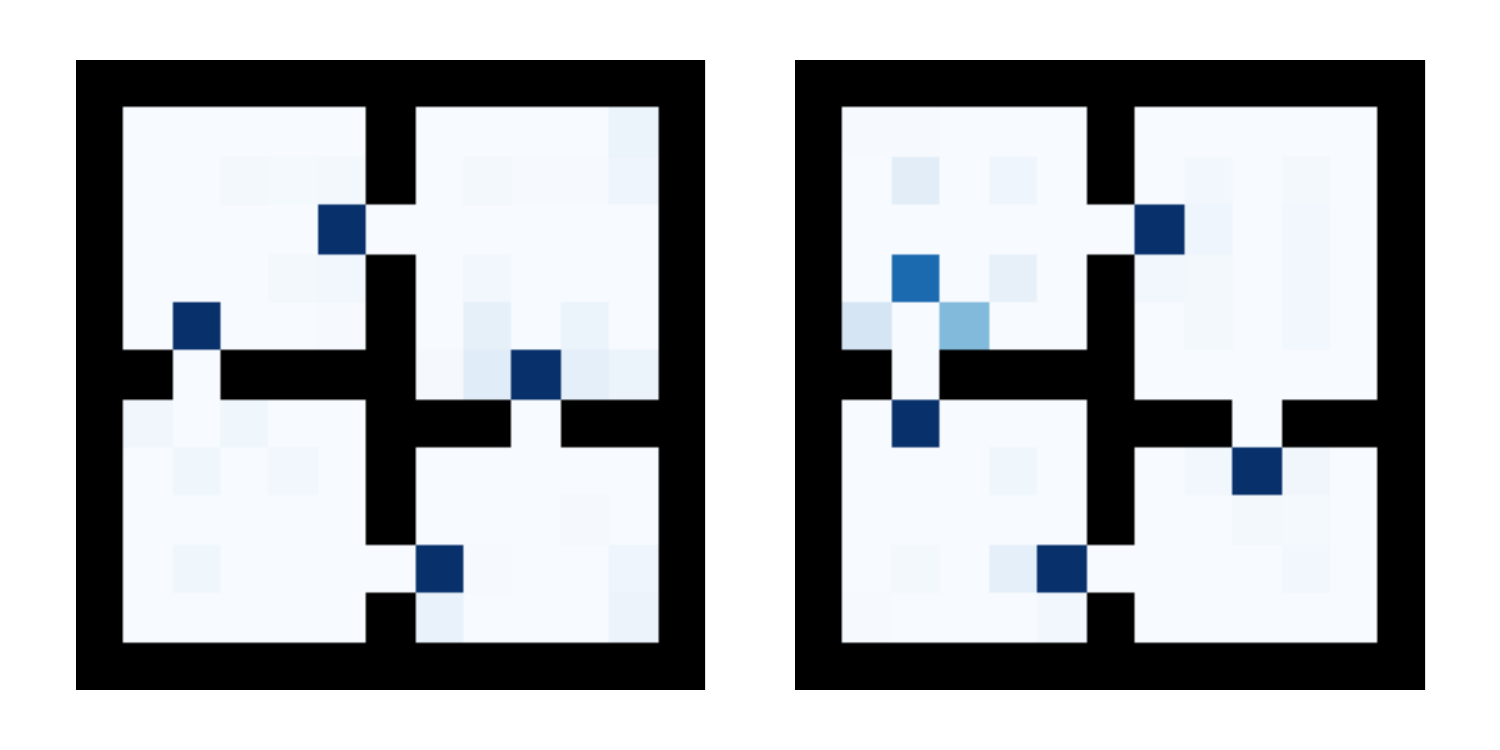}
    \caption{$\beta$}
\end{subfigure}%
\begin{subfigure}{0.25\textwidth}
    \centering
    \includegraphics[width=\textwidth]{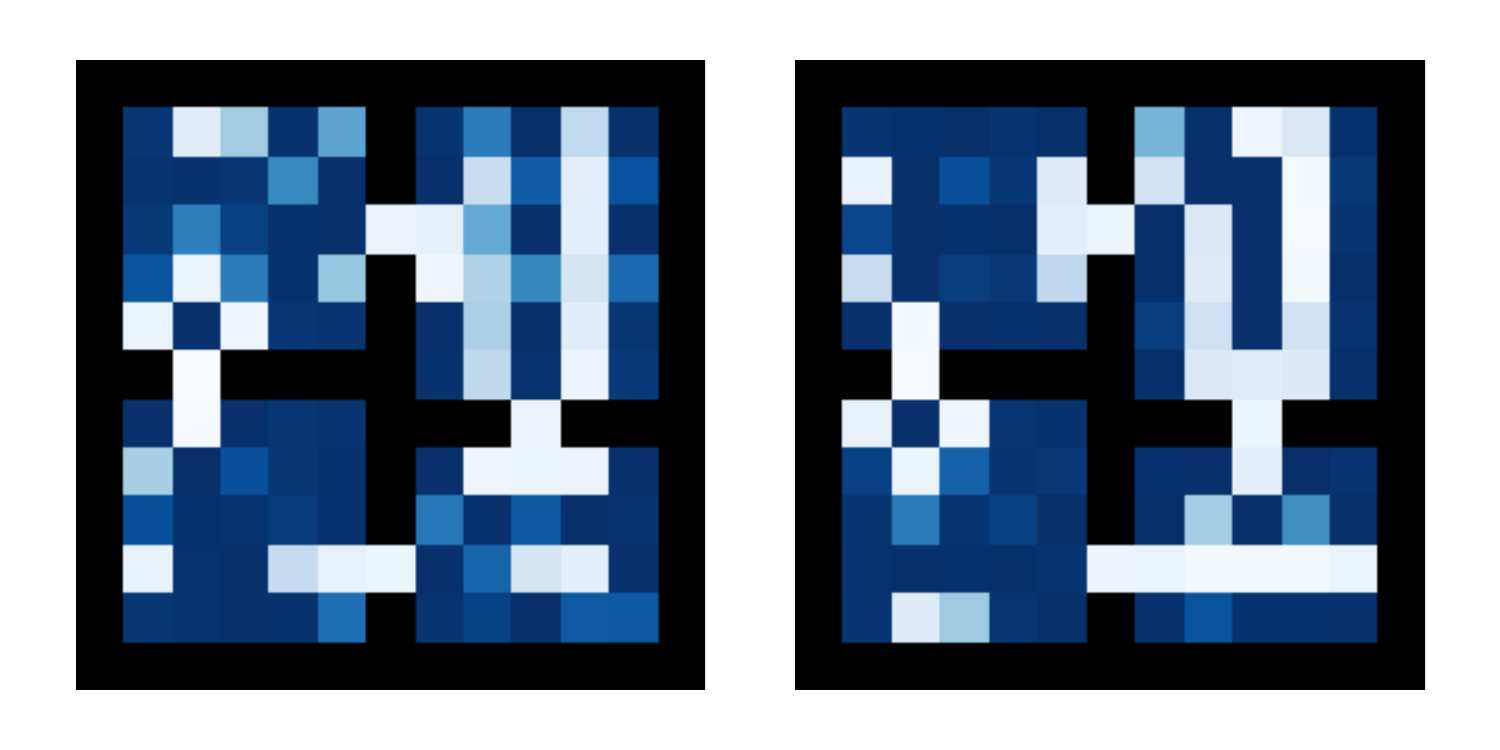}
    \caption{$i$}
\end{subfigure}%
\begin{subfigure}{0.25\textwidth}
    \centering
    \includegraphics[width=\textwidth]{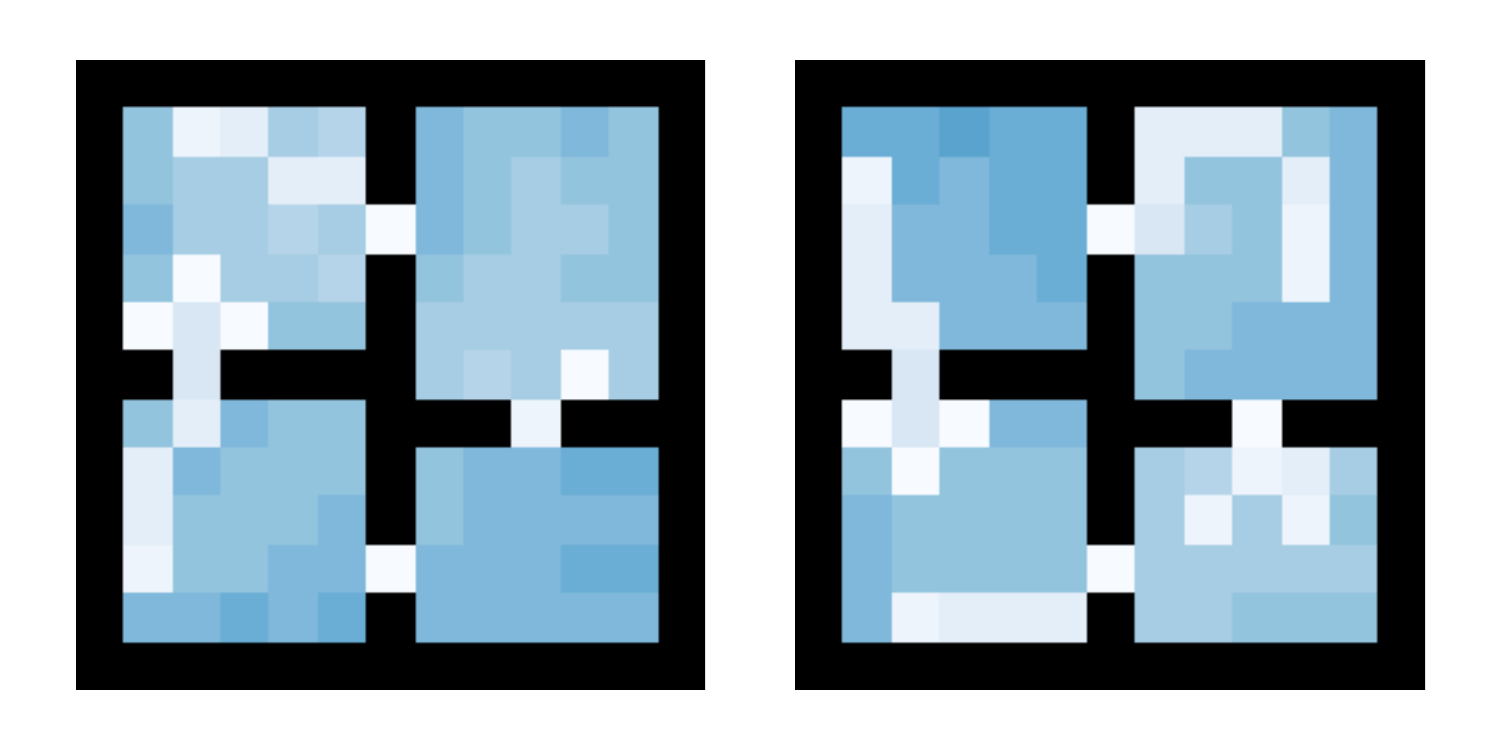}
    \caption{$\mu$ (task 1)}
\end{subfigure}
\begin{subfigure}{0.25\textwidth}
    \centering
    \includegraphics[width=\textwidth]{figures/39_mu_all_tasks.pdf}
    \caption{$\mu$ (all tasks)}
\end{subfigure}
\caption{The learned options using FPOC with 2 adjustable options ($c = 0.2$, best $\bar c$ and $\eta$).
}
\label{fig: The learned options using FPOC with 2 adjustable options best c=0.2}
\end{figure*}

\begin{figure*}[h]
\begin{subfigure}{0.25\textwidth}
    \centering
    \includegraphics[width=\textwidth]{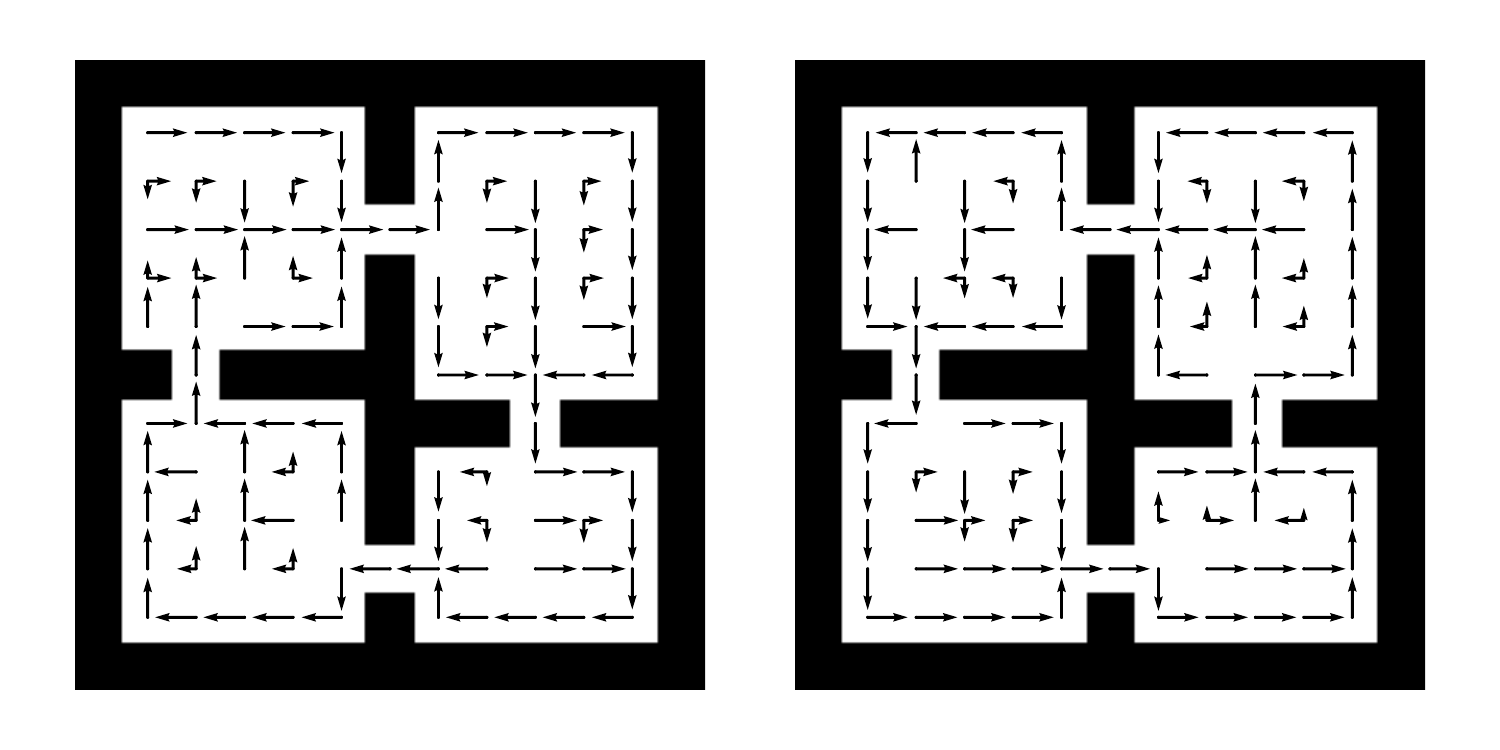}
    \caption{$\pi$}
\end{subfigure}%
\begin{subfigure}{0.25\textwidth}
    \centering
    \includegraphics[width=\textwidth]{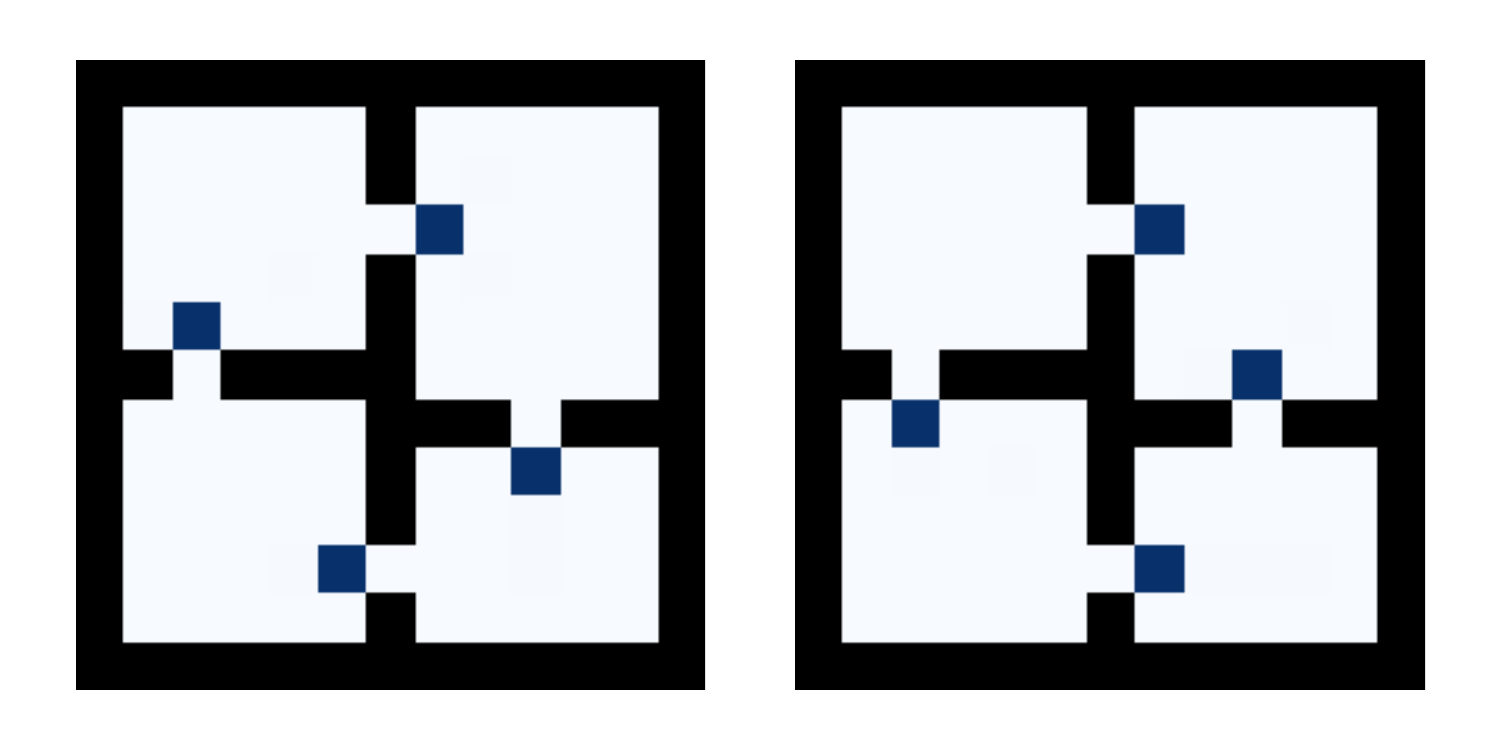}
    \caption{$\beta$}
\end{subfigure}%
\begin{subfigure}{0.25\textwidth}
    \centering
    \includegraphics[width=\textwidth]{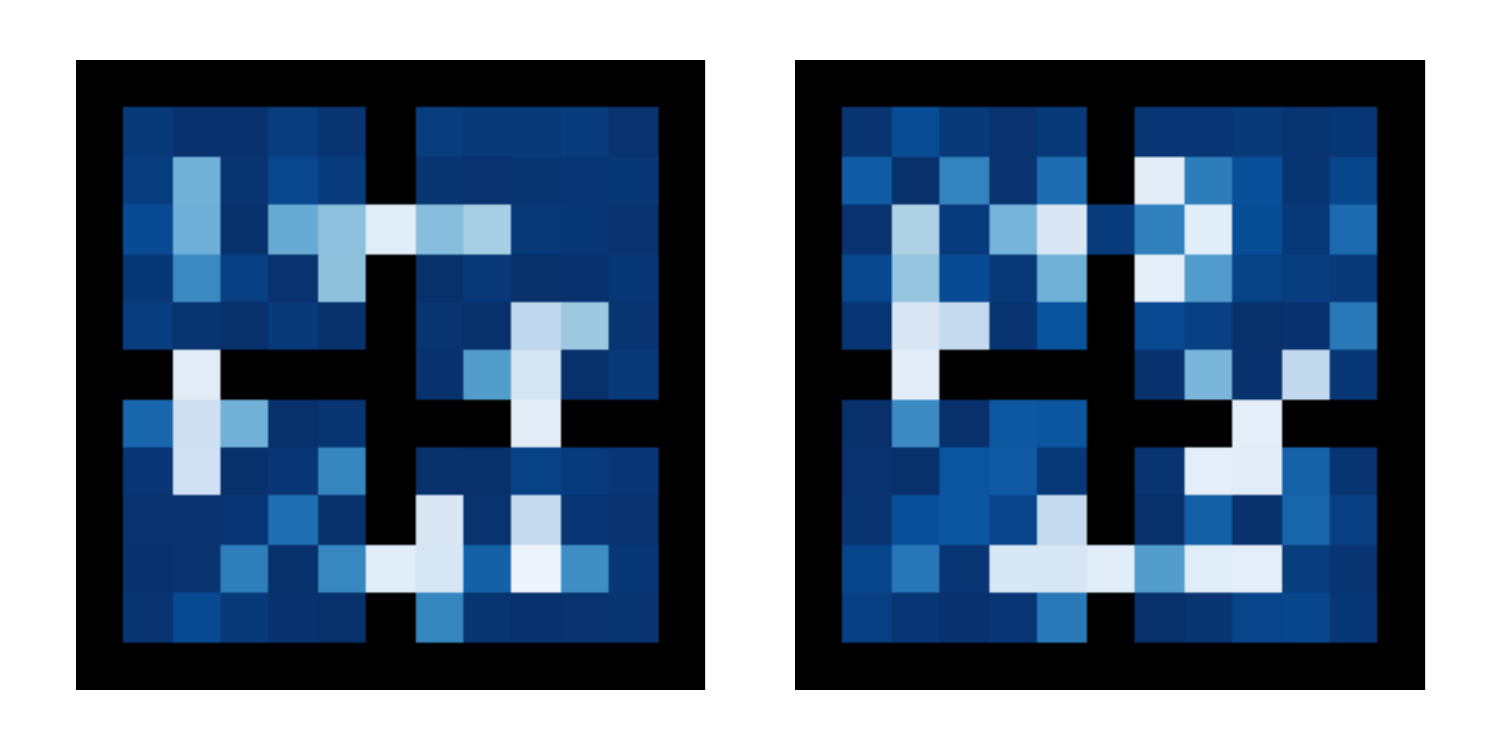}
    \caption{$i$}
\end{subfigure}%
\begin{subfigure}{0.25\textwidth}
    \centering
    \includegraphics[width=\textwidth]{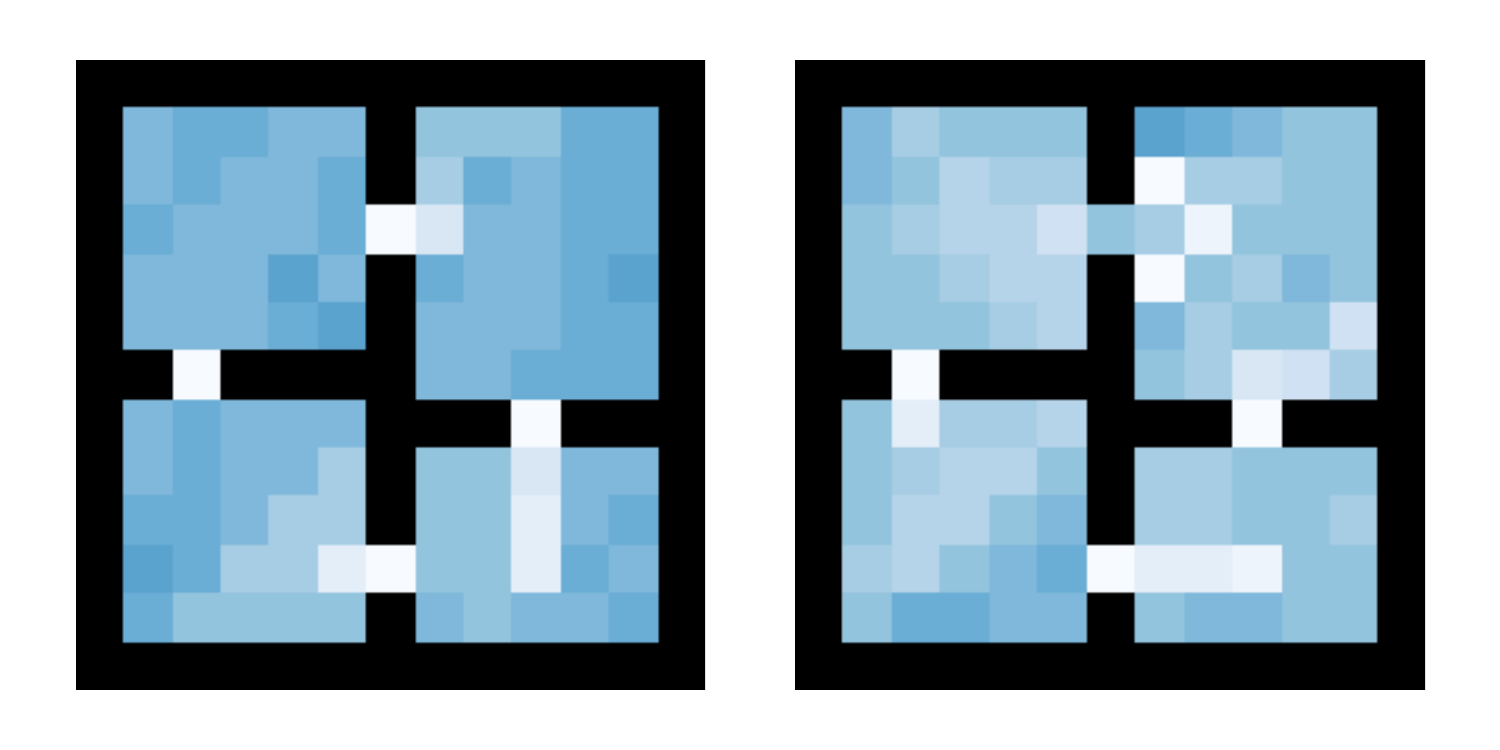}
    \caption{$\mu$ (task 1)}
\end{subfigure}
\begin{subfigure}{0.25\textwidth}
    \centering
    \includegraphics[width=\textwidth]{figures/297_mu_all_tasks.pdf}
    \caption{$\mu$ (all tasks)}
\end{subfigure}
\caption{The learned options using FPOC with 2 adjustable options ($c = 0.6$, best $\bar c$ and $\eta$).
}
\label{fig: The learned options using FPOC with 2 adjustable options best c=0.6}
\end{figure*}

\begin{figure*}[h]
\begin{subfigure}{0.25\textwidth}
    \centering
    \includegraphics[width=\textwidth]{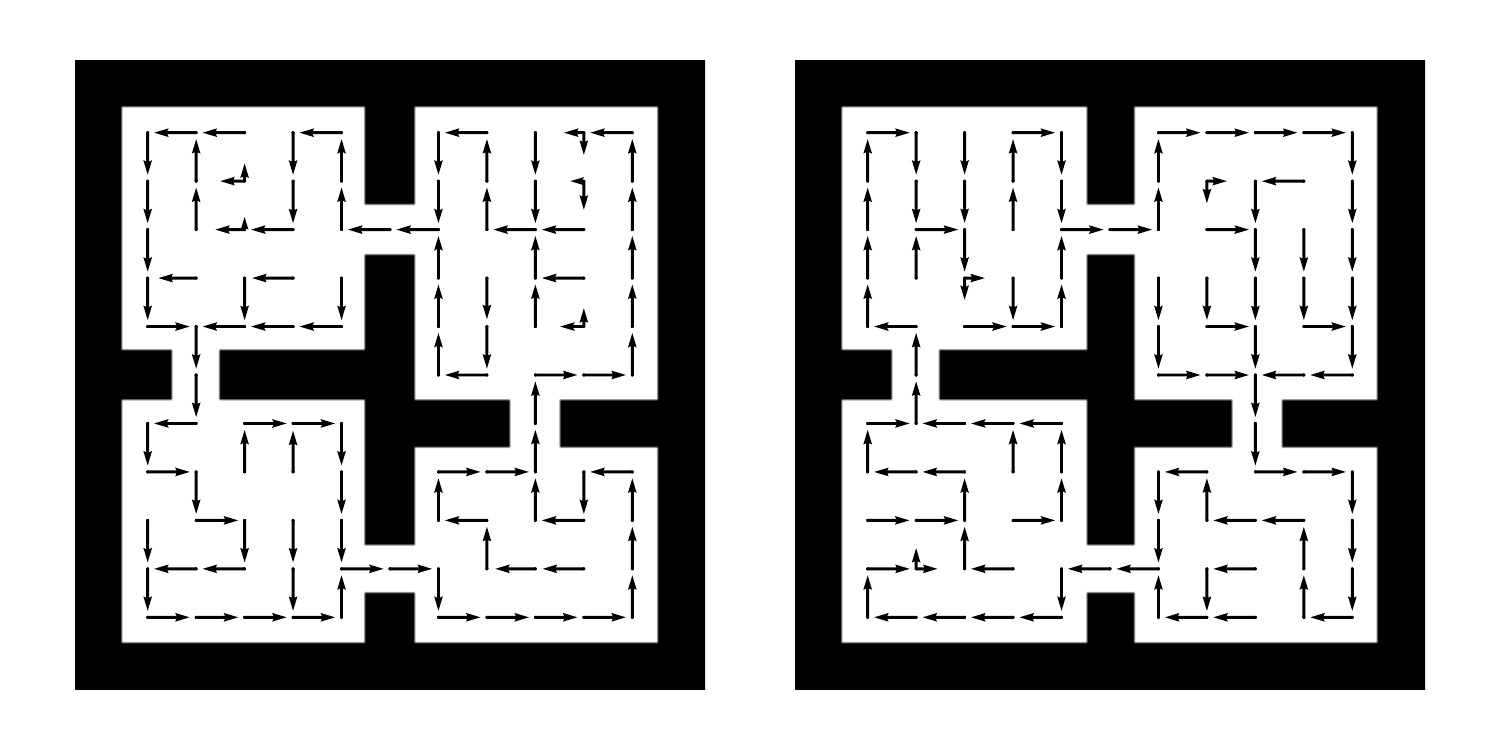}
    \caption{$\pi$}
\end{subfigure}%
\begin{subfigure}{0.25\textwidth}
    \centering
    \includegraphics[width=\textwidth]{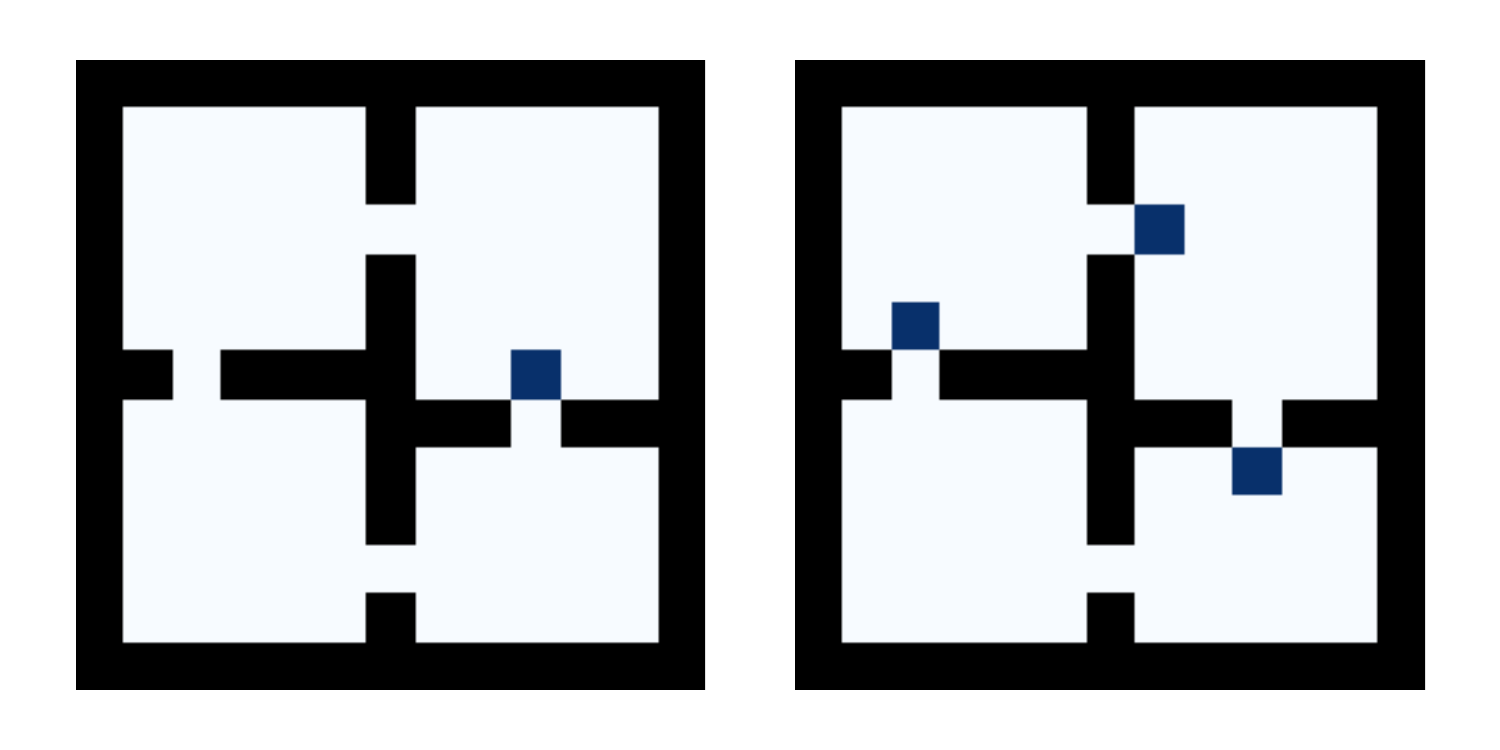}
    \caption{$\beta$}
\end{subfigure}%
\begin{subfigure}{0.25\textwidth}
    \centering
    \includegraphics[width=\textwidth]{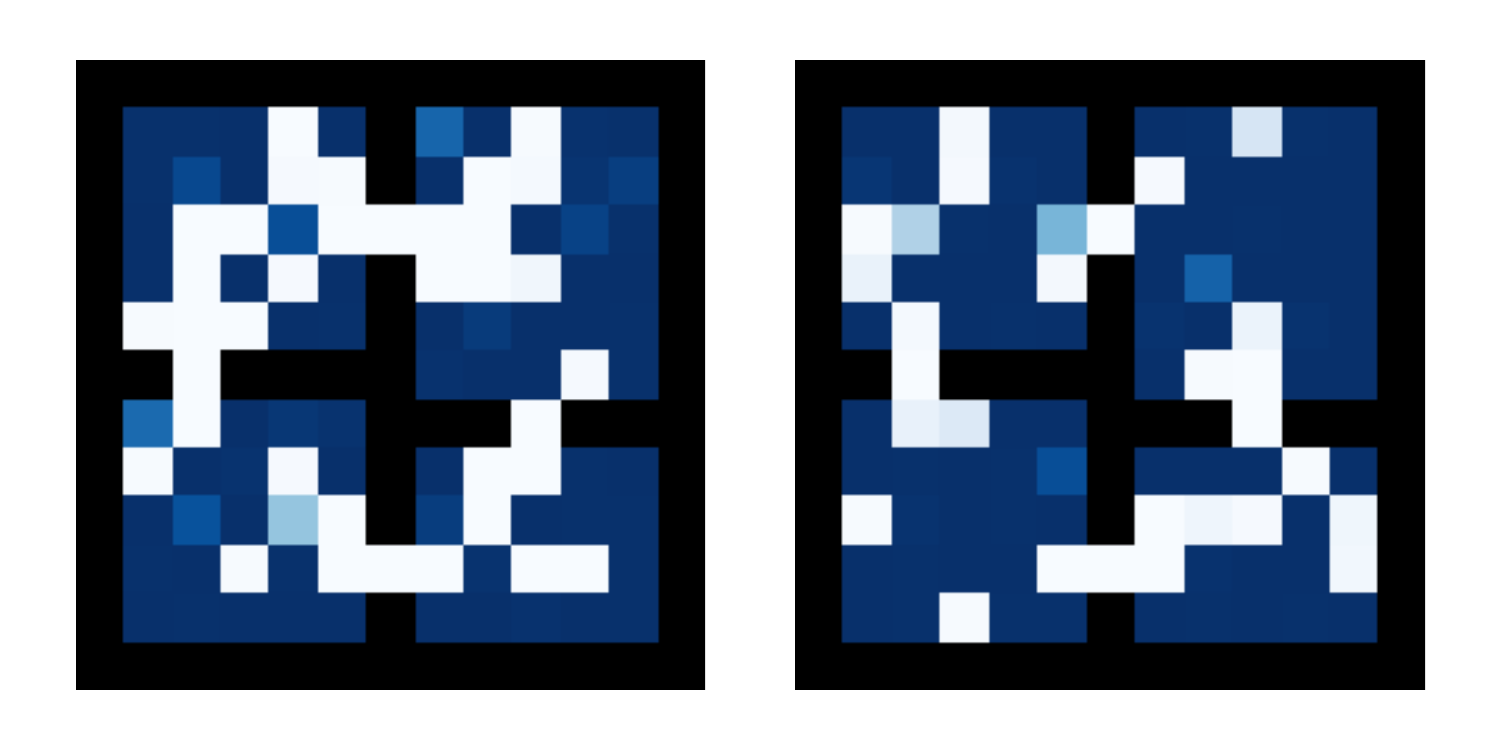}
    \caption{$i$}
\end{subfigure}%
\begin{subfigure}{0.25\textwidth}
    \centering
    \includegraphics[width=\textwidth]{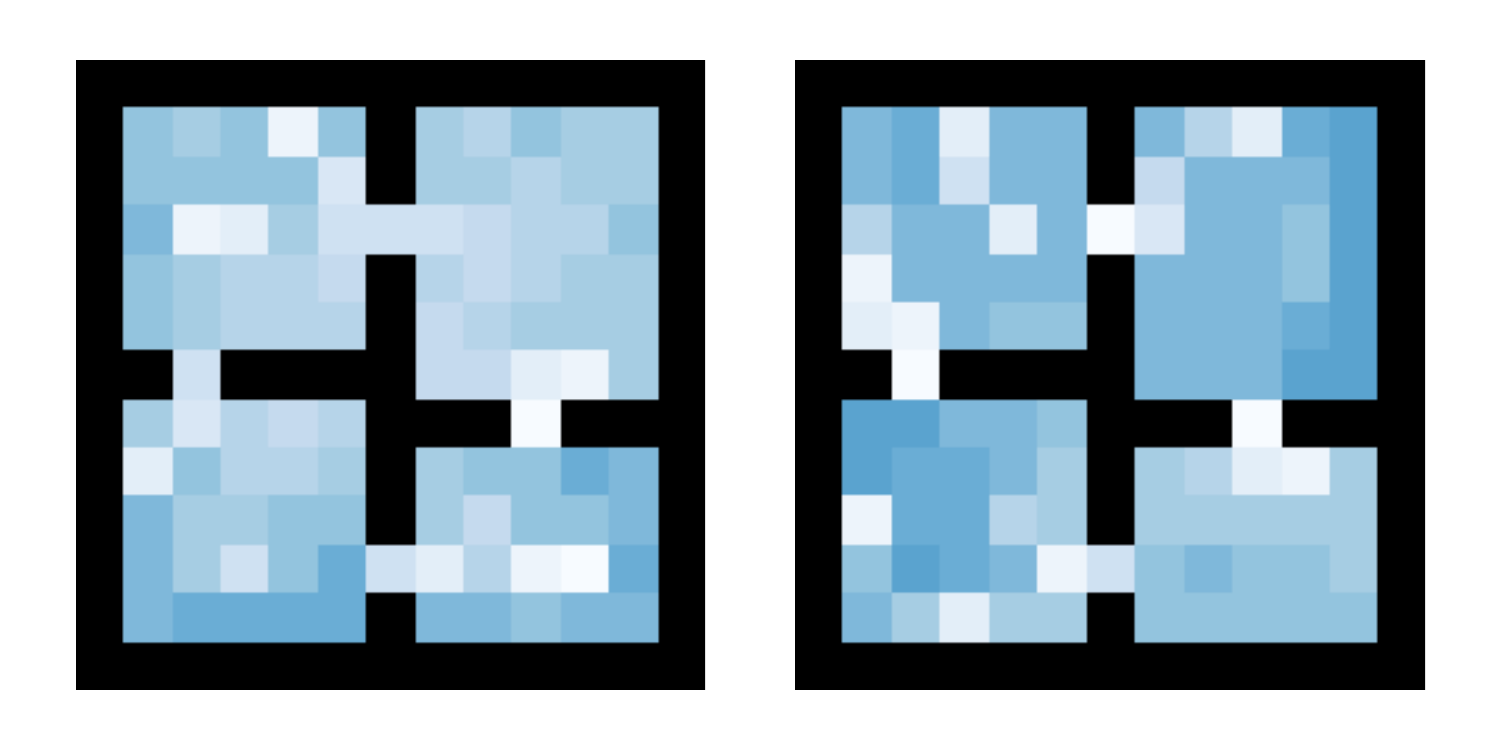}
    \caption{$\mu$ (task 1)}
\end{subfigure}
\begin{subfigure}{0.25\textwidth}
    \centering
    \includegraphics[width=\textwidth]{figures/801_mu_all_tasks.pdf}
    \caption{$\mu$ (all tasks)}
\end{subfigure}
\caption{The learned options using FPOC with 2 adjustable options ($c = 1$, best $\bar c$ and $\eta$).
}
\label{fig: The learned options using FPOC with 2 adjustable options best c=1}
\end{figure*}

\begin{figure*}[h]
\begin{subfigure}{0.5\textwidth}
    \centering
    \includegraphics[width=\textwidth]{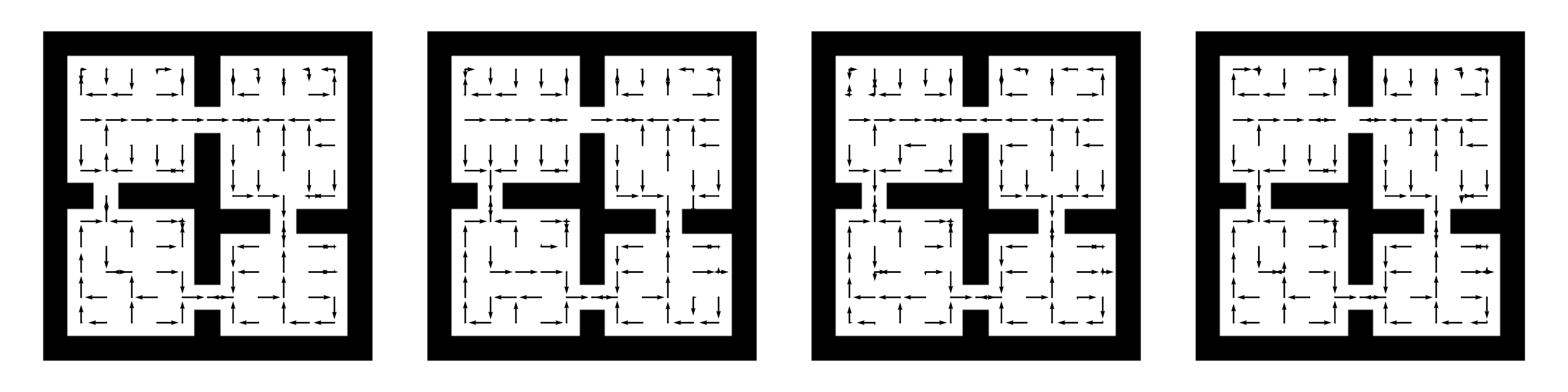}
    \caption{$\pi$}
\end{subfigure}%
\begin{subfigure}{0.5\textwidth}
    \centering
    \includegraphics[width=\textwidth]{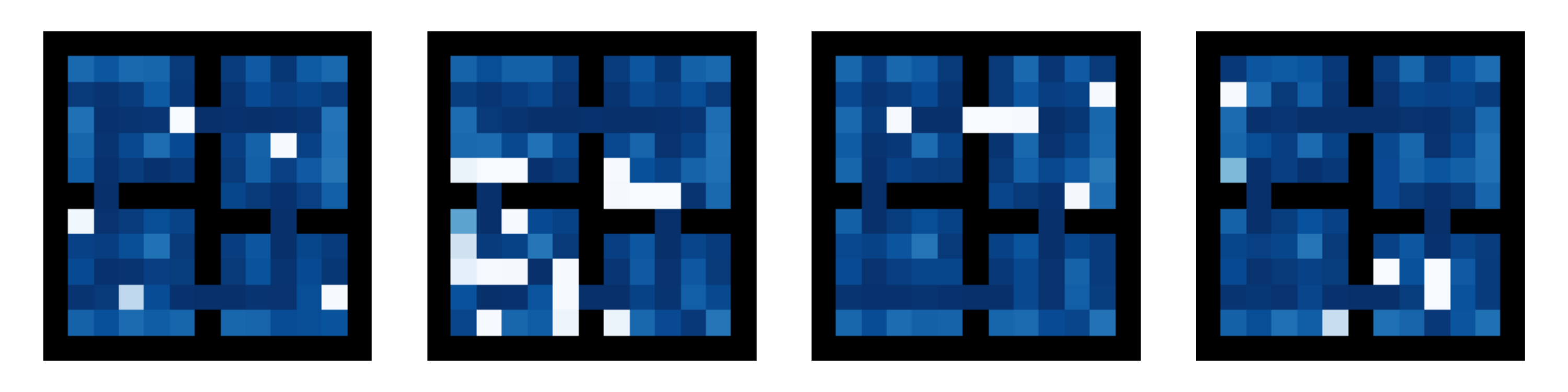}
    \caption{$\beta$}
\end{subfigure}
\begin{subfigure}{0.5\textwidth}
    \centering
    \includegraphics[width=\textwidth]{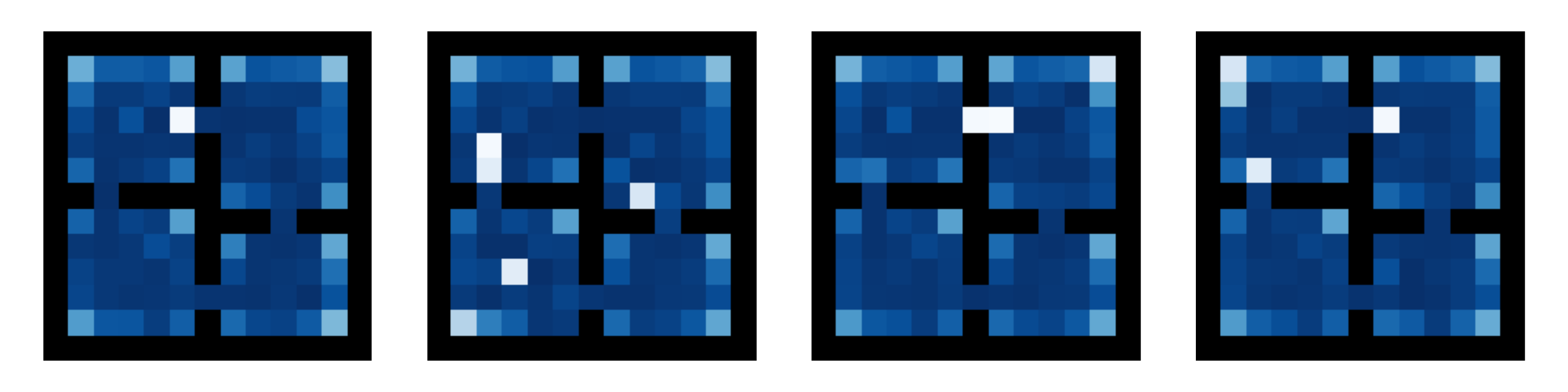}
    \caption{$i$}
\end{subfigure}%
\begin{subfigure}{0.5\textwidth}
    \centering
    \includegraphics[width=\textwidth]{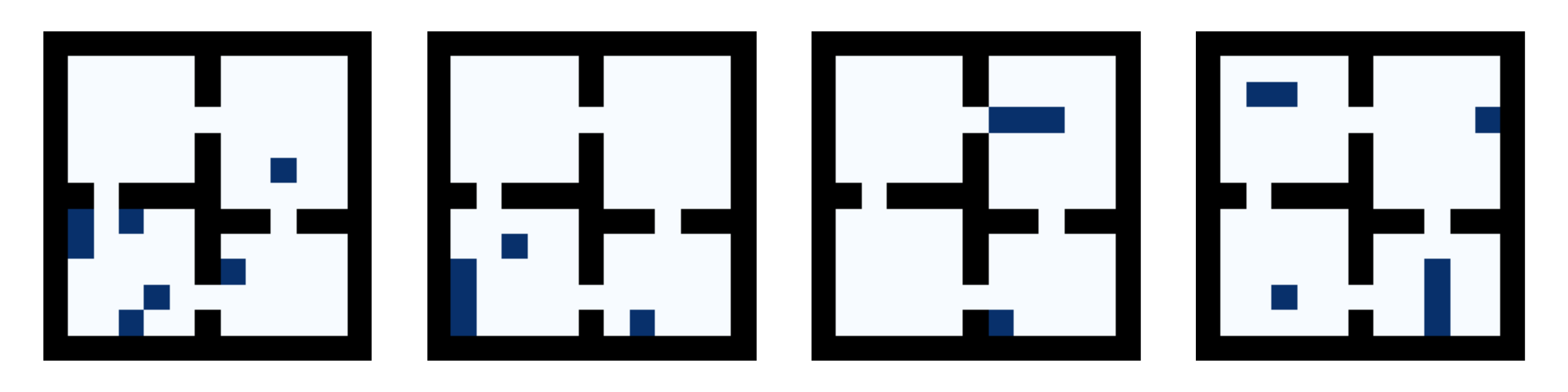}
    \caption{$\mu$ (task 1) }
\end{subfigure} 
\begin{subfigure}{0.5\textwidth}
    \centering
    \includegraphics[width=\textwidth]{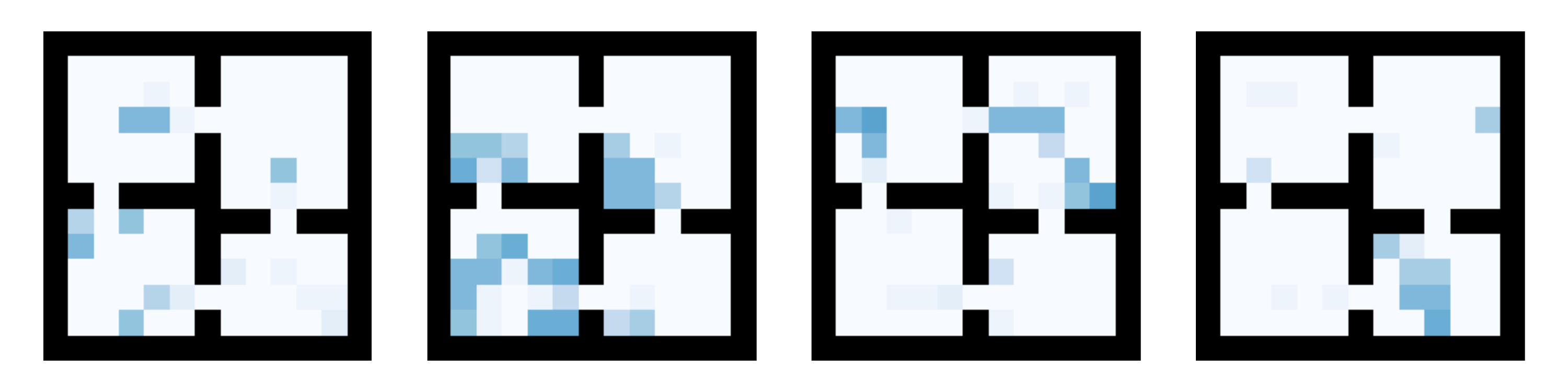}
    \caption{$\mu$ (all tasks) }
\end{subfigure} 
\caption{The learned options using FPOC with 4 adjustable options ($c = 0$, best $\bar c$ and $\eta$).
}
\label{fig: The learned options using FPOC with 4 adjustable options best c=0}
\end{figure*}

\begin{figure*}[h]
\begin{subfigure}{0.5\textwidth}
    \centering
    \includegraphics[width=\textwidth]{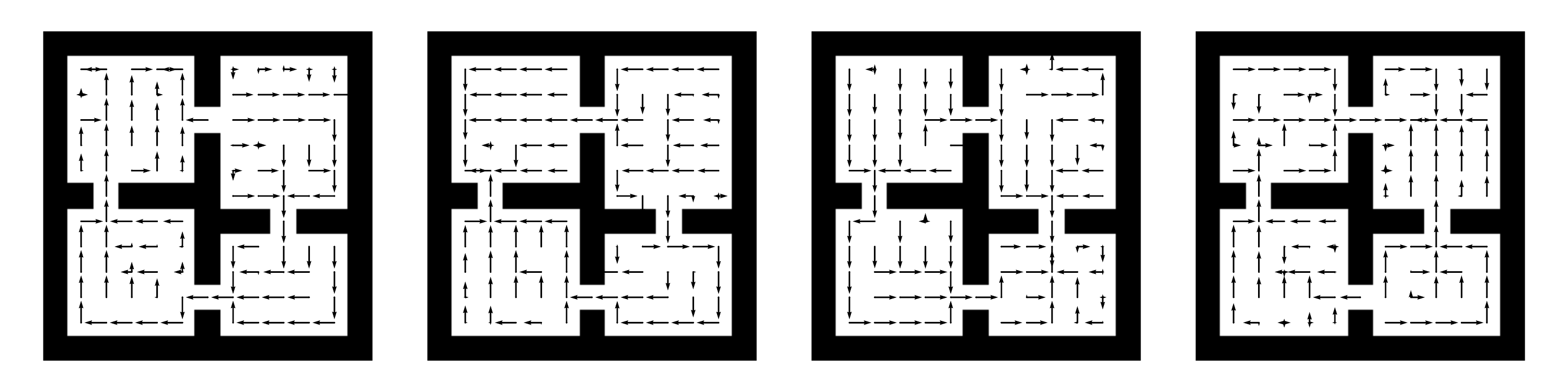}
    \caption{$\pi$}
\end{subfigure}%
\begin{subfigure}{0.5\textwidth}
    \centering
    \includegraphics[width=\textwidth]{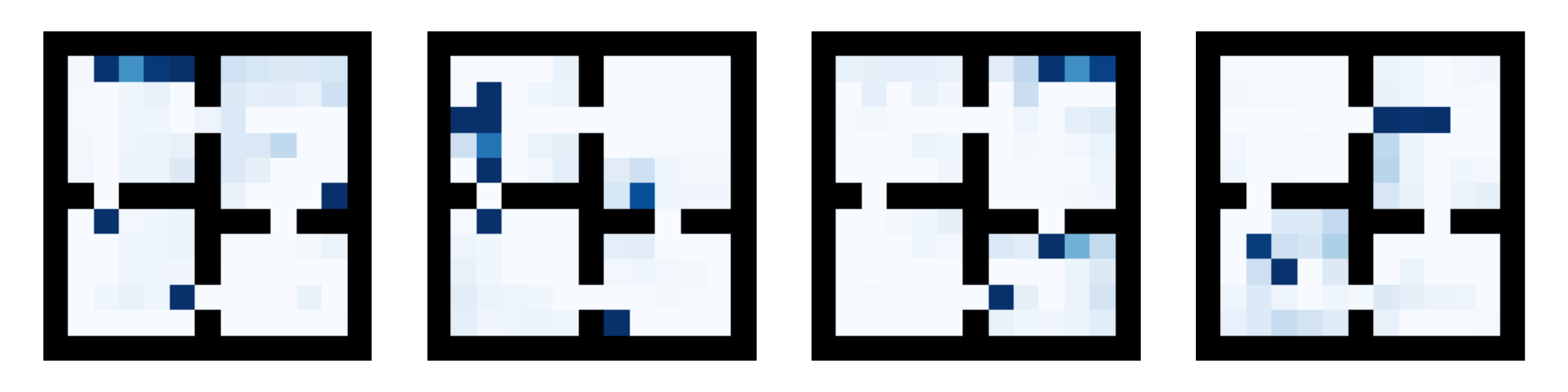}
    \caption{$\beta$}
\end{subfigure}
\begin{subfigure}{0.5\textwidth}
    \centering
    \includegraphics[width=\textwidth]{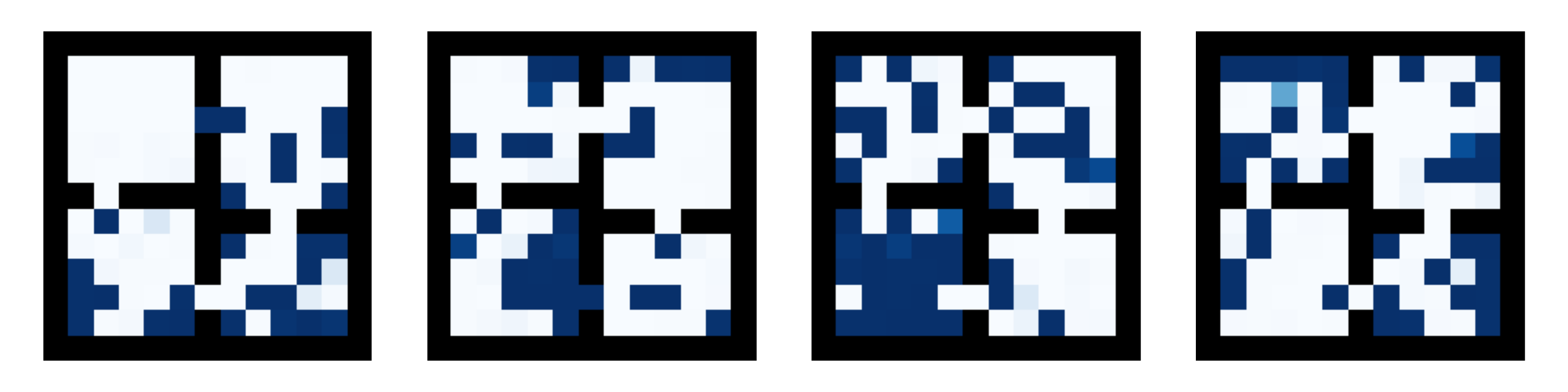}
    \caption{$i$}
\end{subfigure}%
\begin{subfigure}{0.5\textwidth}
    \centering
    \includegraphics[width=\textwidth]{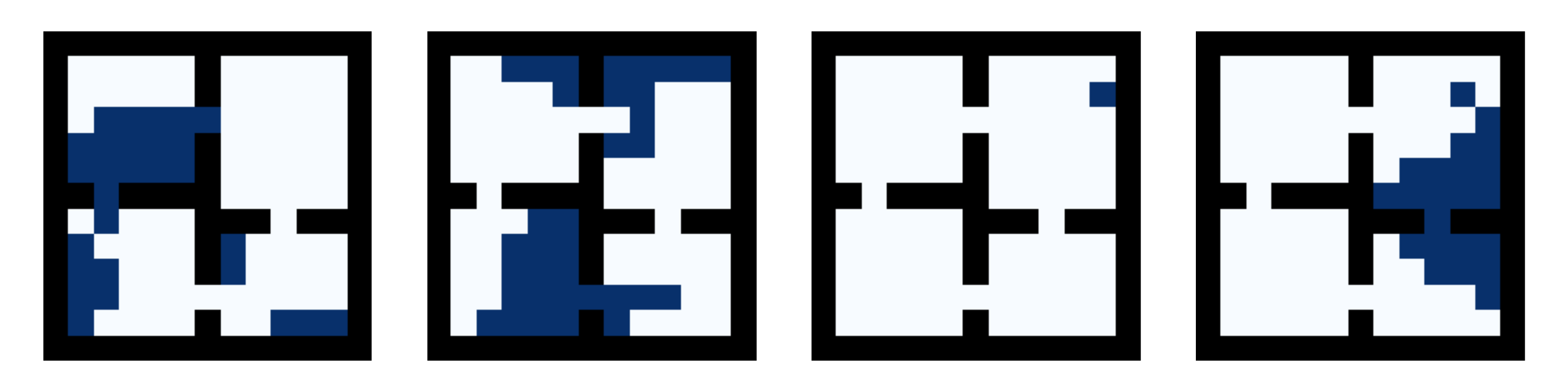}
    \caption{$\mu$ (task 1) }
\end{subfigure} 
\begin{subfigure}{0.5\textwidth}
    \centering
    \includegraphics[width=\textwidth]{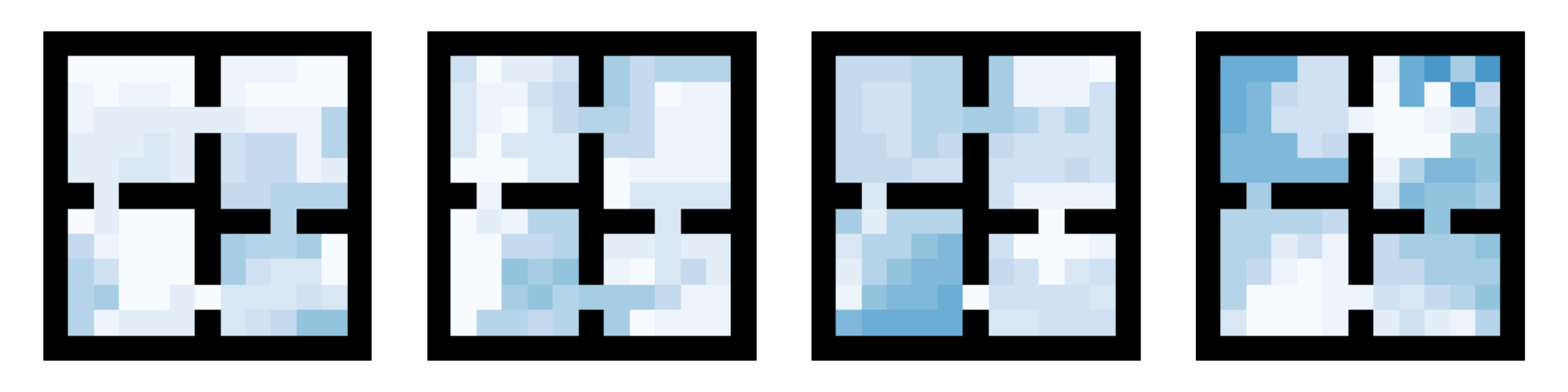}
    \caption{$\mu$ (all tasks) }
\end{subfigure} 
\caption{The learned options using FPOC with 4 adjustable options ($c = 0.2$, best $\bar c$ and $\eta$).
}
\label{fig: The learned options using FPOC with 4 adjustable options best c=0.2}
\end{figure*}

\begin{figure*}[h]
\begin{subfigure}{0.5\textwidth}
    \centering
    \includegraphics[width=\textwidth]{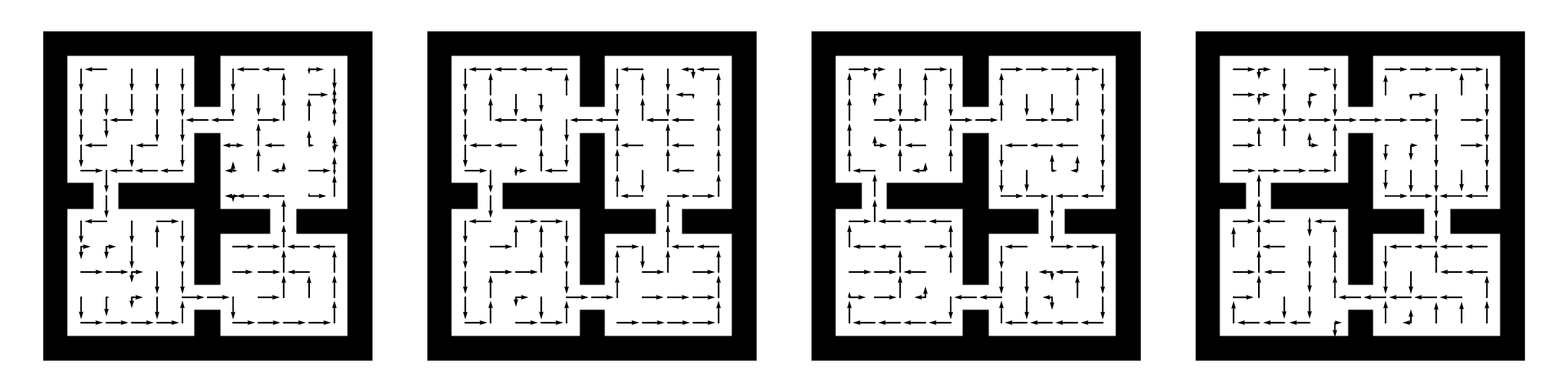}
    \caption{$\pi$}
\end{subfigure}%
\begin{subfigure}{0.5\textwidth}
    \centering
    \includegraphics[width=\textwidth]{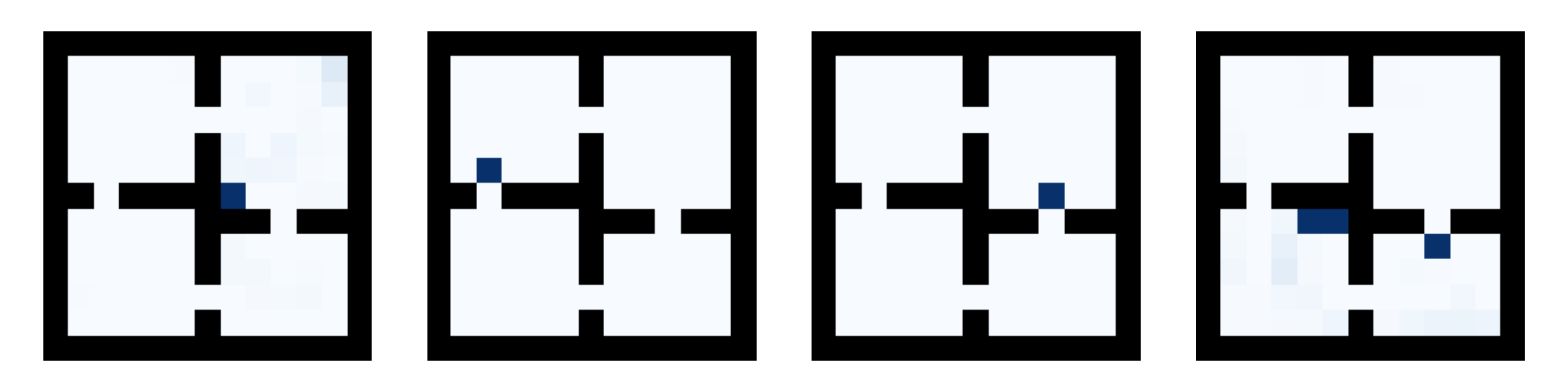}
    \caption{$\beta$}
\end{subfigure}
\begin{subfigure}{0.5\textwidth}
    \centering
    \includegraphics[width=\textwidth]{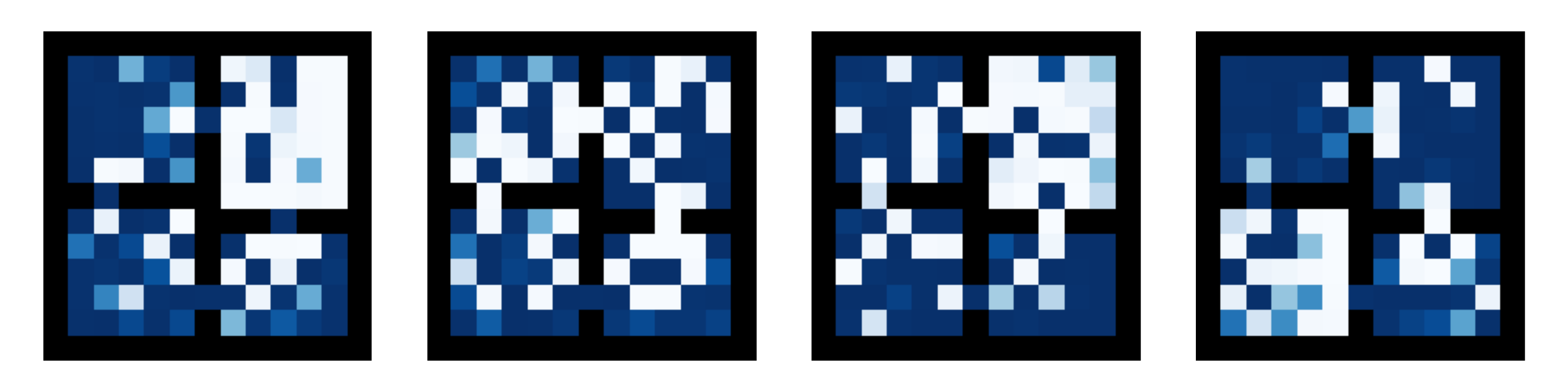}
    \caption{$i$}
\end{subfigure}%
\begin{subfigure}{0.5\textwidth}
    \centering
    \includegraphics[width=\textwidth]{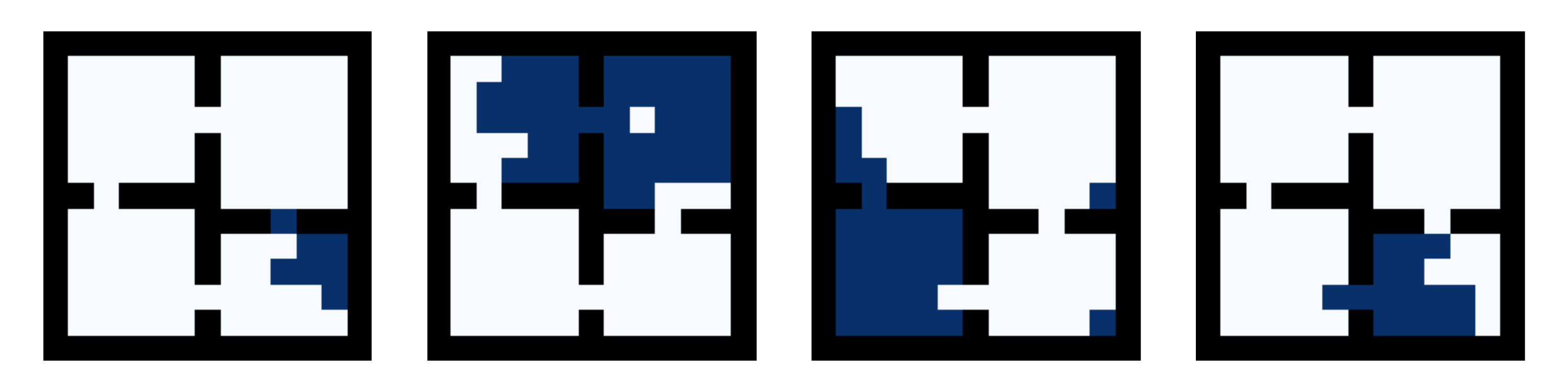}
    \caption{$\mu$ (task 1) }
\end{subfigure} 
\begin{subfigure}{0.5\textwidth}
    \centering
    \includegraphics[width=\textwidth]{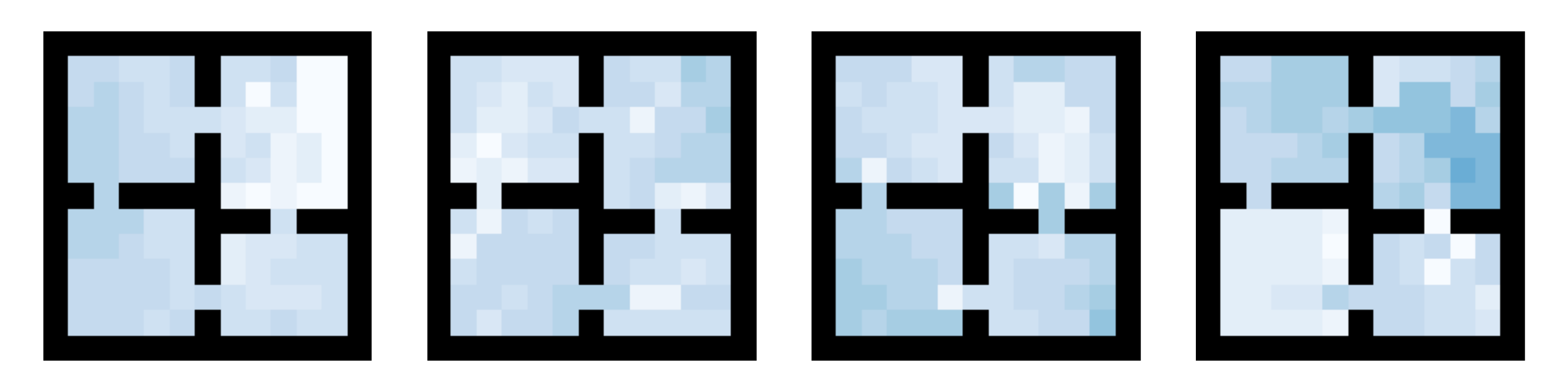}
    \caption{$\mu$ (all tasks) }
\end{subfigure} 
\caption{The learned options using FPOC with 4 adjustable options ($c = 0.6$, best $\bar c$ and $\eta$).
}
\label{fig: The learned options using FPOC with 4 adjustable options best c=0.6}
\end{figure*}

\begin{figure*}[h]
\begin{subfigure}{0.5\textwidth}
    \centering
    \includegraphics[width=\textwidth]{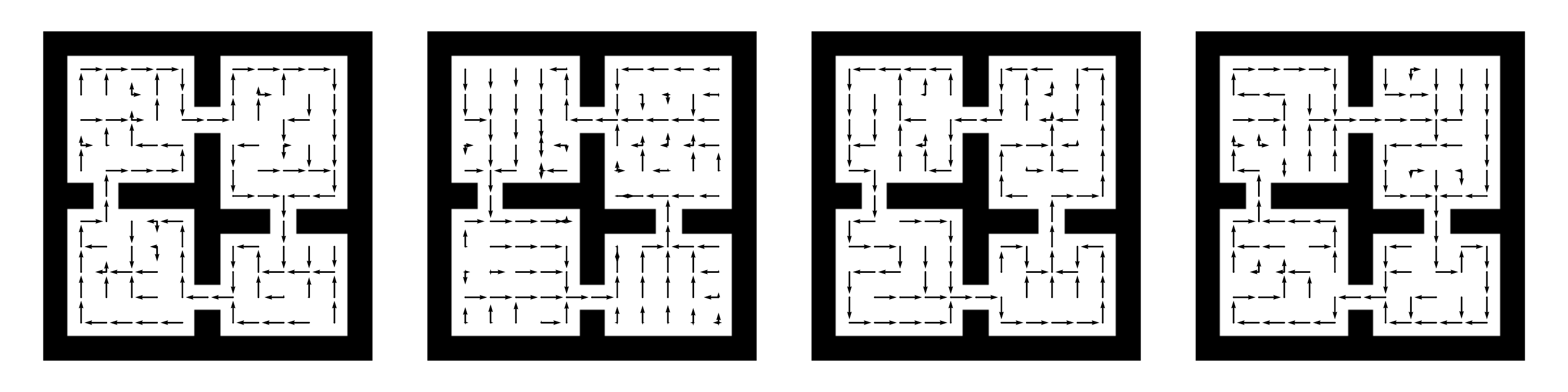}
    \caption{$\pi$}
\end{subfigure}%
\begin{subfigure}{0.5\textwidth}
    \centering
    \includegraphics[width=\textwidth]{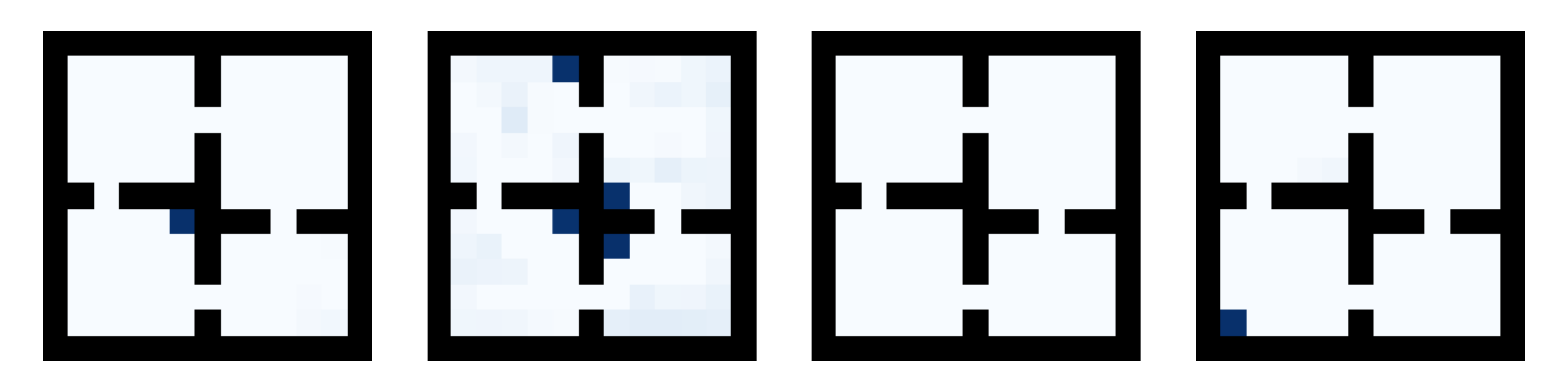}
    \caption{$\beta$}
\end{subfigure}
\begin{subfigure}{0.5\textwidth}
    \centering
    \includegraphics[width=\textwidth]{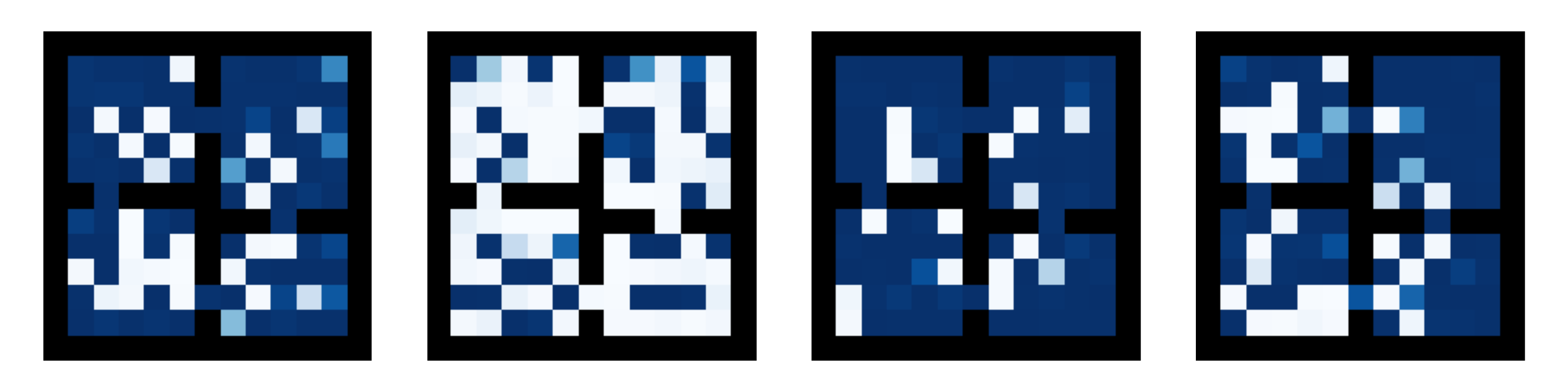}
    \caption{$i$}
\end{subfigure}%
\begin{subfigure}{0.5\textwidth}
    \centering
    \includegraphics[width=\textwidth]{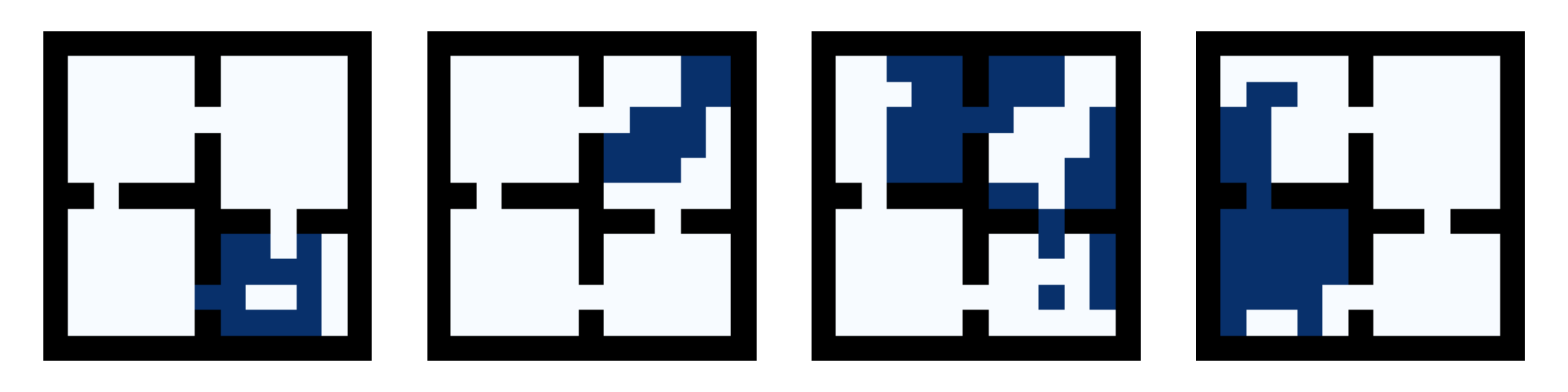}
    \caption{$\mu$ (task 1) }
\end{subfigure} 
\begin{subfigure}{0.5\textwidth}
    \centering
    \includegraphics[width=\textwidth]{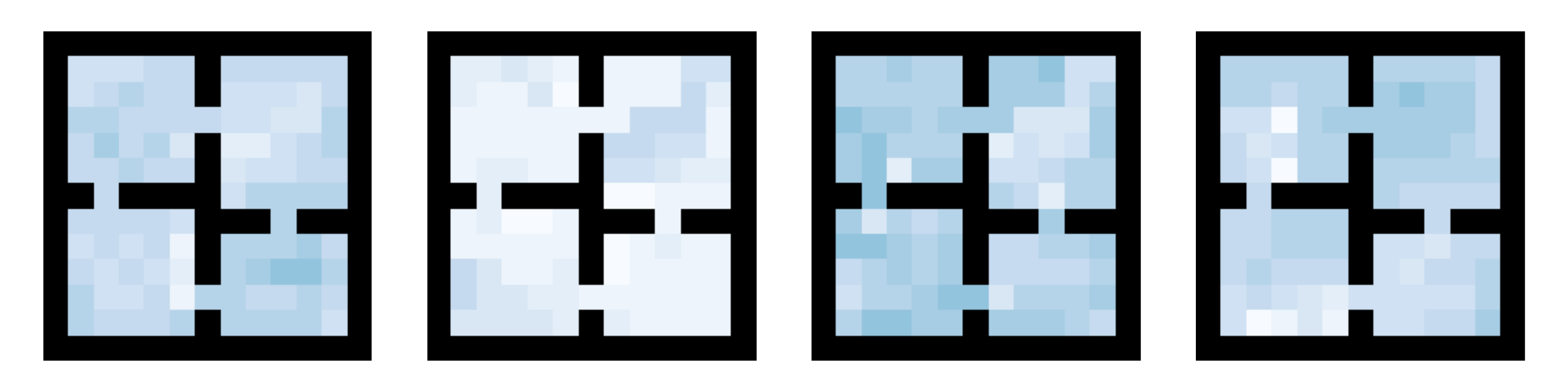}
    \caption{$\mu$ (all tasks) }
\end{subfigure} 
\caption{The learned options using FPOC with 4 adjustable options ($c = 1$, best $\bar c$ and $\eta$).
}
\label{fig: The learned options using FPOC with 4 adjustable options best c=1}
\end{figure*}

\begin{figure*}[h]
\begin{subfigure}{\textwidth}
    \centering
    \includegraphics[width=\textwidth]{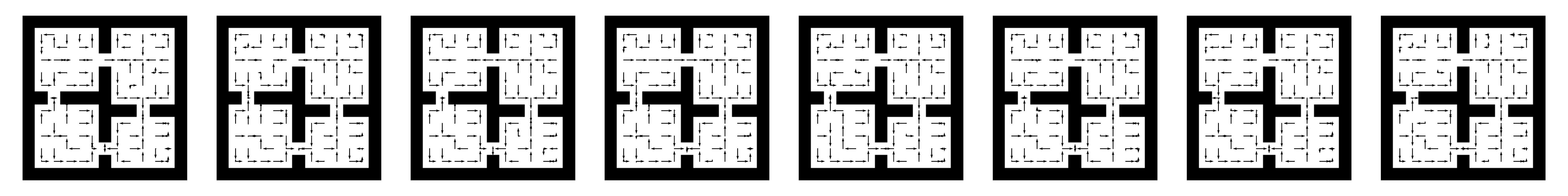}
    \caption{$\pi$}
\end{subfigure}
\begin{subfigure}{\textwidth}
    \centering
    \includegraphics[width=\textwidth]{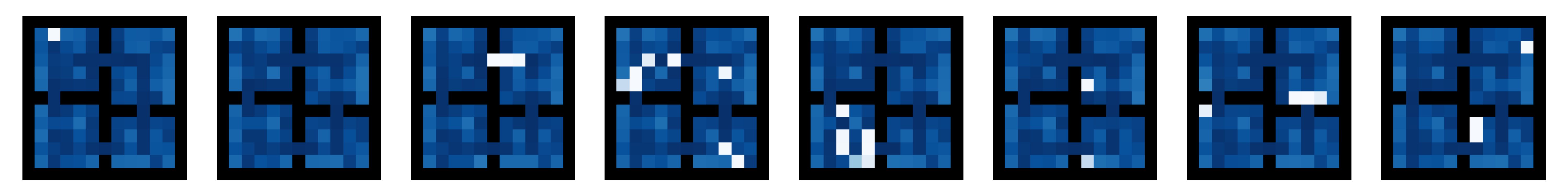}
    \caption{$\beta$}
\end{subfigure}
\begin{subfigure}{\textwidth}
    \centering
    \includegraphics[width=\textwidth]{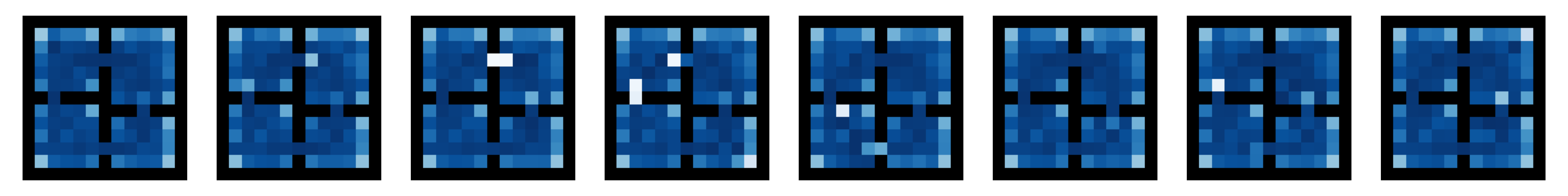}
    \caption{$i$}
\end{subfigure}
\begin{subfigure}{\textwidth}
    \centering
    \includegraphics[width=\textwidth]{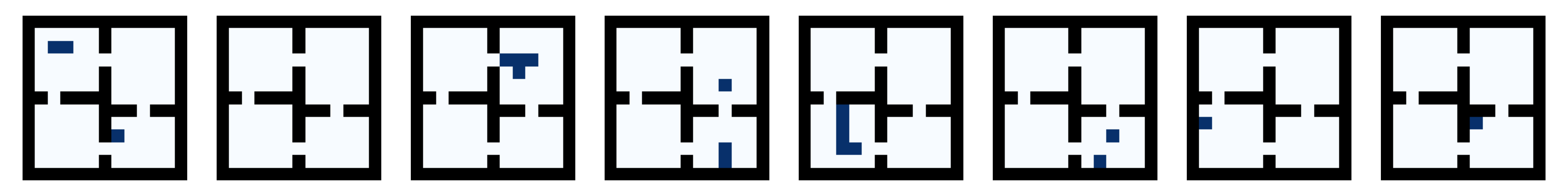}
    \caption{$\mu$ (task 1) }
\end{subfigure}
\begin{subfigure}{\textwidth}
    \centering
    \includegraphics[width=\textwidth]{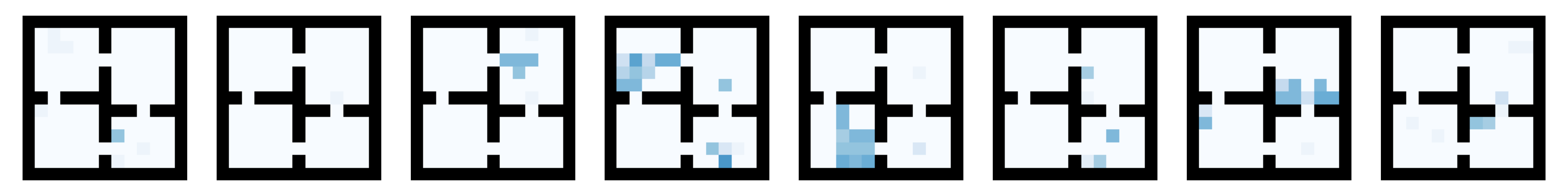}
    \caption{$\mu$ (all tasks) }
\end{subfigure}
\caption{The learned options using FPOC with 8 adjustable options ($c = 0$, best $\bar c$ and $\eta$).
}
\label{fig: The learned options using FPOC with 8 adjustable options best c=0}
\end{figure*}

\begin{figure*}[h]
\begin{subfigure}{\textwidth}
    \centering
    \includegraphics[width=\textwidth]{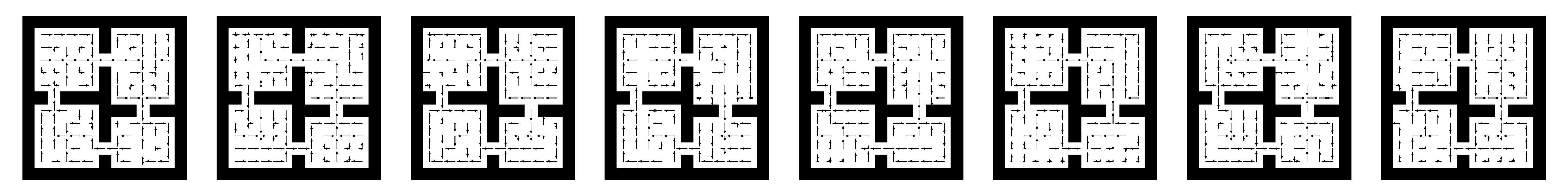}
    \caption{$\pi$}
\end{subfigure}
\begin{subfigure}{\textwidth}
    \centering
    \includegraphics[width=\textwidth]{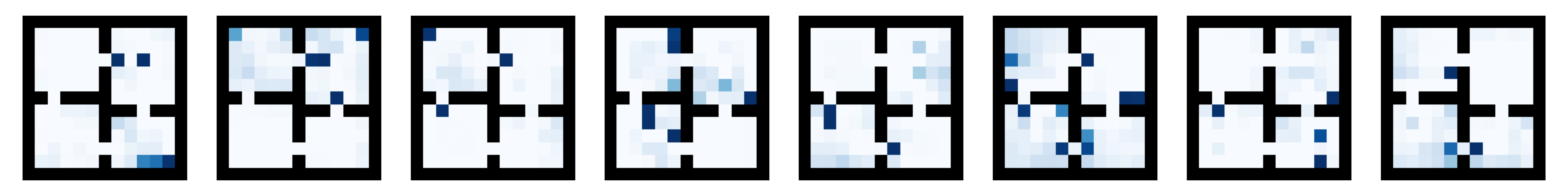}
    \caption{$\beta$}
\end{subfigure}
\begin{subfigure}{\textwidth}
    \centering
    \includegraphics[width=\textwidth]{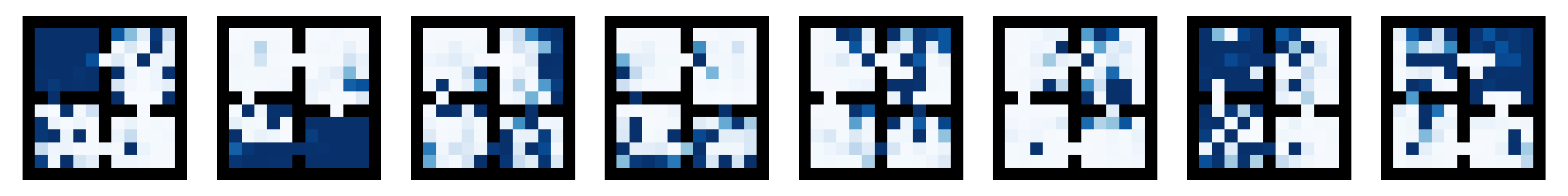}
    \caption{$i$}
\end{subfigure}
\begin{subfigure}{\textwidth}
    \centering
    \includegraphics[width=\textwidth]{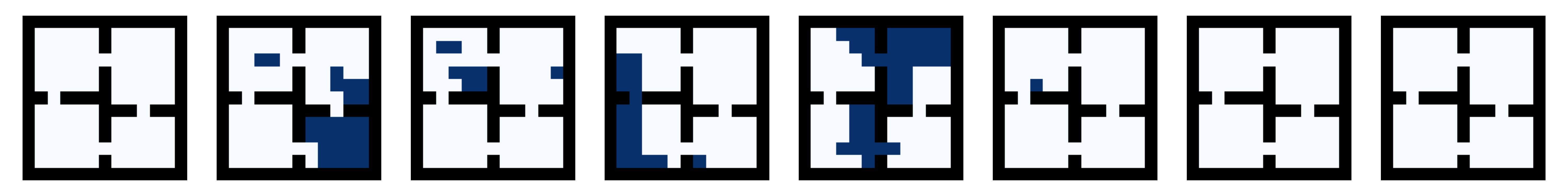}
    \caption{$\mu$ (task 1) }
\end{subfigure}
\begin{subfigure}{\textwidth}
    \centering
    \includegraphics[width=\textwidth]{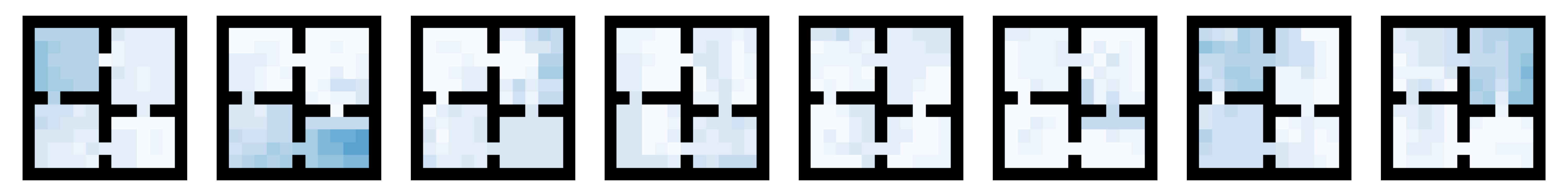}
    \caption{$\mu$ (all tasks) }
\end{subfigure}
\caption{The learned options using FPOC with 8 adjustable options ($c = 0.2$, best $\bar c$ and $\eta$).
}
\label{fig: The learned options using FPOC with 8 adjustable options best c=0.2}
\end{figure*}

\begin{figure*}[h]
\begin{subfigure}{\textwidth}
    \centering
    \includegraphics[width=\textwidth]{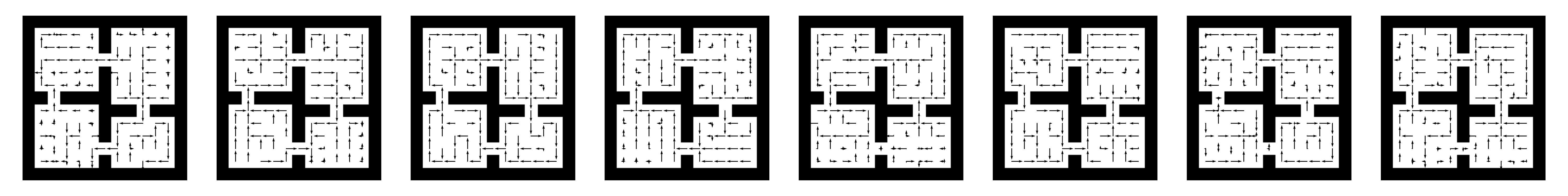}
    \caption{$\pi$}
\end{subfigure}
\begin{subfigure}{\textwidth}
    \centering
    \includegraphics[width=\textwidth]{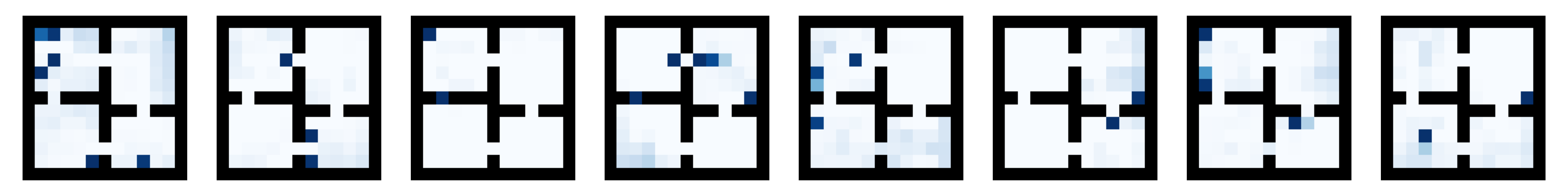}
    \caption{$\beta$}
\end{subfigure}
\begin{subfigure}{\textwidth}
    \centering
    \includegraphics[width=\textwidth]{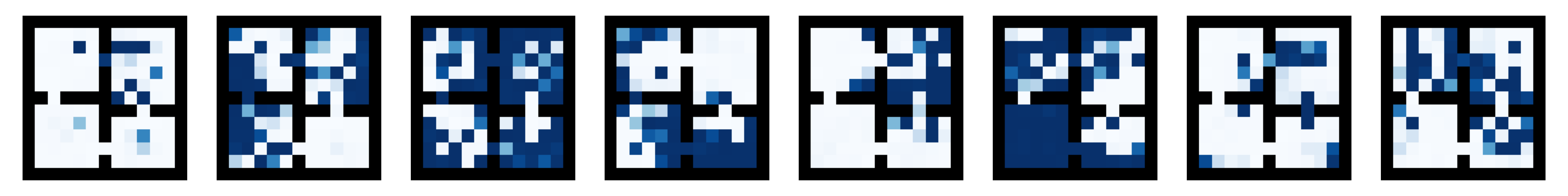}
    \caption{$i$}
\end{subfigure}
\begin{subfigure}{\textwidth}
    \centering
    \includegraphics[width=\textwidth]{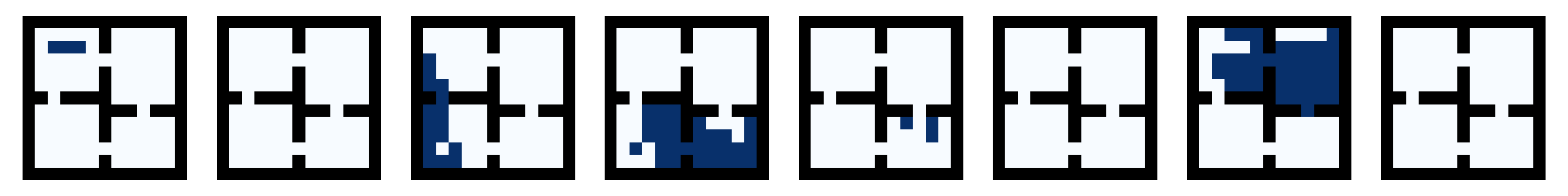}
    \caption{$\mu$ (task 1) }
\end{subfigure}
\begin{subfigure}{\textwidth}
    \centering
    \includegraphics[width=\textwidth]{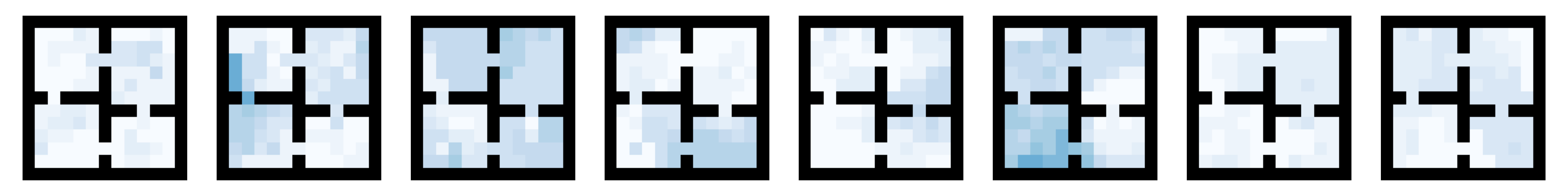}
    \caption{$\mu$ (all tasks) }
\end{subfigure}
\caption{The learned options using FPOC with 8 adjustable options ($c = 0.6$, best $\bar c$ and $\eta$).
}
\label{fig: The learned options using FPOC with 8 adjustable options best c=0.6}
\end{figure*}

\begin{figure*}[h]
\begin{subfigure}{\textwidth}
    \centering
    \includegraphics[width=\textwidth]{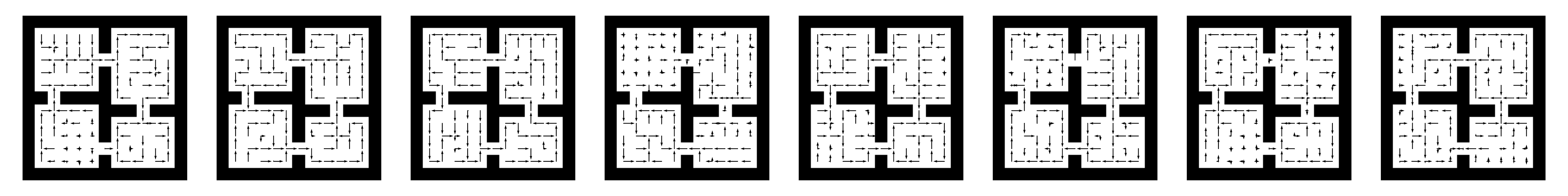}
    \caption{$\pi$}
\end{subfigure}
\begin{subfigure}{\textwidth}
    \centering
    \includegraphics[width=\textwidth]{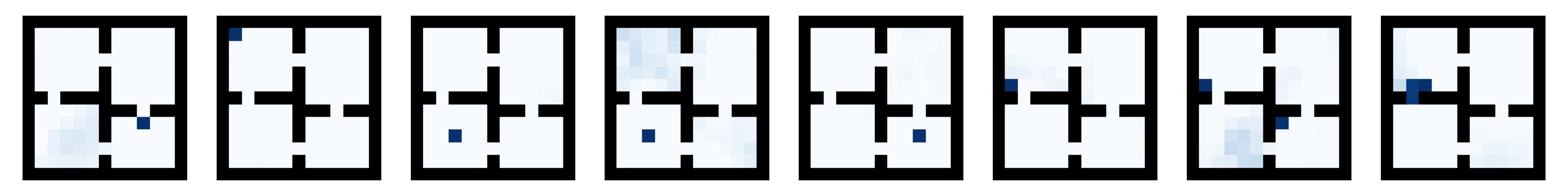}
    \caption{$\beta$}
\end{subfigure}
\begin{subfigure}{\textwidth}
    \centering
    \includegraphics[width=\textwidth]{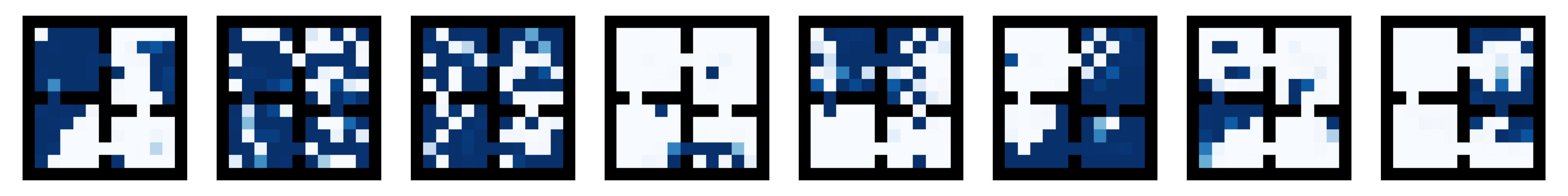}
    \caption{$i$}
\end{subfigure}
\begin{subfigure}{\textwidth}
    \centering
    \includegraphics[width=\textwidth]{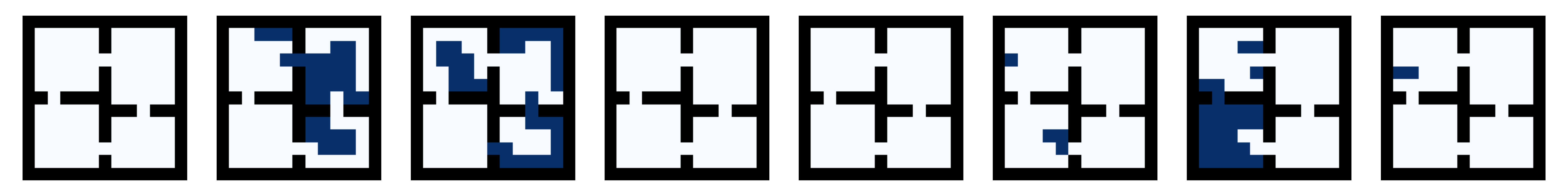}
    \caption{$\mu$ (task 1) }
\end{subfigure}
\begin{subfigure}{\textwidth}
    \centering
    \includegraphics[width=\textwidth]{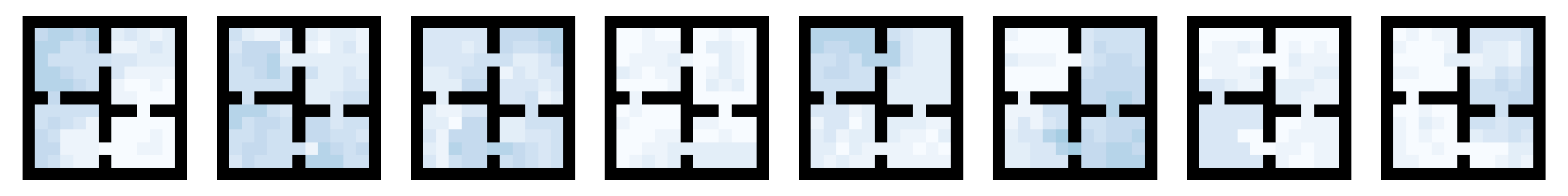}
    \caption{$\mu$ (all tasks) }
\end{subfigure}
\caption{The learned options using FPOC with 8 adjustable options ($c = 1$, best $\bar c$ and $\eta$).
}
\label{fig: The learned options using FPOC with 8 adjustable options best c=1}
\end{figure*}

\end{document}